\def\eqref#1{equation~\ref{#1}}
\def\1{\bm{1}}
\newcommand{\valid}{\mathcal{D_{\mathrm{valid}}}}
\def\vz{{\bm{z}}}
\DeclareMathAlphabet{\mathsfit}{\encodingdefault}{\sfdefault}{m}{sl}
\SetMathAlphabet{\mathsfit}{bold}{\encodingdefault}{\sfdefault}{bx}{n}
\def\sO{{\mathbb{O}}}
\newcommand{\pder}[2]{\frac{\partial #1}{\partial #2}}
\newcommand{\Ls}{\mathcal{L}}
\newcommand{\softmax}{\mathrm{softmax}}
\DeclareMathOperator*{\argmin}{arg\,min}
\newtheorem{theorem}{Theorem}[section]
\newtheorem{proof}{Proof}[section]
\ificcvfinal\pagestyle{empty}\fi
\begin{document}

\title{Single-DARTS: Towards Stable Architecture Search}

\author{Pengfei Hou\thanks{Equal Contribution}\\

{\tt\small houpengfei2020@126.com}
\and
Ying Jin$^*$\\
Tsinghua University\\
{\tt\small sherryying003@gmail.com}

\and
Yukang Chen\\
The Chinese University of Hong Kong\\
{\tt\small yukangchen@cse.cuhk.edu.hk}

}

\maketitle
\ificcvfinal\thispagestyle{empty}\fi

\begin{abstract}
Differentiable architecture search (DARTS) marks a milestone in Neural Architecture Search (NAS), boasting simplicity and small search costs. However, DARTS still suffers from frequent performance collapse, which happens when some operations, such as skip connections, zeroes and poolings, dominate the architecture. In this paper, we are the first to point out that the phenomenon is attributed to \textbf{bi-level} optimization. 
 
We propose \textbf{Single-DARTS} which merely uses single-level optimization, updating network weights and architecture parameters simultaneously with the same data batch. Even single-level optimization has been previously attempted, no literature provides a systematic explanation on this essential point. 
Replacing the bi-level optimization, Single-DARTS obviously alleviates performance collapse as well as enhances the stability of architecture search. 
Experiment results show that Single-DARTS achieves state-of-the-art performance on mainstream search spaces. For instance, on NAS-Benchmark-201, the searched architectures are nearly optimal ones. We also validate that the single-level optimization framework is much more stable than the bi-level one. We hope that this simple yet effective method will give some insights on differential architecture search. The code is available at https://github.com/PencilAndBike/Single-DARTS.git.
\end{abstract}

\section{Introduction}
Neural architecture search (NAS) has helped to find more excellent architectures than manual design. Generally, NAS is formulated as a bi-level optimization problem\cite{baker2016designing}:
\begin{equation}\label{equ:nas1}
\begin{split}
    \alpha^* &= \argmin_{\alpha \in \bm{A}} \mathcal{L}_{val}(\alpha, w_\alpha^*) \\
    s.t.\ w_\alpha^* &= \argmin_{w_{\alpha}} \mathcal{L}_{train}(\alpha, w)\\
\end{split}
\end{equation}

\begin{figure}[t]
  \centering
  \includegraphics[width=0.8\linewidth]{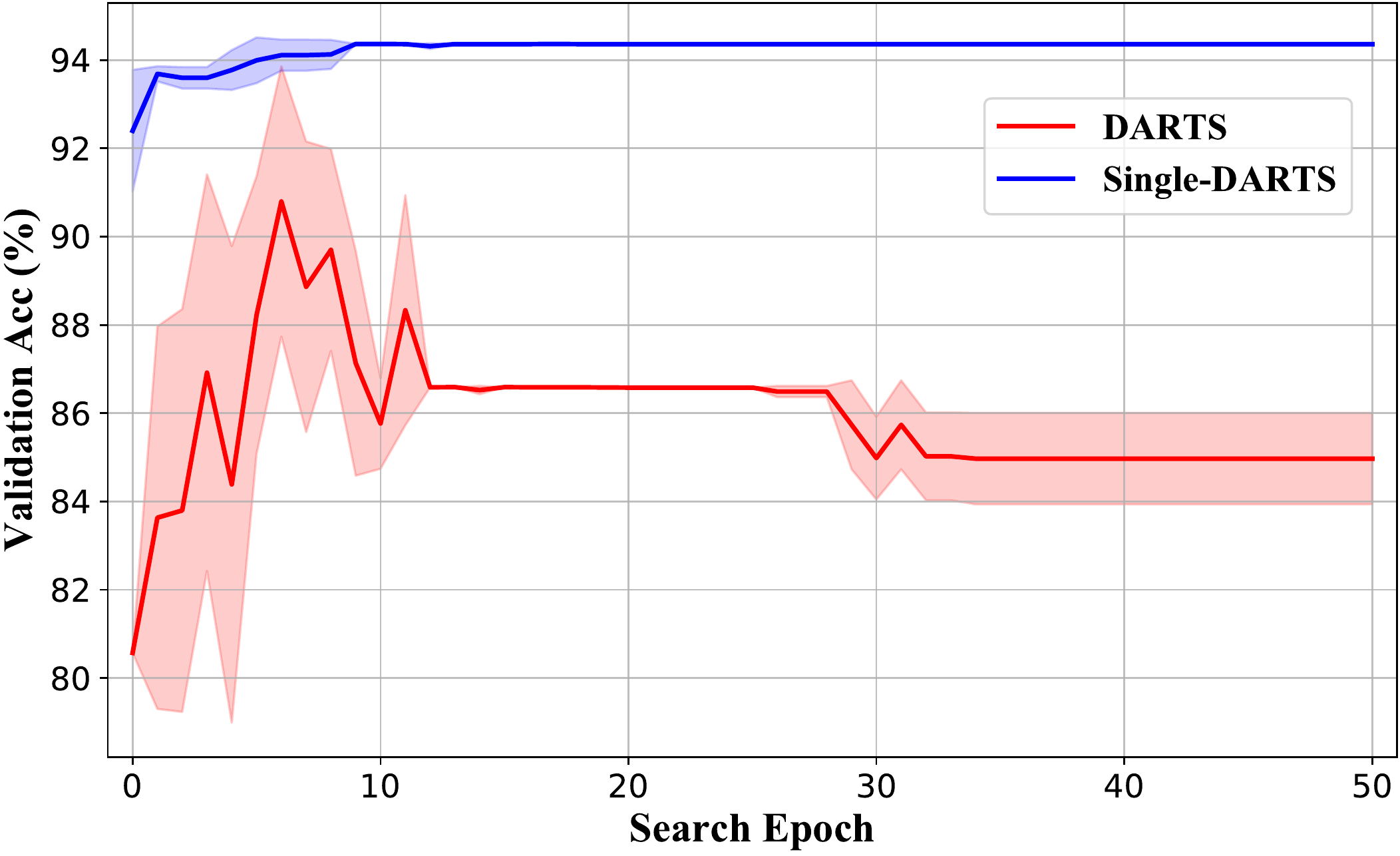}
  \caption{Search process comparison between the original DARTS and Single-DARTS.}
  \label{fig:intro1}
\end{figure}

\noindent
where $\alpha$ denotes architecture, $\bm{A}$ denotes architecture search space, $w_\alpha$ denotes the network weights with the architecture parameter $\alpha$, $\mathcal{L}_{train}$ and $\mathcal{L}_{val}$ denote the optimization loss on the training and validation dataset. Due to the inner optimization on $\mathcal{L}_{train} $ where each architecture should be well-trained respectively, it costs huge computational resources to search the optimal architecture. To avoid training each architecture from scratch, weight-sharing methods \cite{liu2018darts} are proposed, which constructs a super network where all architectures share the same weights. DARTS relaxes the search space to be continuous and approximates $w^*_\alpha$ by adapting $w$ with only a single training step, instead of solving the inner optimization \ref{equ:nas1} completely. The approximation scheme is:\\
	$\nabla_\alpha{ \mathcal{L}_{val}(\alpha, w^*_{\alpha})}
	\approx \nabla_\alpha  \mathcal{L}_{val}(\alpha, w-\xi\nabla_w{ \mathcal{L}_{train}(\alpha, w)})$.
\\
 It saves computations and finds competitive networks. 
 
However, many papers \cite{xu2019pc, liu2018progressive, zela2019understanding, li2019stacnas, chu2019fair, dong2020bench} have reported the performance collapse in  DARTS, where it easily converges to non-learnable operations, such as skip-connection, pooling, and zero, which devastates the accuracy of the searched model. As shown in Figure \ref{fig:intro1}, at the early stage of searching, DARTS performs very unsteadily, and after one point when it converges to non-learnable operation, its performance drops. They also propose some effective mechanisms or regularization to improve DARTS. 

In this paper, we dig deeper into this situation. First, we propose that the performance collapse in DARTS is \textbf{irreversible}. 
Moreover, we delve into the irreversible performance collapse in DARTS from the perspective of \textbf{optimization}, where little attention has been paid to in previous works. In the bi-level optimization in DARTS, network weights and architecture parameters are updated alternatively with different batches of data (train and validation data). We derive that in such framework, learnable operations act as adding noise on input and fitting input worse than non-learnable operations like skip-connection and pooling. As a result, the probabilities of non-learnable operations increase faster than learnable ones, which enables non-learnable operations to dominate the search phase.

To this end, we propose \textbf{Single-DARTS}, a simple yet effective single-level optimization framework, which updates network weights and architecture parameters simultaneously with the identical batch of data. Through gradient analysis, we present that with such an optimization mechanism, Single-DARTS can obviously alleviate the performance collapse. Experiment results prove that Single-DARTS shows strong performance both in accuracy and stability. In Figure \ref{fig:intro1}, Single-DARTS performs much more steadily than DARTS as well as reaches higher accuracy. On NAS-Benchmark-201 \cite{dong2020bench}, Single-DARTS searches nearly the optimal architecture with a variance of 0, showing excellent stability. 


In general, our contributions are as follows:
\begin{itemize}

\item We analyze the irreversible performance collapse in DARTS from the perspective of optimization. We provide theoretical insights by gradient analysis, by which we derive that it is the bi-level optimization framework in DARTS that causes severe and irreversible performance collapse. 

\item We propose Single-DARTS with a single-level optimization framework where the network weights and architecture parameters are updated simultaneously. 

\item Our simple yet effective method, Single-DARTS, reaches the state-of-the-art performance in mainstream search spaces. It also show strong stability.
\end{itemize}

\section{Related Works}
\textbf{Neural Architecture Search.} Neural architecture search (NAS) is an automatic method to design neural architecture instead of human design. Early NAS methods adopt reinforcement learning (RL) or evolutionary strategy \cite{zoph2016neural, baker2016designing, bello2017neural, real2017large, real2019regularized, zoph2018learning} to search among thousands of individually trained networks, which costs huge computation sources. Recent works focus on efficient weight-sharing methods, which falls into two categories: one-shot approaches \cite{brock2017smash, bender2018understanding, akimoto2019adaptive, cai2019once, guo2019single, stamoulis2019single, pham2018efficient} and gradient-based approaches \cite{shin2018differentiable, liu2018darts, chen2019progressive, cai2018proxylessnas, hundt2019sharpdarts, chu2019fair, xu2019pc, liang2019darts}, achieve state-of-the-art results on a series of tasks \cite{chen2019detnas, ghiasi2019fpn, liu2019auto, xu2020autosegnet, fu2020autogan, nascimento2020finding} in various search spaces. They construct a super network/graph which shares weights with all sub-network/graphs. The former commonly does heuristic search and evaluates sampled architectures in the super network to get the optimal architecture. The latter relaxes the search space to be continuous and introduces differential and learnable architecture parameters. In this paper, we mainly discuss gradient-based approaches.\\

\textbf{Differentiable Architecture Search.} Gradient-based approaches are commonly formulated as an approximation of the bi-level optimization which updates network weights and updates architecture parameters in the training and validation dataset alternatively\cite{liu2018darts}. It has searched competitive architectures, however, many papers pointed out DARTS-based methods don't work stably, suffering from severe performance collapse. NAS-Benchmark-201 \cite{dong2020bench} points out that DARTS performs badly in this search space.  Its performance drops quickly during the search procedure. Towards this serious problem, researchers try to give explanations and solve it. \cite{bender2018understanding} shows the relationship between DARTS searched architectures' performance and the domain eigenvalue of $\nabla_{\alpha}^2 \mathcal{L}_{valid}$. FairDARTS \cite{chu2019fair} shows that DARTS is easy to converge to skip-connect operation due to its unfair advantage in exclusive competition. It gives each operation an independent weight to replace the exclusive competition and introduces zero-one loss to decrease the unfair advantage. However, in our point bi-level optimization is the key reason for the collapse and single-level optimization could also get satisfied result under exclusive competition. Previous works\cite{liu2018progressive, xu2019pc, hong2020dropnas, li2019stacnas} show that DARTS favors non-parametric operations. DropNAS \cite{hong2020dropnas} observes the co-adaption problem and Matthew effect that light operations are trained maturely earlier. Different from these works, we analyze DARTS from the perspective of optimization.

\textbf{Single-level Optimization.} Although the original paper \cite{liu2018darts} shows that single-level optimization performs worse than bi-level optimization and they indicate that single-level would cause overfitting, but recent works show the opposite results \cite{li2019stacnas, bi2020gold, hong2020dropnas}. They adopt single/one-level optimization to replace bi-level optimization in their methods. Combining single-level optimization with their proposed methods, they can find more accurate network architectures. 
StacNAS\cite{li2019stacnas} advocates that the over-fitting phenomenon is caused by the network's depth gap between the search phase and the evaluation phase. 
GoldNAS\cite{bi2020gold} indicates that super-network parameters are trained much more effectively than architecture parameters and adds data augmentation in the search phase. DropNAS
\cite{hong2020dropnas} combines operations dropout with single-level optimization to solve the co-adaptation problem it proposes. 
However, on one side, though based on single-level optimization, these works focus on other techniques to get satisfactory results. Our method just uses single-level optimization alone to search steadily high-performance architectures. On the other side, these works prove that single-level optimization outperforms bi-level optimization empirically, without detailed explanations or proof. On the contrary, in this paper, we give theoretical insights about bi-level optimization from the perspective of optimization, on the basis of which we propose a single-level method, Single-DARTS. 

\section{Irreversible Collapse in DARTS}
In this section, we first review the Differentiable Architecture Search (DARTS) framework. Then we propose that the performance collapse in DARTS is irreversible. 

\paragraph{Preliminary of DARTS}
Differentiable Architecture Search (DARTS) is a milestone in neural architecture search. Its search space includes stacks of several cells, where each cell is a directed acyclic graph. Each cell consists of sequential nodes where node $i$ represents the latent feature map $x^i$. The edge from node $i$ to node $j$ represents a connection, and each connection is one operation from candidate operations $\sO$: convolution, pooling, zero, skip-connect, etc. Let $o^{i, j}$ denote the operation from node $i$ to $j$. The output of each intermediate node is computed based on all of its predecessors: $x_{j} = \sum_{i<j}o^{i,j}(x_i)$
In DARTS, it transforms the problem of searching the best architecture to searching the best operation on each connection. 
DARTS makes the search space continuous by relaxing the categorical choice of a particular operation to a weighted sum over all possible operations. Let $\alpha^{i, j}_k$ denote architecture parameter with operation $o_k$ between node $i$ and node $j$, then the output of this connection is 
\begin{equation}
\label{equ:darts_connection}
    \bar{o}_{i,j}(x_i) = \sum_{k} \frac{exp(\alpha^{i,j}_k)}{\sum_{k'}exp(\alpha^{i,j}_{k'})} o_k(x_i)
\end{equation}
where it adopts softmax function to have weights over different operations. The output of one intermediate node is computed based on all of its predecessors: $x_j = \sum_{i<j}\bar{o}_{i,j}(x_i)$

Therefore, DARTS needs to optimize the network weights $w$ and the architecture parameters $\alpha$. DARTS resolves this problem by a bi-level optimization framework, which updates network weights on the train set and architecture parameters on the validation set: 
$\nabla_\alpha{ \mathcal{L}_{val}(\alpha, w^*_{\alpha})}
\approx \nabla_\alpha  \mathcal{L}_{val}(\alpha, w-\xi\nabla_w{ \mathcal{L}_{train}(\alpha, w)})$.

\paragraph{Irreversible Performance Collapse}

Previous works have unveiled that it is easy for DARTS to converge to non-learnable operations like skip-connect, zero, pooling, etc. As a result, the performance collapse will happen. In this paper, we delve deeper into this situation.
As shown in Appendix Figure A-1, we train DARTS in the two spaces and plot the probability curves of one edge. At one point, the probability of non-learnable operation starts to surpass the learnable one. However, for this edge, the ideal choices are actually learnable operations, and non-learnable operations will bring about disastrous influence to model accuracy. Thus, the divergence between them becomes increasingly larger, enabling non-learnable operations to dominate the architecture search. Such divergence also indicates that the performance collapse is irreversible. 


\section{Methodology}
In this section, we firstly understand DARTS from the perspective of information gain. Then we theoretically analyze the bi-level optimization and explain the collapse in DARTS. At last, we present our solution, Single-DARTS. 

\subsection{Understanding DARTS}
According to Eq.\ref{equ:darts_connection}, we consider a general formulation for the connection setting in DARTS:
\begin{align*}
f(\sum_i p_i x_i), where \sum p_i = 1, 0 \leq p_i \leq 1
\end{align*}
where ${x_i}$ is a set of input vectors and its corresponding loss function about one-hot vector $y$ is:
\begin{align}
\label{equ:general_loss}    
\Ls = -y^T\ln (\softmax(f(\sum_i p_i x_i)))
\end{align}
We denote 
$\bar{x}$ as $\sum p_i x_i$.
If model fits $y$ with $x_i$ better than $x_j$, it means the cross entropy between $y$ and $f(x_i;\theta)$ is smaller than $f(x_j;\theta)$. For loss function \ref{equ:general_loss}, $p_i$ should be increased faster than $p_j$ for $x_i$ gets more information gain than $x_j$. Under gradient descend methods, the gradient of $p_i$ is smaller than $p_j$. Heuristically, \textit{$\pder{\Ls}{\bar{x}}^T (x_i)$ reflect the advantage of input $x_i$}. We give the formal description as follows:
for one hot random vector $y$, function $f(x;\theta)$ and loss $\Ls = -y^T\ln(f(\sum p_k x_k;\theta)$, if  $-y^T\ln(f(x_i;\theta)) \leq -y^T\ln(f(x_j;\theta))$, then we have $\pder{\Ls}{p_i} \leq \pder{\Ls}{p_j}$.
Limited to mathematical tools, we consider a simplified situation where $f(x;\theta) = x$ to give some insight and we conduct empirical experiments to verify the following theorems in deep neural networks. 
\begin{theorem}
For input vectors \{$x_k$\}, $\Ls=-y^T \ln (\softmax(\sum_k p_k x_k))$ where y is one-hot vector, $\sum_k p_k=1$ and $0 \leq p_k \leq 1$, $\bar{x}$ as $\sum_k p_k x_k$, if KL-Divergence$(p(\bar{x})||p(x_k))\approx 0$ $\forall$ $x_k$ and
\begin{equation}
-y^T\ln (\softmax(x_i)) \leq -y^T \ln (\softmax(x_j))
\end{equation}
then
\begin{equation}
\pder{\Ls}{p_i} \lessapprox \pder{\Ls}{p_j}
\end{equation}
\label{theo:understand_darts}
\end{theorem}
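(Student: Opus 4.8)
The plan is to reduce everything to the gradient of the softmax--cross-entropy loss with respect to the mixing weights $p_k$, and then read off the two hypotheses through a clean decomposition into Kullback--Leibler terms. First I would apply the chain rule to $\Ls = -y^T\ln(\softmax(\bar{x}))$ with $\bar{x}=\sum_k p_k x_k$. Since $\pder{\bar{x}}{p_i}=x_i$ and the standard logit gradient of the cross-entropy is $\pder{\Ls}{\bar{x}}=\softmax(\bar{x})-y$, the chain rule gives
\begin{equation*}
\pder{\Ls}{p_i}=(\softmax(\bar{x})-y)^T x_i ,
\end{equation*}
so that the quantity to control is
\begin{equation*}
\pder{\Ls}{p_i}-\pder{\Ls}{p_j}=(\softmax(\bar{x})-y)^T(x_i-x_j).
\end{equation*}
This already matches the heuristic in the text that $\pder{\Ls}{\bar{x}}^T x_i$ measures the advantage of input $x_i$.

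The key step is to rewrite each logit vector in terms of its own log-probabilities. Writing $(x_k)_m=\ln\softmax(x_k)_m + Z_k$, where $Z_k$ is the log-partition constant, and using that $\softmax(\bar{x})$ and $y$ are both probability vectors so that $\sum_m(\softmax(\bar{x})_m-y_m)=0$, the constants $Z_i,Z_j$ cancel. I am then left with
\begin{equation*}
\pder{\Ls}{p_i}-\pder{\Ls}{p_j}=\sum_m(\softmax(\bar{x})_m-y_m)\bigl(\ln\softmax(x_i)_m-\ln\softmax(x_j)_m\bigr).
\end{equation*}
Splitting the sum into its $\softmax(\bar{x})$ part and its $y$ part, the first part is exactly $\KL(\softmax(\bar{x})\,\|\,\softmax(x_j))-\KL(\softmax(\bar{x})\,\|\,\softmax(x_i))$, while the second part equals $\ln\softmax(x_i)_c-\ln\softmax(x_j)_c$, where $c$ is the nonzero coordinate of $y$.

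Finally I would close the argument with the two hypotheses. The KL-divergence assumption forces both $\KL(\softmax(\bar{x})\,\|\,\softmax(x_i))\approx 0$ and $\KL(\softmax(\bar{x})\,\|\,\softmax(x_j))\approx 0$, so their difference is negligible; and the ordering hypothesis $-y^T\ln(\softmax(x_i))\le -y^T\ln(\softmax(x_j))$ is precisely $\ln\softmax(x_i)_c\ge\ln\softmax(x_j)_c$, which makes the $y$ part nonnegative. Subtracting a nonnegative quantity from something $\approx 0$ yields $\pder{\Ls}{p_i}-\pder{\Ls}{p_j}\lessapprox 0$, which is the assertion.

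I expect the main obstacle to be the middle step, namely controlling the mixed term $\sum_m\softmax(\bar{x})_m(\ln\softmax(x_i)_m-\ln\softmax(x_j)_m)$. Recognizing it as a difference of two KL divergences is what lets the hypothesis $\KL(p(\bar{x})\|p(x_k))\approx 0$ bite exactly, and it also explains why the conclusion is only an approximate inequality ($\lessapprox$) rather than an exact one: the discarded term is precisely of the order of these KL divergences. I would also flag that the simplification $f(x;\theta)=x$ is what makes the logit gradient collapse to the clean form $\softmax(\bar{x})-y$; the general parametric case is claimed only heuristically and is to be corroborated by the empirical experiments.
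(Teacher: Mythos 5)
Your proof is correct, and it reaches the paper's conclusion by a genuinely different key step. Both you and the paper start from the same gradient $\pder{\Ls}{p_i}=(\softmax(\bar{x})-y)^T x_i$ and study the difference $\pder{\Ls}{p_i}-\pder{\Ls}{p_j}$. The paper then works in logit coordinates: it rewrites the hypothesis as $x_i^0-x_j^0\geq\ln\bigl(\sum_l \exp x_i^l/\sum_l \exp x_j^l\bigr)$, applies Jensen's inequality with weights $q_j=\softmax(x_j)$ to lower-bound that log-sum-exp ratio by $\sum_l q_j^l(x_i^l-x_j^l)$, and finally invokes the KL hypothesis to swap $q_j$ for $t=\softmax(\bar{x})$. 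You instead prove the exact identity $\pder{\Ls}{p_i}-\pder{\Ls}{p_j}=\bigl[\KL(t\,\|\,q_j)-\KL(t\,\|\,q_i)\bigr]-\bigl[\ln q_i^{c}-\ln q_j^{c}\bigr]$, where $c$ is the coordinate at which $y$ equals one (the log-partition constants cancel because $t-y$ sums to zero); the ordering hypothesis then makes the second bracket nonnegative exactly, and the KL hypothesis kills the first bracket. What your route buys: the approximation error is identified exactly as a difference of two KL divergences, each of which the hypothesis bounds directly (each lies in $[0,\epsilon]$, hence their difference in $[-\epsilon,\epsilon]$), and Jensen's inequality is not needed at all. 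The paper's swap step, by contrast, controls the term $\sum_l(t^l-q_j^l)(x_i^l-x_j^l)$ only implicitly --- small KL gives small total variation via Pinsker, but one also needs the logit differences to stay bounded --- so your decomposition is both tighter and more transparent about where the $\lessapprox$ comes from. Incidentally, the paper's Jensen step is itself equivalent to the statement $\KL(q_j\,\|\,q_i)\geq 0$, so the two arguments are close cousins; yours replaces that inequality plus an uncontrolled approximation by a single exact identity.
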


\begin{proof}
We have $\pder{\Ls}{p_i} = (\softmax(x_i) - y)^T x_i$, let $t$ denote $\softmax(\bar{x})$. Suppose $y^0 = 1$ and $y^l = 0$ where $l > 0$, we have 
\begin{gather*}
    -y^T \ln \softmax(x_i) = -\ln \frac{\exp{x_i^0}}{\sum_l \exp{x_i^l}}
\end{gather*}
We have
\begin{gather*}
    -\ln \exp{x_i^0} + \ln \sum_l \exp{x_i^l} \leq -\ln \exp{x_j^0} + \ln \sum_l \exp{x_j^l} \\
    x_i^0 - x_j^0 \geq \ln \frac{\sum_l \exp{x_i^l}}{\sum_l \exp{x_j^l}} 
\end{gather*}
And 
\begin{gather*}
    \pder{\Ls}{p_i} = (\softmax(\bar{x})-y)^Tx_i 
    = \sum_l t^l x_i^l - x_i^0 \\
    \pder{\Ls}{p_i} - \pder{\Ls}{p_j} = (\sum_l t^l x_i^l -x_i^0 ) - (\sum_l t^l x_j^l - x_j^0) \\
    = \sum_l t^l (x_i^l - x_j^l) - (x_i^0 - x_j^0) 
\end{gather*}
To prove $\pder{\Ls}{p_i} \leq \pder{\Ls}{p_j}$, we just need to prove $\sum_l t^l(x_i^l-x_j^l) \leq \ln \frac{\sum_l \exp{x_i^l}}{\sum_l \exp{x_j^l}}$, then we have $\pder{\Ls}{p_i} - \pder{\Ls}{p_j} \leq \ln \frac{\sum_l \exp{x_i^l}}{\sum_l \exp{x_j^l}} - (x_i^0 - x_j^0) \leq 0$. \\
Let $q_j$ denote $\softmax(x_j)$,
according to Jensen's inequality that for convex function $h(x)$, $h(\sum_k p_k x_k) \leq \sum h_k f(x_k)$ and $-\ln(x)$ is a convex function, we have
\begin{equation*}
\begin{split}
    \ln \frac{\sum_l \exp{x_i^l}}{\sum_l \exp{x_j^l}} &= \ln \sum_l \frac{\exp{x_j^l}}{\sum_{l'}\exp{x_j^{l'}}}\exp(x_i^l-x_j^l) \\
    &\geq \sum_l q_j^l\ln(\exp(x_i^l-x_j^l)) \\
    &= \sum_l q_j^l(x_i^l - x_j^l) 
\end{split}
\end{equation*}
Since KL-Divergence$(p(\bar{x})||p(x_k))\approx 0$, thus $\softmax(\bar{x}) \approx \softmax(x_k)$, $\sum_l q_j^l(x_i^l - x_j^l) \approx \sum_l t^l (x_i^l - x_j^l)$. So $\sum_l t^l(x_i^l-x_j^l) \leq \ln \frac{\sum_l \exp{x_i^l}}{\sum_l \exp{x_j^l}}$. 
\quad
\end{proof}

Note that in Theorem \ref{theo:understand_darts} we assume KL-Divergence$(p(\bar{x})||p_k(x)) \approx 0$. In fact, with the help of Batch Normalization (BN) in neural networks, the distribution of intermediate batch normalized feature maps can be approximated as normal Gaussian distribution. Moreover, $\bar{x}$ is the weighted sum of $x_i$ so it can be approximated as normal Gaussian distribution as well. Therefore, our assumption is valid. 

\subsection{Drawback in Bi-level Framework}
Consider a more concrete formulation that ${x_i}$ are transformed features from the same input $x$, $x_i = o_i(x)$. Specifically, $o_i(x)$ could be linearly expanded as $o_i(x) = W_i x$. 
\begin{equation}\label{form:grad_pi}
\pder{\Ls}{p_i} = \mathbb{E}[\pder{\Ls}{\bar{x}}^T (W_i x); X, y]
\end{equation}
\begin{equation}
\label{form:grad_wi}
\pder{\Ls}{W_i} = \mathbb{E}[p_i \pder{\Ls}{\bar{x}} x^T; X, y]
\end{equation}
To be emphasized, Eq.\ref{form:grad_wi} only exist for learnable operations for non-learnbale operations \textbf{do not} update their weights. According to last section, if $W_i x$ bring more information gain than $W_j x$, then $\pder{\Ls}{p_i}$ should be smaller than $\pder{\Ls}{p_j}$. 
For DARTS, $p_i$ is updated on the validation dataset. On the iteration $t$ and on the validation dataset $\valid = \{X_{val}, y_{val}\}$:
\begin{equation*}
\pder{\Ls}{p_i^t} = \frac{1}{N}\sum_{j=1}^N \pder{\Ls(x^j_{val})}{\bar{x}}^T (W_i^t x_{val}^j)
\end{equation*}
$W_i^t$ is updated on the training dataset $\mathcal{D_{\mathrm{train}}} = \{X_{train}, y_{train}\}$:
\begin{equation*}
\begin{split}
\pder{\Ls}{W_i^t} &= \frac{p_i}{M}\sum_{k=1}^M \pder{\Ls(x^k_{train})}{\bar{x}} {x_{train}^k}^T \\
W_i^t &= W_i^{t-1} - \eta\pder{\Ls}{W_i^{t-1}} \\
	 &= W_i^{t-1} - \eta\frac{p_i^{t-1}}{M}\sum_{k=1}^N \pder{\Ls(x^k_{train})}{\bar{x}} {x_{train}^k}^T \\
\end{split}
\end{equation*}
As a result, the gradient of $p_i$ is:
\begin{equation*}
\begin{split}
&\pder{\Ls}{p_i^t} 
= \frac{1}{N}\sum_{j=1}^N \pder{\Ls(x^j_{val})}{\bar{x}}^T (W_i^{t-1} x_{val}^j) \\
&-  \frac{\eta p_i^{t-1}}{NM}\sum_{j=1}^N \sum_{k=1}^M (\pder{\Ls(x^j_{val})}{\bar{x}}^T \pder{\Ls(x^k_{train})}{\bar{x}}) ({x_{train}^k}^T x_{val}^j)\\
\end{split}
\label{gradmult}
\end{equation*}
For bi-level optimization $W$ and $\alpha$ are computed on the different batches, since samples are different,
$\pder{\Ls(x^j_{val})}{\bar{x}}$ is independent of $\pder{\Ls(x^k_{train})}{\bar{x}}$, and ${x_{train}^k}$ is independent of $x_{val}^j$. Thus 
\begin{equation}
\label{equ:corr}
\sum_{j=1}^N \sum_{k=1}^M (\pder{\Ls(x^j_{val})}{\bar{x}}^T \pder{\Ls(x^k_{train})}{\bar{x}}) ({x_{train}^k}^T x_{val}^j) \approx 0
\end{equation}
Furthermore, during the early stage of architecture search
\begin{align}
\label{nearly}
\pder{\Ls}{p_i^t} &\approx \frac{1}{N}\sum_{j=1}^N \pder{\Ls(x^j_{val})}{\bar{x}}^T (W_i^{t-1} x_{val}^j) \\
                 &= \frac{1}{N} \sum_{j=1}^N \pder{\Ls(x^j_{val})}{\bar{x}}^T (W_i^{t-2} x_{val}^j) \\
                 &-  \frac{\eta p_i^{t-2}}{NM}\sum_{j=1}^N \sum_{k=1}^M (\pder{\Ls(x^j_{val})}{\bar{x}}^T \pder{\Ls(x^k_{train})}{\bar{x}}) ({x_{train}^k}^T x_{val}^j) \\
                  &\approx\frac{1}{N}\sum_{j=1}^N \pder{\Ls(x^j_{val})}{\bar{x}}^T (W_i^{t-2} x_{val}^j) \\
                &\approx .... \approx\frac{1}{N}\sum_{j=1}^N \pder{\Ls(x^j_{val})}{\bar{x}}^T (W_i^0 x_{val}^j) \\
\end{align}
Let $i$ denote learnable operaions' indicator while $j$ as the non-learnable's. Learnable operations (convolution) are initialized randomly thus they are actually adding noise on input $x$. So during the early stage, they bring less information gain than non-learnable operations. \ref{nearly} does not establish when iterations are adequate, however, due to softmax as activation function, the gap between $\pder{\Ls}{p_j^t}$ and $\pder{\Ls}{p_i^t}$ could be expanded during early stage. 


\paragraph{Effect of activation function} For $\alpha_i$ and $\alpha_j$, if $\alpha_j \geq \alpha_i$, the gradient of $\alpha_{i}$ is more likely to be smaller than $\alpha_{j}$ under softmax activation and gradient descent. It means the gap between $\alpha_i$ and $\alpha_j$ would be expanded. Formally, we have
\begin{theorem}
\label{theo:softmax_grad}
For function $\Ls=g(\sum_i \frac{exp(\alpha_i)}{\sum_j exp(\alpha_j)} x_i)$, let $p_i = \frac{exp(\alpha_i)}{\sum_j exp(\alpha_j)}$, $\bar{x}=\sum_i p_i x_i$, if $\pder{\Ls}{\bar{x}}^Tx_j < \pder{\Ls}{\bar{x}}^Tx_i$ and $\alpha_{j} \geq \alpha_i$, then $\pder{\Ls}{\alpha^j} \leq \pder{\Ls}{\alpha_i}$
\end{theorem}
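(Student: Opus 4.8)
The plan is to reduce everything to a single closed-form expression for $\pder{\Ls}{\alpha_i}$ and then compare the two indices directly. First I would push the derivative through the composition by the chain rule: since $\Ls = g(\bar{x})$ with $\bar{x} = \sum_k p_k x_k$ and each $p_k$ a softmax of the $\alpha$'s, I have $\pder{\Ls}{\alpha_i} = \pder{\Ls}{\bar{x}}^T \pder{\bar{x}}{\alpha_i}$. The only nontrivial piece is the softmax Jacobian $\pder{p_k}{\alpha_i} = p_k(\delta_{ki} - p_i)$, which telescopes:
\[
\pder{\bar{x}}{\alpha_i} = \sum_k p_k(\delta_{ki}-p_i)x_k = p_i\left(x_i - \bar{x}\right).
\]
Writing $g_k := \pder{\Ls}{\bar{x}}^T x_k$ and $\bar{g} := \pder{\Ls}{\bar{x}}^T \bar{x} = \sum_k p_k g_k$, this gives the compact form $\pder{\Ls}{\alpha_i} = p_i(g_i - \bar{g})$. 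The first hypothesis reads exactly $g_j < g_i$, and since $\exp$ is increasing with a softmax denominator common to all coordinates, the second hypothesis $\alpha_j \ge \alpha_i$ gives $p_j \ge p_i \ge 0$.

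Next I would form the difference and split it so that each hypothesis controls one summand. A short rearrangement yields the identity
\[
\pder{\Ls}{\alpha_j} - \pder{\Ls}{\alpha_i} = p_i\,(g_j - g_i) + (p_j - p_i)\,(g_j - \bar{g}).
\]
The first summand is $\le 0$ immediately, because $p_i \ge 0$ and $g_j - g_i < 0$ by hypothesis. So the whole argument comes down to the second summand $(p_j - p_i)(g_j - \bar{g})$, in which $p_j - p_i \ge 0$ is already secured.

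The hard part will be the sign of $g_j - \bar{g}$, since $\bar{g}$ is the $p$-weighted average of \emph{all} the $g_k$, not merely of $g_i$ and $g_j$. In the two-operation regime that actually drives the application --- one learnable, one non-learnable edge --- we have $\bar{g} = p_i g_i + p_j g_j$ with $p_i + p_j = 1$, so $\bar{g} - g_j = p_i(g_i - g_j) \ge 0$; hence $g_j \le \bar{g}$, the second summand is $\le 0$, and combined with the first term the proof closes. For the general multi-operation statement the same conclusion follows under the mild (and here expected) condition $g_j \le \bar{g}$, i.e. that the favored operation is not below average; I would either invoke this explicitly or restrict the comparison to the dominant pair, because without it the second term can in principle be positive when a third operation carries a large weight together with a strongly negative $g$. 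I therefore expect the crux of the writeup to be justifying $g_j \le \bar{g}$ (equivalently, the reduction to the pairwise setting), rather than any of the differentiation, which is routine.
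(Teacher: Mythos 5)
Your derivation is essentially the paper's own: the appendix proof likewise computes the softmax Jacobian to get $\pder{\Ls}{\alpha_i}=p_i\sum_k p_k\pder{\Ls}{\bar{x}}^T(x_i-x_k)=p_i(g_i-\bar{g})$ (in your notation $g_k=\pder{\Ls}{\bar{x}}^Tx_k$, $\bar{g}=\sum_k p_kg_k$), and then splits the gradient difference into a pairwise term plus a term weighted by $p_i-p_j$. The only real difference is which index anchors the $\bar{g}$-term: the paper writes $\pder{\Ls}{\alpha_i}-\pder{\Ls}{\alpha_j}=(p_i-p_j)(g_i-\bar{g})+p_j(g_i-g_j)$, whose first summand has the right sign only if $g_i\le\bar{g}$, whereas your split needs only the weaker condition $g_j\le\bar{g}$ (weaker because $g_j<g_i$).

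More importantly, the crux you flag is a genuine gap, and it sits in the paper's proof as well: the appendix never addresses the sign of $(p_i-p_j)(g_i-\bar{g})$ at all; it simply proceeds to the setting of Theorem 4.3, where the favored index is $i^*=\argmin_k g_k$ and the anchor term $g_{i^*}-\bar{g}\le 0$ holds automatically. For a general pair among $n\ge 3$ operations the statement as written is in fact false. Concretely, take $p_i=0.1$, $p_j=0.2$, $p_m=0.7$ with $g_i=1$, $g_j=0$, $g_m=-100$; then $\bar{g}=-69.9$, both hypotheses hold ($g_j<g_i$, $\alpha_j\ge\alpha_i$), yet $\pder{\Ls}{\alpha_j}=0.2\times 69.9=13.98$ exceeds $\pder{\Ls}{\alpha_i}=0.1\times 70.9=7.09$, contradicting the conclusion. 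So your hedging --- restricting to the dominant pair, or adding the hypothesis $g_j\le\bar{g}$ (which holds automatically when $j$ is the argmin, the only case the paper actually uses downstream in Theorem 4.3) --- is not a deficiency of your write-up but a correction the theorem itself requires; with that extra hypothesis your argument is complete, and slightly tighter than the paper's since your anchor condition is implied by theirs.
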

The proof is included in Appendix. On the early stage, $\pder{\Ls}{\bar{x}}^T x_j \leq \pder{\Ls}{\bar{x}^T}x_i$ for learnable operation $i$ and non-learnable operation $j$. 
Theorem \ref{theo:softmax_grad} indicates that in DARTS, on the early training stage,  once one non-learnable operation is superior to others, the probability of this operation will become increasingly larger, and after some iterations, it will be nearly 1, which leads to the irreversible performance collapse. Moreover, we could get the convergence speed as follows.
\begin{theorem}\label{lemma1_softmax} 
Given $n$ operations, learning rate $\eta$, $i^*=\argmin_i \pder{\Ls}{\bar{x}}^Tx_i$, we define the margin between operations as $\delta=\min_{i\neq i^*}\pder{\Ls}{\bar{x}}^T(x_{i} - x_{i^*}) \geq 0$, if $\alpha_{i^*} \geq \alpha_i$, then $\forall$ $\epsilon > 0$, it achieves $p_{i^*} > 1 - \epsilon$ under gradient descent in iterations 
\begin{equation}
 t \leq \frac{n \ln((1-\epsilon)n )}{\eta \delta}
\end{equation}
\end{theorem}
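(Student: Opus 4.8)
The plan is to follow a single scalar potential, $\phi_t := -\ln p_{i^*}^t$, and show it falls by a fixed amount per iteration until the target is met. The budget is fixed at the outset: the hypothesis $\alpha_{i^*}\geq\alpha_i$ makes $i^*$ the largest logit, so $p_{i^*}^0\geq 1/n$ and $\phi_0\leq\ln n$; the goal $p_{i^*}>1-\epsilon$ is exactly $\phi<-\ln(1-\epsilon)$. Hence the potential that must be consumed is at most $\ln n+\ln(1-\epsilon)=\ln((1-\epsilon)n)$, which is already the numerator of the claimed bound, and it remains only to show each gradient step lowers $\phi$ by at least $\eta\delta/n$.

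First I would reuse the gradient identity behind Theorem \ref{theo:softmax_grad}. Writing $g_k:=\pder{\Ls}{\bar{x}}^T x_k$ and $\bar g:=\sum_k p_k g_k$, the softmax Jacobian yields $\pder{\Ls}{\alpha_k}=p_k(g_k-\bar g)$, so a descent step changes $\alpha_k$ by $-\eta p_k(g_k-\bar g)$. Because $i^*=\argmin_k g_k$ gives $g_{i^*}\leq\bar g$, the step always raises $\alpha_{i^*}$; moreover $p_k(\bar g-g_k)\leq p_{i^*}(\bar g-g_{i^*})$ for every $k$ (the winner has both the largest probability and the largest gap to the mean), so $\alpha_{i^*}$ receives the largest increment of all logits. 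This keeps the ordering $\alpha_{i^*}\geq\alpha_k$, hence $p_{i^*}\geq 1/n$, invariant along the trajectory, which I need for the budget above.

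Next I would turn the margin into a rate. Using $g_k-g_{i^*}\geq\delta$ for $k\neq i^*$,
\[
\bar g-g_{i^*}=\sum_{k\neq i^*}p_k(g_k-g_{i^*})\geq\delta(1-p_{i^*}),
\]
so the winner's logit rises by at least $\eta\delta\,p_{i^*}(1-p_{i^*})$ each step. To pass from $\alpha_{i^*}$ to $p_{i^*}$, decompose $\phi=\ln\sum_k e^{\alpha_k}-\alpha_{i^*}$: the drop in $\phi$ equals the increment of $\alpha_{i^*}$ minus the increment of the log-normalizer, and the latter is, to first order, $\eta\sum_k p_k^2(\bar g-g_k)\leq\eta\,p_{i^*}(\bar g-g_{i^*})$ by the domination just noted. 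Evaluating the resulting lower bound at the early-stage near-uniform configuration $p_k\approx 1/n$, where $\sum_k p_k^2(\bar g-g_k)\approx 0$ and $p_{i^*}(1-p_{i^*})\approx 1/n$, gives a per-step decrease of $\phi$ of at least $\eta\delta/n$. Dividing the budget $\ln((1-\epsilon)n)$ by this rate produces $t\leq n\ln((1-\epsilon)n)/(\eta\delta)$.

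The main obstacle is exactly this last estimate: the increment $\eta\delta\,p_{i^*}(1-p_{i^*})$ is not uniform in $p_{i^*}$, attaining the value $\approx\eta\delta/n$ near the uniform start but shrinking as $p_{i^*}\to 1$, so the clean rate $\eta\delta/n$ is really the early-stage value rather than a global worst case. Making the bound fully rigorous would require either restricting to the regime $p_{i^*}\leq 1-\epsilon$ (where $1-p_{i^*}\geq\epsilon$ keeps the increment from vanishing) and absorbing the resulting $\epsilon$-dependence, or invoking the paper's standing early-stage approximation that freezes $g_k$ and $\delta$ and keeps the configuration near-uniform. A secondary, more routine difficulty is the discretization: treating the log-sum-exp increment only to first order is valid for small $\eta$, with the $O(\eta^2)$ curvature term controlled by the smoothness of log-sum-exp.
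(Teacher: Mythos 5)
Your proposal does not close, and it fails at exactly the step you flag yourself: the per-step decrease of $\phi=-\ln p_{i^*}$ by $\eta\delta/n$ is only a near-uniform (early-stage) estimate, since your own rate bound $\eta\delta\,p_{i^*}(1-p_{i^*})$ collapses as $p_{i^*}\to 1$, so no uniform-rate argument of this kind can yield an iteration count that stays bounded as $\epsilon\to 0$. That is a genuine gap. But you should know that the paper's own proof does not overcome this obstacle either --- it conceals it. The paper tracks the pairwise logit gap $\alpha_{i^*}-\alpha_i$ rather than your potential, writes (in your notation, $g_k=\pder{\Ls}{\bar{x}}^T x_k$, with its sign convention)
\[
\pder{\Ls}{\alpha_{i^*}}-\pder{\Ls}{\alpha_i}
=(p_{i^*}-p_i)\sum_j p_j\,(g_j-g_{i^*})+p_i\,(g_i-g_{i^*})
\;\geq\;(p_{i^*}-p_i)\,\delta+p_i\,\delta=p_{i^*}\delta\geq\frac{\delta}{n},
\]
accumulates this to $\alpha_{i^*}^t-\alpha_i^t\geq \eta t\delta/n$, and converts the gap to a probability at the end. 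The flagged inequality is invalid: the weighted average margin $\sum_j p_j(g_j-g_{i^*})$ includes the $j=i^*$ term, which is zero and carries weight $p_{i^*}$, so its correct lower bound is $(1-p_{i^*})\delta$, not $\delta$. Substituting the correct bound gives a per-step gap growth of $\eta\delta\,p_{i^*}(1-p_{i^*}+p_i)$ --- precisely the $p_{i^*}(1-p_{i^*})$-type degeneration you identified. (The paper's final conversion has a similar fudge: it bounds the $i=i^*$ term of $\sum_i e^{\alpha_i-\alpha_{i^*}}$, which equals $1$, by $\tfrac{1}{(1-\epsilon)n}$.)

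So the comparison is this: you and the paper use the same two ingredients, a budget of $\ln((1-\epsilon)n)$ and a claimed uniform per-step rate of $\eta\delta/n$; your bookkeeping via $-\ln p_{i^*}$ is actually the cleaner of the two (your budget identity is exact, and you prove the order-preservation that the paper only asserts), but where the paper simply asserts the uniform rate through an incorrect inequality, you correctly observe that it holds only near the uniform initialization. Indeed, under the theorem's own idealized dynamics (frozen $g_k$), the late-stage evolution satisfies roughly $\dot{\phi}\approx-\eta\delta\phi^{2}$, so the time to reach $p_{i^*}>1-\epsilon$ scales like $1/(\eta\delta\epsilon)$, whereas the stated bound tends to $n\ln n/(\eta\delta)$ as $\epsilon\to 0$; the statement is therefore only valid in the heuristic early-stage sense in which the paper operates. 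To ``complete'' your proof as the paper does, you would have to either commit its error or promote the early-stage, near-uniform approximation to an explicit hypothesis (e.g., restrict to the regime $p_{i^*}\leq 1-\epsilon$ and accept an extra $1/\epsilon$ factor in the iteration bound).
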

The proof is included in Appendix. It shows that fewer candidate choices, a bigger margin and larger learning rates would cause performance collapse earlier. It is consistent with the empirical results that when the learning rate is lower, DARTS will perform better on NAS-Benchmark-201 if the total epoch is fixed to 50.

\subsection{Solution: Single-DARTS}
The analysis above indicates that it is the bi-level optimization in DARTS that plays a crucial role in performance collapse. In this paper, we propose Single-DARTS, which optimizes the network weights and architecture parameters by the same batch of data. 
\begin{equation}\label{equ:single-level}
\begin{split}
    \alpha^t, w^t &-= \eta \nabla_{\alpha, w} \mathcal{L}_{train} (\alpha^{t-1}, w^{t-1})
\end{split}
\end{equation}
Under this framework, we have
\begin{align*}
\pder{\Ls}{p_i^t} 
&= \frac{1}{N}\sum_{j=1}^N \pder{\Ls(x^j_{val})}{\bar{x}}^T (W_i^{t-1} x_{val}^j) \\
&- \frac{\eta p_i^{t-1}}{N^2}\sum_{j=1}^N (\pder{\Ls(x^j_{train})}{\bar{x}}^T \pder{\Ls(x^{j}_{train})}{\bar{x}}) ({x_{train}^j}^T x_{train}^{j})\\
&< \frac{1}{N}\sum_{j=1}^N \pder{\Ls(x^j_{val})}{\bar{x}}^T (W_i^{t-1} x_{val}^j) \\
&< ... < \frac{1}{N}\sum_{j=1}^N \pder{\Ls(x^j_{val})}{\bar{x}}^T (W_i^0 x_{val}^j)\\
\end{align*}
It means for learnable operations,  $\pder{\Ls}{p_i}$ in the single-level framework is smaller than in bi-level framework at the same iteration and $\pder{\Ls}{p_i}$ is decreased as the training process goes. After several iterations, learnable operations (convolution) are really learned but not add noise on the input, Thus they bring more information gain than non-learnable operations. Therefore, $\pder{\Ls}{p_i}$ of learnable operations can possibly be smaller than non-learnable operations. Under gradient descent, $p_i$ would consequently be increased faster than non-learnable operations and finally exceed them.

Furthermore, we use sigmoid to replace softmax as activation function on $\alpha$. For sigmoid, $\pder{\Ls}{\alpha_i} = p_i(1-p_i)\pder{\Ls}{\bar{x}}^Tx_i$. $p_i(1-p_i)$ is a nonmonotone function and it could be nearly zero when $p_i$ is near 1, thus even $p_i \geq p_j$ and $\pder{\Ls}{\bar{x}}^Tx_i \leq \pder{\Ls}{\bar{x}}^Tx_j$, it's possible that $\pder{\Ls}{\alpha_i} \geq \pder{\Ls}{\alpha_j}$. Therefore, from theoretical analysis, Single-DARTS can alleviate the irreversible performance collapse in DARTS. 

\section{Experiments}
\label{section:exp}
In this section, we conduct experiments to validate our theoretical insights and evaluate the performance of our proposed method, Single-DARTS. We choose two mainstream search spaces, NAS-Benchmark-201 and DARTS space.


\subsection{Results}
\subsubsection{NAS-Benchmark-201}
\label{nas201}


In NAS-Benchmark-201, we train the supermodel for 50 epochs with batch size of 64. We set SGD optimizer with learning rate as 0.005, weight decay as $3\times10^{-4}$, momentum as 0.9 and the cosine scheduler for network weights. And we set Adam optimizer with a fixed learning rate of $3\times10^{-4}$ and a momentum of (0.5, 0.999) for architecture parameters. We run each method 5 times and report the average results. 

Table \ref{tbl:nas201} shows the experiment results on NAS-Benchmark-201. NAS-Benchmark-201 is a challenging search space, where DARTS performs poorly, showing very low accuracy consistently. On the contrary, by replacing the bi-level optimization with single-level optimization, Single-DARTS outperforms all the previous methods. Single-DARTS shows very strong stability and its results are very close to the optimal results. 
So Single-DARTS is superior to the previous method both in accuracy and stability. 

\begin{table*}[t]
\caption{Comparison of different NAS algorithms on NAS-Bench-201.}
\begin{center}
\resizebox{0.8\linewidth}{!}{
\begin{tabular}{l|l|l|l|l|l|l}
\toprule
\multicolumn{1}{c}{\multirow{2}{*}{Method}} & \multicolumn{2}{|c|}{CIFAR-10}                               & \multicolumn{2}{c|}{CIFAR-100}                              & \multicolumn{2}{c}{ImageNet-16-120}                        \\ \cline{2-7} 
\multicolumn{1}{c|}{}                        & \multicolumn{1}{c|}{validation} & \multicolumn{1}{c|}{test} & \multicolumn{1}{c|}{validation} & \multicolumn{1}{c|}{test} & \multicolumn{1}{c|}{validation} & \multicolumn{1}{c}{test} \\ \hline 
Optimal   &91.61      & 94.37      &73.49        &73.51         &46.77   &47.31  \\  \hline \hlineÅ
RSPS\cite{li2020random}                                          & 84.16$_{\pm1.69}$                      & 87.66$_{\pm1.69}$                & 59.00$_{\pm4.60}$                      & 58.33$_{\pm4.34}$                & 31.56$_{\pm3.28}$                      & 31.14$_{\pm3.88}$                \\ 
DARTS\cite{liu2018darts}                                         & 39.77$_{\pm0}$                      & 54.30$_{\pm0 }$               & 15.03$_{\pm0}$                      & 15.61$_{\pm0}$                & 16.43$_{\pm0  }$                    & 16.32$_{\pm0  }$              \\ 
GDAS\cite{dong2019searching}                                          & 90.00$_{\pm0.21 }$                     & 93.51$_{\pm0.13}$                & 71.14$_{\pm0.27 }$                     & 70.61$_{\pm0.26}$              & 41.70$_{\pm1.26}$                      & 41.84$_{\pm0.90}$                \\ 
SETN \cite{dong2019one}                                         & 82.25$_{\pm5.17}$                      & 86.19$_{\pm4.63}$                & 56.86$_{\pm7.59}$                      & 56.87$_{\pm7.77}$                & 32.54$_{\pm3.63}$                      & 31.90$_{\pm4.07}$                \\ 
ENAS \cite{pham2018efficient}                                         & 39.77$_{\pm0}$                      & 54.30$_{\pm0}$                & 15.03$_{\pm0}$                      & 15.61$_{\pm0}$                & 16.43$_{\pm0}$                      & 16.32$_{\pm0}$                \\ 
CDARTS \cite{yu2020cyclic}    &91.13$_{\pm0.44}$   &94.02$_{\pm0.31}$     &72.12$_{\pm1.23}$  &71.92 $_{\pm1.30}$    &45.09$_{\pm0.61}$  & 45.51$_{\pm0.72}$ \\ 
DARTS-  \cite{chu2020darts} &91.03$_{\pm0.44}$   &93.80$_{\pm0.40}$    &71.36$_{\pm1.51}$   &71.53$_{\pm1.51}$   &44.87$_{\pm1.46}$    & 45.12$_{\pm0.82}$ \\ \hline
Single-DARTS   &{\bf91.55$_{\pm0}$}   & {\bf 94.36$_{\pm0}$}   &{\bf 73.49$_{\pm0}$}  &{\bf 73.51$_{\pm0}$}   & {\bf 46.37$_{\pm0}$}  &{\bf 46.34$_{\pm0}$} \\  
\bottomrule 
\end{tabular}}
\end{center}
\label{tbl:nas201}
\end{table*}

\subsubsection{DARTS}

\begin{table}[h!]
  \caption{
  Comparison with SOTA architectures on ImageNet in DARTS space. $\dagger$: the average results of searched architectures (not the average results of retraining searched the best architecture) and  $*$: the best result. $\star$: scaling channels of the best such that its FLOPs bellow 600M. 
  $\diamond$: adding extra training strategy on PDARTS setting.
  }
  \label{tab:darts-imagenet}
  \centering
  \begin{tabular}{lccc}
  	\toprule
    \textbf{Architecture} &  \textbf{FLOPs/M}
 & \textbf{Params/M} & \textbf{Top-1/\%.}   \\
    \midrule
    NASNet-A \cite{zoph2018learning} & 564 & 5.3 & 74.0 \\
    AmoebaNet-C \cite{real2019regularized} &570 & 6.4 & 75.7 \\
    PDARTS \cite{chen2019progressive} & 557 & 4.9 &75.6 \\
    PC-DARTS \cite{xu2019pc} & 597 & 5.3 & 75.8  \\ 
    DARTS \cite{liu2018darts} & 574 & 4.7 & 73.3  \\
    DARTS+$\diamond$ \cite{liang2019darts} & 591 & 5.1 & 76.3  \\
    CDARTS$\dagger$ \cite{yu2020cyclic} &732 & 6.1 & 76.3  \\
    CDARTS* \cite{yu2020cyclic} &704 & 6.3 & 76.6  \\
    CDARTS$\star$ \cite{yu2020cyclic} &571 & 5.4 & 75.9 \\
    DropNAS$\diamond$ \cite{hong2020dropnas} & 597 & 5.4 & 76.6 \\
    \midrule
    \textbf{Single-DARTS}* &714 & 6.60 & {\bf 77.0}  \\
    \textbf{Single-DARTS}$\dagger$ &707 &6.55 & 76.7 \\
    \textbf{Single-DARTS}$\star$ & 599 & 5.3 & 76.0 \\
     \bottomrule
  \end{tabular}
  \label{tab:darts-imagnet}
\end{table}

\begin{table}[h!]
  \caption{
  Comparisons on CIFAR10 in DARTS space. 
  }
  \centering
  \begin{tabular}{lccc}
  	\toprule
    \textbf{Architecture} & \textbf{Params} & \textbf{Top-1.}   \\
  &  (M)  &  (\%) \\
    \midrule
    AmoebaNet-B \cite{real2019regularized} & 2.8 & 97.45  \\
    PDARTS \cite{chen2019progressive} & 3.4 & 97.50 \\
    PC-DARTS \cite{xu2019pc} & 3.6 & 97.43$_{\pm0.07}$  \\ 
    DARTS \cite{liu2018darts} & 3.3 & 97.24$_{\pm0.09}$  \\
    DARTS+ $\diamond$ \cite{liang2019darts} & 4.3 & 97.63$_{\pm0.13}$ \\
    CDARTS \cite{yu2020cyclic} & 3.8 & 97.52$_{\pm0.04}$  \\
    DropNAS \cite{hong2020dropnas} & 4.1 & 97.42$_{\pm0.14}$\\
    \midrule
    \textbf{Single-DARTS} & 3.3 & 97.54 \\
     \bottomrule
    
  \end{tabular}
  \label{tab:darts-cifar}
\end{table}

\begin{figure*}[t]
\begin{center}
\includegraphics[width=0.3\linewidth]{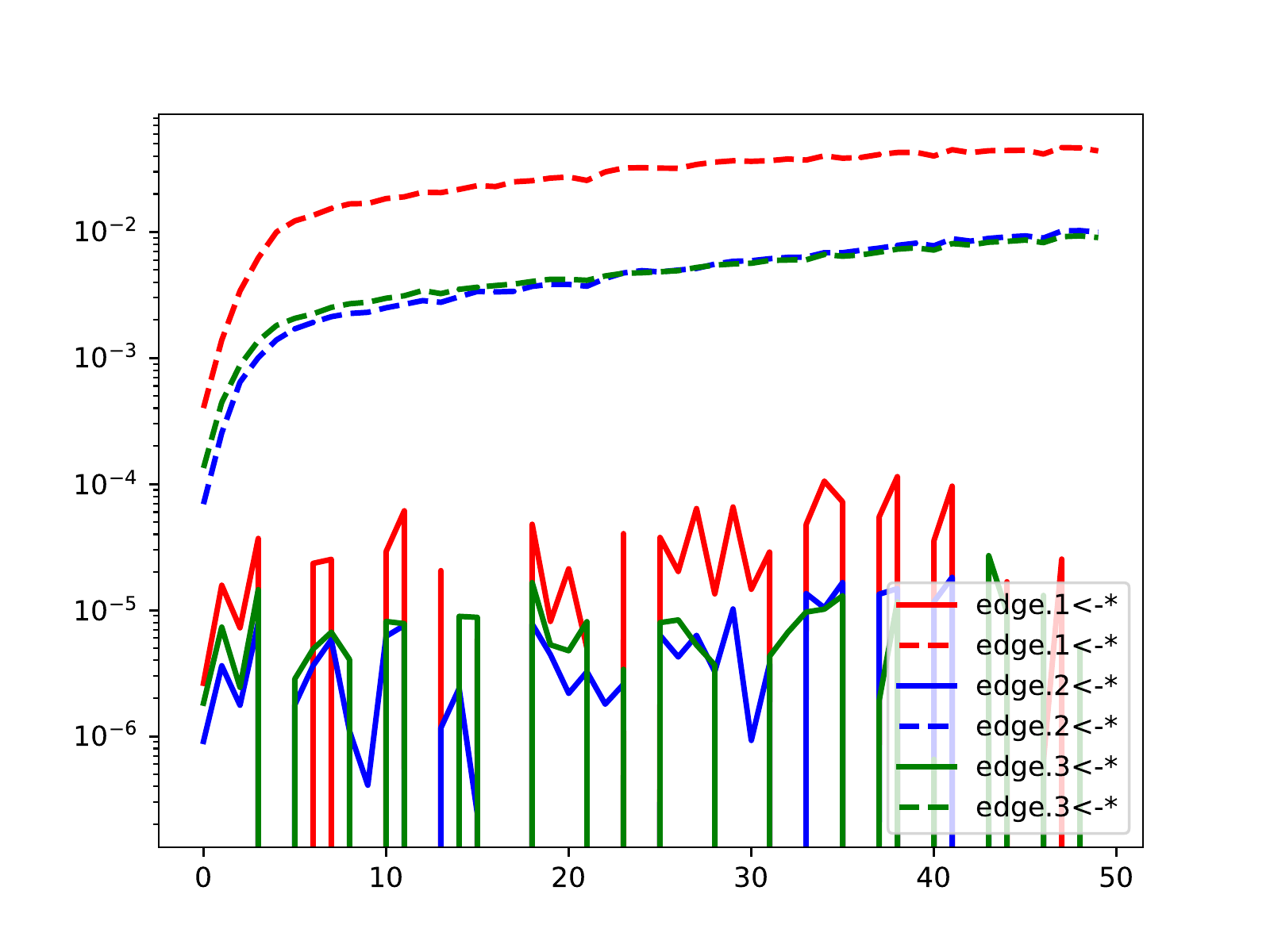}
\includegraphics[width=0.3\linewidth]{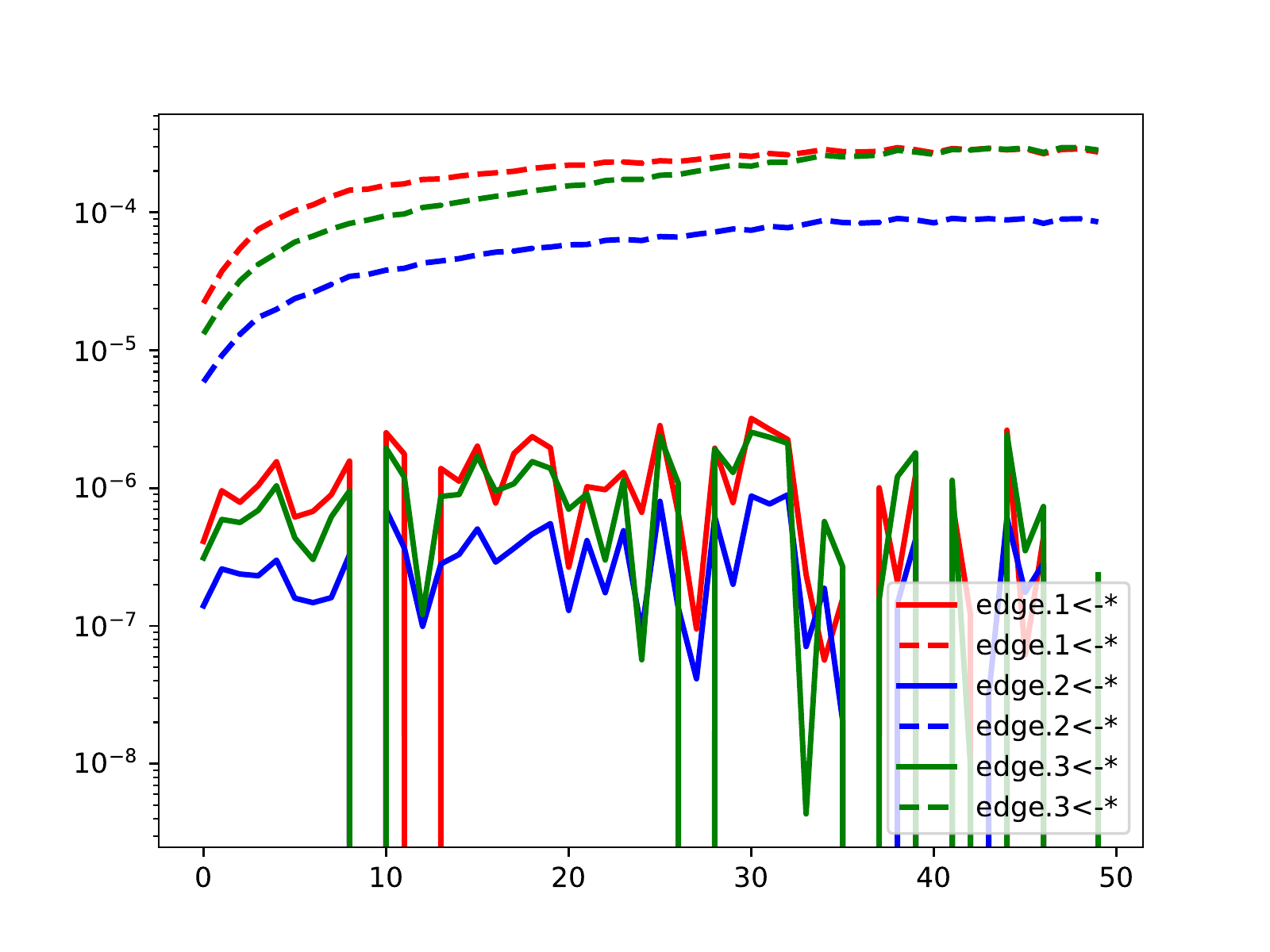}
\includegraphics[width=0.3\linewidth]{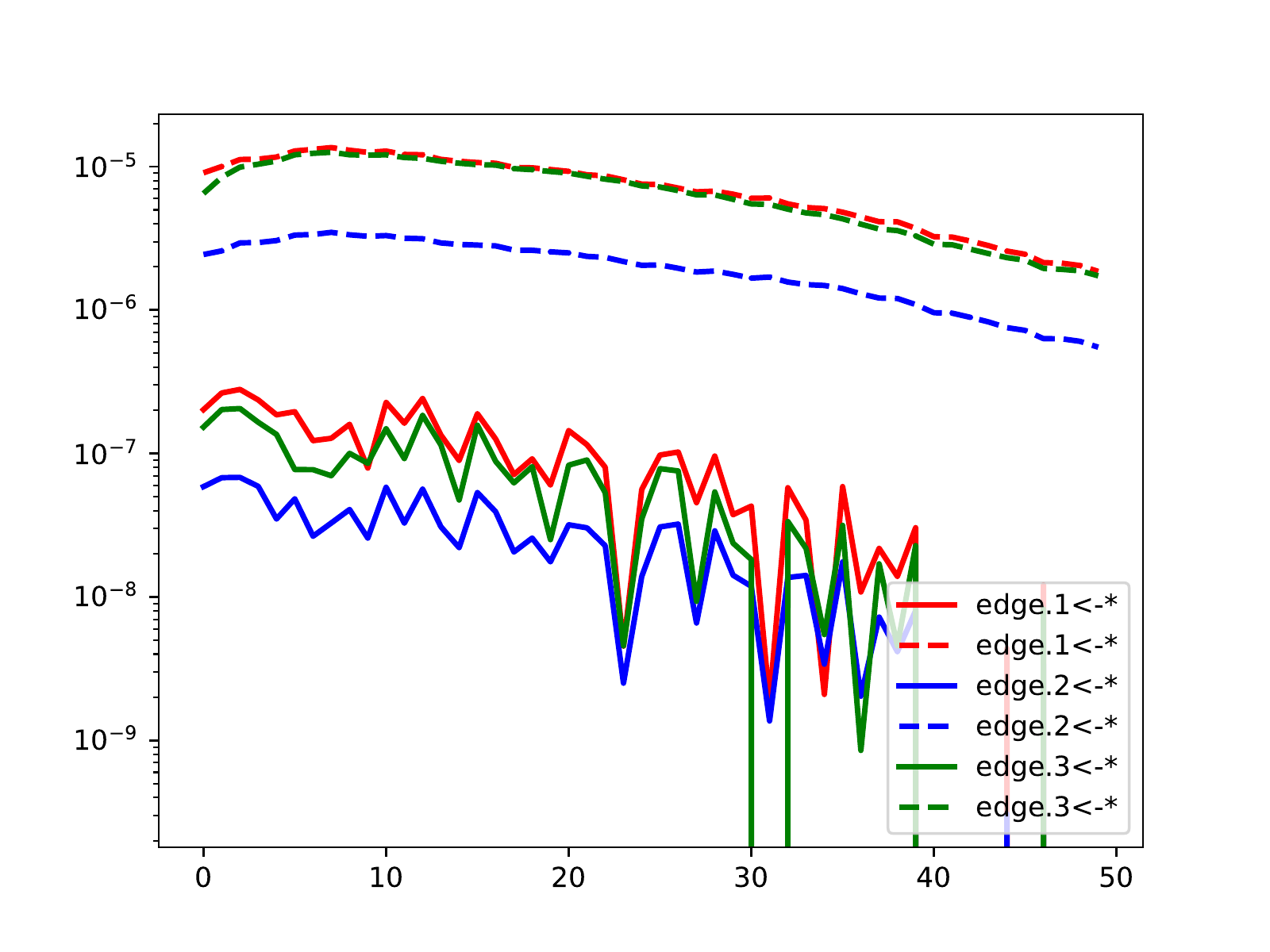}
\caption{Comparison of gradient correlation \ref{equ:corr} between bi-level and single-level optimization on CIFAR10 dataset in NAS-Benchmark-201 search space. It shows the gradient correlation on different nodes of the first, the middle, and the last cells. Solid lines represent bi-level and dashed lines represent single-level. Gradient correlation in bi-level optimization is nearly zero and far smaller than in single-level optimization.} 
\label{fig:corr1}
\end{center}
\end{figure*}
We directly do the search process on the ImageNet-1k dataset. Specifically, input images are downsampled third times by convolution layers at the beginning of the supermodel to reduce spatial resolution which follows \cite{unnas}. In the search phase, we train the supermodel for 50 epochs. We set batch size as 360, learning rate as 0.025, weight decay as $3\times10^{-4}$, and the cosine scheduler for network weights. And we use Adam with a fixed learning rate of $3\times10^{-4}$ and a momentum of (0.5, 0.999) for architecture parameters, and weight decay is set to 0 to avoid extra gradients on zero operations. 
For sigmoid, $\alpha$ is initialized as $-\ln(7)$ such that $sigmoid(\alpha)=0.125$ (from our empirical experiments setting $\alpha=0$ could be worse than using softmax, more comparisons are shown in \ref{tab: lr-act-opt-analysis} and Appendix). The search phase costs 4 GPUs for about 28 hours on NVIDIA GeForce RTX 2080ti. In the retraining phase, we adopt the training strategy as previous works \cite{hundt2019sharpdarts} to train the searched architecture from scratch, without any additional module. The whole process lasts 250 epochs, using SGD optimizer with a momentum of 0.9, a weight decay of $3\times10^{-5}$. It's to be mentioned that due to memory limit of 2080ti, we scale both batch size and learning rate by 0.75 (1024 to 768, 0.5 to 0.375). And we reproduce previous methods \cite{hundt2019sharpdarts, xu2019pc, yu2020cyclic} under this setting to ensure fairness. Additional enhancements are adopted including label smoothing and an auxiliary loss tower during training as in PDARTS. Learning rate warm-up is applied for the first 5 epochs and decayed down to zero linearly. The retrain phase costs 8 GPUs for about 3.5 days on RTX 2080ti.

As shown in Table \ref{tab:darts-imagenet}, Single-DARTS search the state-of-the-art architecture with 77.0\% top1 accuracy on ImageNet-1K. There is no constraint on FLOPs during the search process and larger models naturally have better performance. 
We report the result of previous methods in their original paper. When training the searched architectures from scratch, we follow the strategy of PDARTS, which re-trains the searched architecture for 250 epochs. In contrast, DropNAS trains for 600 epochs and DARTS+ for 800 epochs.  AutoAugment, mixup and SE module are also applied in DropNAS.

We transfer the searched architecture to CIFAR10. To keep mobile setting, we shorten the number of layers from 20 to 14 to make parameters under 3.5M. The training setting is identical to PDARTS. The architecture's initial channels are 36. The whole process lasts 600 epochs with batch size of 128, using SGD optimizer with a momentum of 0.9, a weight decay of $3\times10^{-4}$. Cutout regularization of length 16, drop-path of probability 0.3 and auxiliary towers of weight 0.4 are applied. The initial learning rate is 0.025, which is decayed to 0 following the cosine rule. As shown in Table \ref{tab:darts-cifar}, Single-DARTS search the comparable result with 97.54\% accuracy on CIFAR10. It's to be mentioned that DARTS+ trains for 2,000 epochs.

\subsection{Analysis}
\subsubsection{Gradient Correlation}
We define $\sum_{j=1}^N \sum_{k=1}^M (\pder{\Ls(x^j_{val})}{\bar{x}}^T \pder{\Ls(x^k_{train})}{\bar{x}}) ({x_{train}^k}^T x_{val}^j)$
as the gradient correlation in bi-level optimization, accordingly the one in single-level optimization is 
$\sum_{j=1}^N \sum_{k=1}^M (\pder{\Ls(x^j_{train})}{\bar{x}}^T \pder{\Ls(x^k_{train})}{\bar{x}}) ({x_{train}^k}^T x_{train}^j)$.
In our analysis, we point out that it is the gradient correlation that plays a crucial role in the performance collapse in DARTS. 
Here we visualize the gradient correlation in bi-level optimization and single-level optimization. Figure \ref{fig:corr1} compares the gradients correlation \ref{equ:corr} in DARTS and Single-DARTS. For DARTS, the gradients correlation is nearly zero. On the contrary, the gradient correlation in Single-DARTS is much bigger than the one in DARTS, which is consistent with our theoretical analysis.

\subsubsection{Operator Gradients}
\label{subsec: op-grad}

We compare $\pder{\Ls}{p_i}$ in the last cell between DARTS and Single-DARTS in Figure \ref{fig:grad_pi}. For DARTS, $\pder{\Ls}{p_i}$ of non-learnable operations are smaller than learnable operations and the gap increases as the training process goes. Thus non-learnable operations' architecture parameters would be much larger than the learnable ones. On the contrary, in Single-DARTS, $\pder{\Ls}{p_i}$ of learnable operations compete against non-learnable operations and finally learnable operations surpass non-learnable ones. More comparison results of different cells are shown in Appendix. 

\begin{figure}[t]
\centering
 \begin{subfigure}[b]{0.48\linewidth}
 \centering
\includegraphics[width=\textwidth]{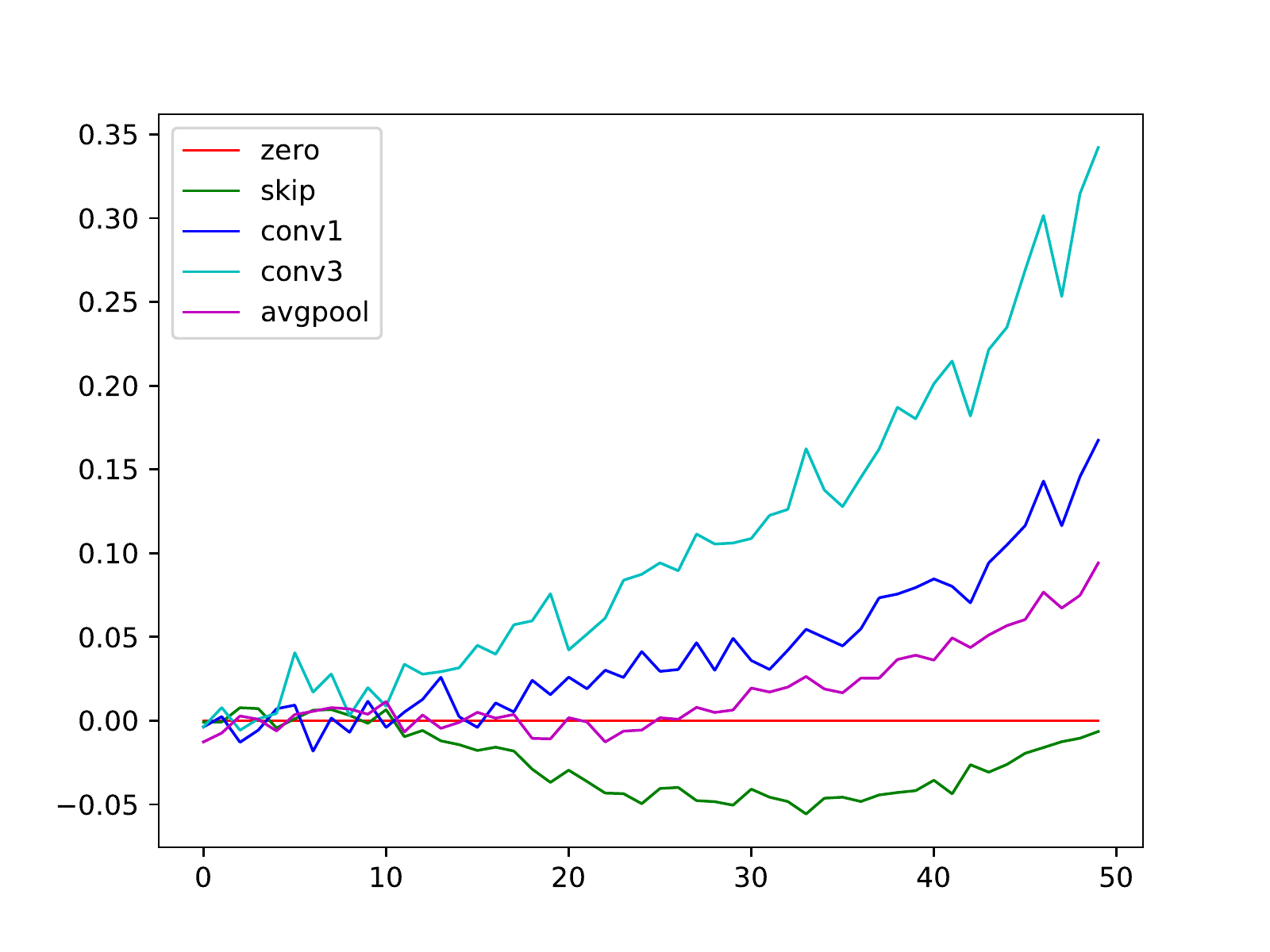}
 \caption{DARTS}
\end{subfigure}
\begin{subfigure}[b]{0.48\linewidth}
\centering
\includegraphics[width=\textwidth]{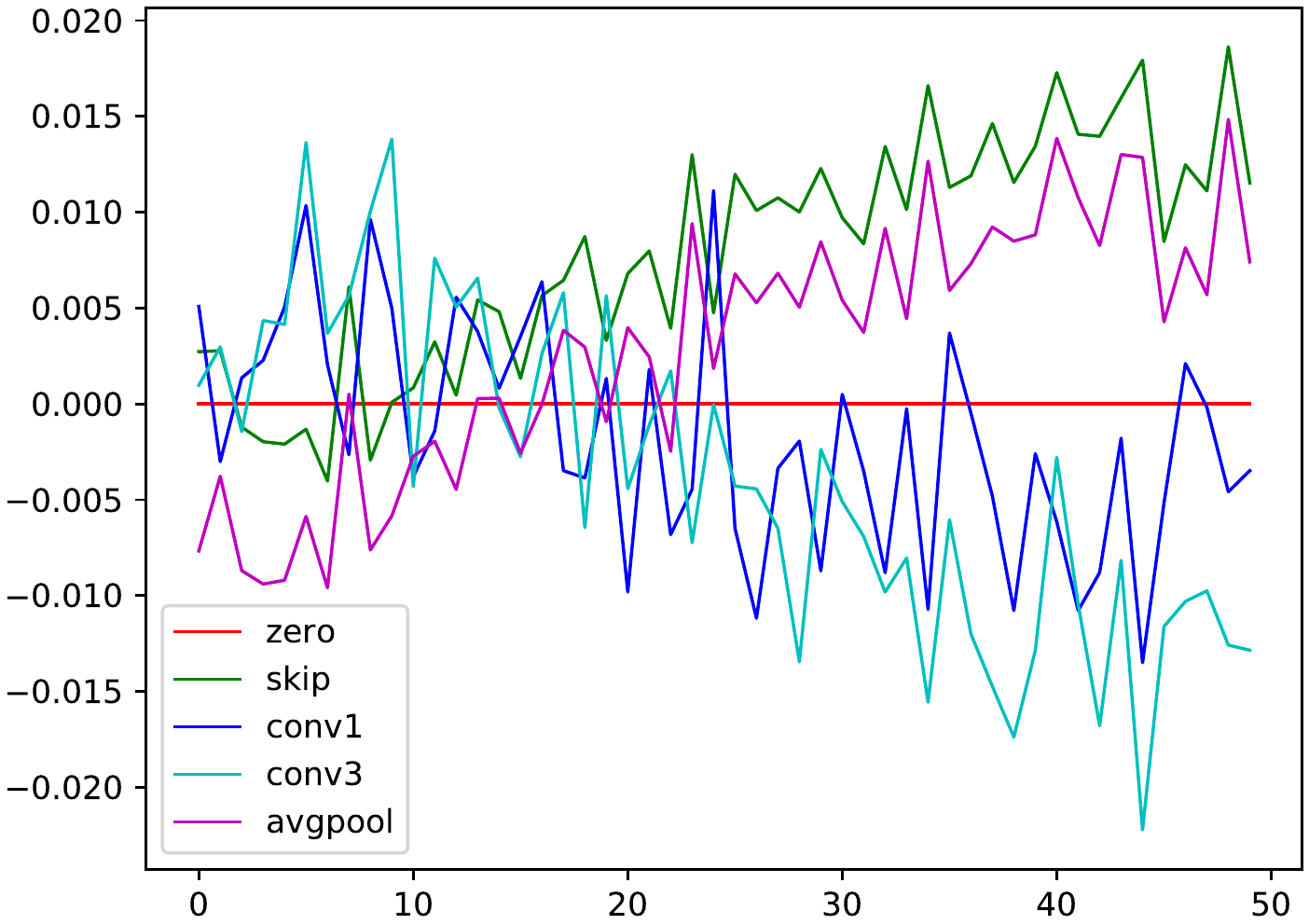}
\caption{Single-DARTS}
\end{subfigure}

\caption{Comparison of gradients of $p_i$ in the edges 3$\leftarrow$2 in the 15th cell in DARTS and Single-DARTS. 
} 
\label{fig:grad_pi}
\end{figure}

\subsection{Ablation Study} 

\paragraph{Bi-level V.S. Single-level} In this paper, Single-DARTS optimizes network weights and architecture parameters with the same batch of data. However, there are also some different optimization methods, such as using two different batch of data from the train set. Here we conduct the ablation study on it and the results are shown in Table \ref{tab:batch}. The training phase lasts 50 epochs. $w$ and $\alpha$ are optimized with the SGD optimizer. The learning rate is set to $0.005$, which is the optimal value for these methods. We run each experiment 5 times and report the average values. We can see from the results that if we sample two different batch data from the same subset, the performance collapse is alleviated to some extent. Thus, using same batch of data is optimal, which is equivalent to single-level optimization in ours. 


\begin{table}[t]
\caption{Comparison of different optimization strategies with Softmax, whether to optimize on same subset or batch. 
}
\label{tab:batch}
\begin{center}
\resizebox{\linewidth}{!}{
\begin{tabular}{c|c|l|l|l}
\toprule
 \small{Subset} & \small{Batch} & \small{CIFAR10} & \small{CIFAR100} & \small{ImgNet$_{16-120}$} \\ \hline
                 &                &61.88$_{\pm10.72}$ &25.13$_{\pm13.47}$ &17.98$_{\pm2.35}$             \\ \hline
$\checkmark$                  &                & 84.34$_{\pm0.02}$       & 54.92$_{\pm0.06}$        & 25.97$_{\pm0.49}$              \\ \hline
$\checkmark$                  & $\checkmark$                &\textbf{94.27}$_{\pm0.13}$ &\textbf{73.01}$_{\pm0.71}$ &\textbf{46.10}$_{\pm0.34 }$             \\ \bottomrule

\end{tabular}}
\end{center}
\end{table}

\paragraph{Different Learning Rates.} We evaluate the performance of bi-level (DARTS) and single-level (Single-DARTS) optimization under different learning rates. Here, we use SGD as the optimizer and change the learning rate. Table \ref{tab: lr-act-opt-analysis} shows that when the learning rate changes, the performance of DARTS varies dramatically, where the gap can be more than $50\%$. On the contrary, Single-DARTS shows much stronger stability and always achieves accuracies well above $93\%$.

\paragraph{Different Activation Functions} In Single-DARTS, we adopt a non-competitive activation function, Sigmoid, instead of the original Softmax function. Here we show the performance of DARTS and Single-DARTS under different activation functions in Table \ref{tab: lr-act-opt-analysis}. 
We can see that using non-competitive activation functions is effective, which is consistent with the empirical results in previous works and the theoretical analysis in our paper. Moreover, Single-DARTS performs much more steadily than DARTS under different activation functions, no matter competitive or not. 

\paragraph{Different Optimizers} We evaluate the performance of DARTS and Single-DARTS under different optimizers. For SGD optimizer, we set momentum as 0.9, learning rate as 0.005, and weight decay as $3\times10^{-4}$. For Adam optimizer, we use a fixed learning rate of $3\times10^{-4}$ and a momentum of (0.5, 0.999). From Table \ref{tab: lr-act-opt-analysis}, using Adam as the optimizer can admittedly push the performance of DARTS to a higher level. On the contrary, Single-DARTS performs well no matter what optimizer it uses. 

\begin{table}[t]
\caption{Accuracy under different learning rates (lr), activation functions (act), and optimizers (opt). * init $\alpha$ as 0.}
\label{tab:lr_act}
\begin{center}
\resizebox{\linewidth}{!}{
\begin{tabular}{l|l|l|l}
\toprule
Method  & lr/act/opt & CIFAR10 & CIFAR100  \\ \hline
\multirow{3}{*}{DARTS}  &0.001 &91.54$_{\pm1.26}$ &68.21$_{\pm1.15}$  \\ \cline{2-4}
  &0.005 &61.88$_{\pm10.72}$ &25.13$_{\pm13.47}$  \\ \cline{2-4}
  &0.025 &39.77$_{\pm0}$ &54.30$_{\pm0}$  \\ \hline
\multirow{3}{*}{Single-DARTS}  &0.001 &\textbf{94.36}$_{\pm0}$ &\textbf{73.51}$_{\pm0}$ \\ \cline{2-4}
Single-DARTS  &0.005 &\textbf{94.36}$_{\pm0}$ &\textbf{73.51}$_{\pm0}$ \\ \cline{2-4}
  &0.025 &\textbf{93.10}$_{\pm0}$ &\textbf{69.24}$_{\pm0}$ \\ \hline \hline
  \multirow{2}{*}{DARTS}  & Softmax &61.88$_{\pm10.72}$ &25.13$_{\pm13.47}$  \\ \cline{2-4}
  & Sigmoid &80.57$_{\pm0}$ &47.93$_{\pm0}$  \\ \hline
\multirow{3}{*}{Single-DARTS}  & Softmax & \textbf{94.27}$_{\pm0.13}$ &\textbf{73.01}$_{\pm0.71}$ \\ \cline{2-4}
  & Sigmoid & \textbf{94.36}$_{\pm0}$ &\textbf{73.51}$_{\pm0}$ \\ \cline{2-4}
  & Sigmoid*  & \textbf{93.76}$_{\pm0}$ &\textbf{70.71}$_{\pm0}$ \\ \hline \hline

  \multirow{2}{*}{DARTS}  & SGD &61.88$_{\pm10.72}$ &25.13$_{\pm13.47}$  \\ \cline{2-4}
  & Adam &80.57$_{\pm0}$ &47.93$_{\pm0}$  \\ \hline
\multirow{2}{*}{Single-DARTS} & SGD & \textbf{94.36}$_{\pm0}$ &\textbf{73.51}$_{\pm0}$ \\ \cline{2-4}
 & Adam & \textbf{94.36}$_{\pm0}$ &\textbf{73.51}$_{\pm0}$ \\\bottomrule
\end{tabular}}
\end{center}
\label{tab: lr-act-opt-analysis}
\end{table}

\section{Conclusion}
In this paper, we unveil that the performance collapse in DARTS is irreversible. Then we analyze this phenomenon from the perspective of optimization. Through gradient analysis, we state out that the bi-level framework plays a crucial role in performance collapse. On the basis of our theoretical insights, we propose a simple yet effective method, which uses single-level optimization. Ours outperforms previous methods both in accuracy and stability. 
\\

{\small
\bibliographystyle{ieee_fullname}
\bibliography{single_level}
}

\clearpage
\appendix
\section{Irreversible Trend of Non-parametric Operations}
 The probability curves of architecture parameters ($\softmax (\alpha)$) during the whole search of DARTS on CIFAR-10 in (a) NAS-201 and (b) DARTS Search space. There is an irreversible trend that non-learnable operations ($e.g.$, skip, pool) surpass learnable operations. 

\begin{figure}[h]
  \centering
  \includegraphics[width=\linewidth]{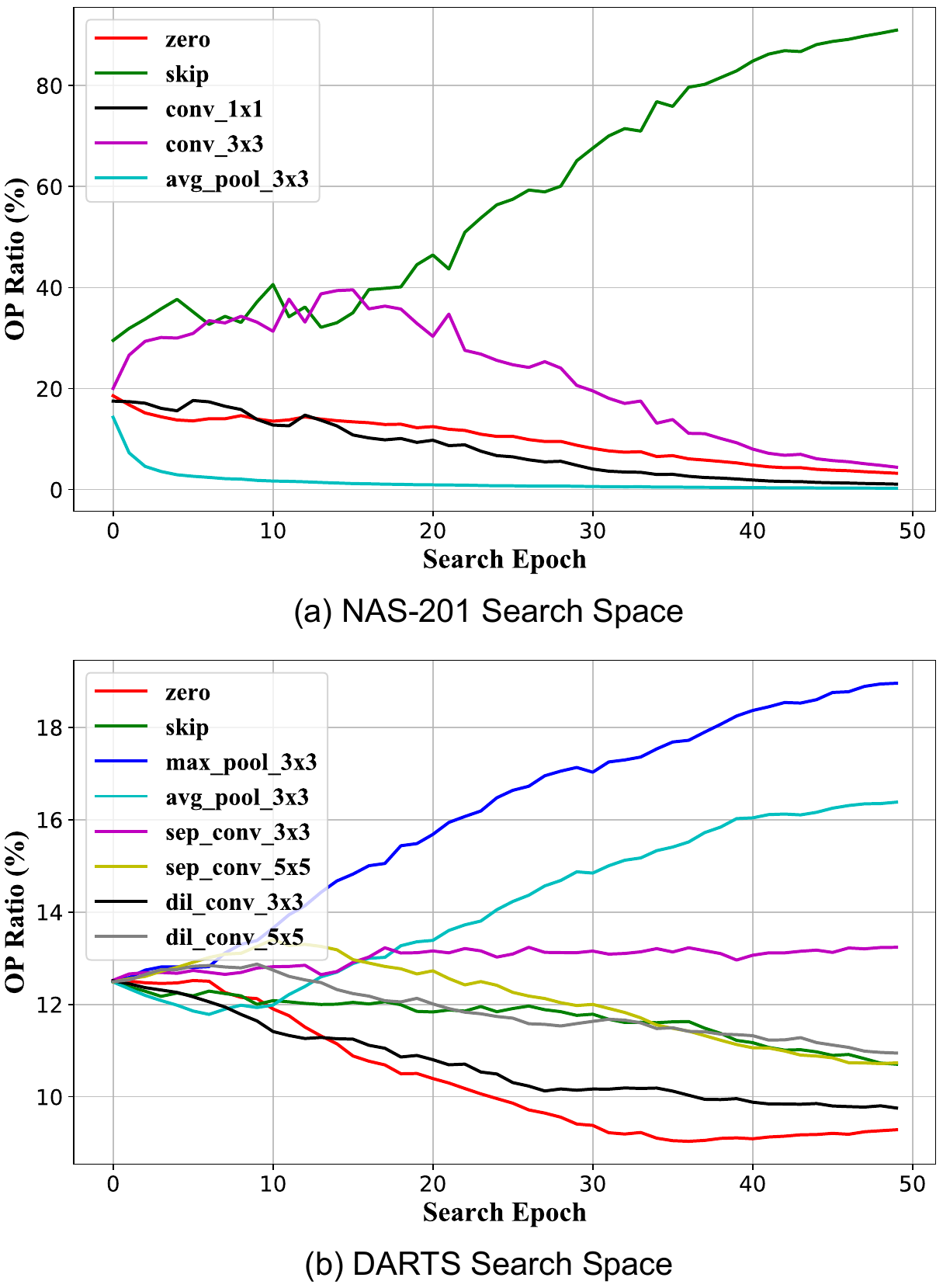}
  \caption{Irreversible trend of non-parametric operations in different space}
  \label{fig:intro}
\end{figure}

\section{Search space details}
NAS-Bench-201 \cite{dong2020bench} builds a cell-based search space, where one cell could be seen as a directed acyclic graph consisting of 4 nodes and 6 edges. Each network is stacked by 15 cells. Each edge represents an operation selected from (1) zero, (2) skip connection, (3) 1$\times$1 convolution, (4) 3$\times$3 convolution, and (5) 3$\times$3 average pooling. The search space has 15,625 neural cell candidates in total. And all the candidates are given training accuracy/valid accuracy/test accuracy on three datasets: (1) CIFAR-10: In NAS-Bench-201, it separates the train set in original CIFAR-10 into two parts, one as the train set and the other as the validation set, each set contains 25K images with 10 classes, (2) CIFAR-100: For CIFAR-100, it separates the original test set to get the validation and test set, each set has 5K images, and (3) ImageNet-16-120. It downsamples ImageNet to $16\times16$ pixels and selects 120 classes. Totally, it involves 151.7K training images, 3K validation images, and 3K test images. We set Adam as architecture parameters' optimizer and SGD as network weights'.

DARTS \cite{dong2020bench} is also a cell-based search space. Each cell contains 6 nodes and each node has to select 2 edges to connect with the previous 2 nodes. Each edge has 8 operations: $3\times3$ and $5\times5$ separable convolution, $3\times3$ and $5\times5$ dilated separable convolution, $3\times3$ max-pooling, $3\times3$ average-pooling, skip-connect (identity), and zero (none). The stacked networks have normal cells and reduction cells. It contains  $10^{18}$ candidates, which is quite large.\\

\section{Proof of theorem 4.2}
\begin{theorem}
For function $\Ls=g(\sum_i \frac{exp(\alpha_i)}{\sum_j exp(\alpha_j)} x_i)$, let $p_i = \frac{exp(\alpha_i)}{\sum_j exp(\alpha_j)}$, $\bar{x}=\sum_i p_i x_i$, if $\pder{\Ls}{\bar{x}}^Tx_j < \pder{\Ls}{\bar{x}}^Tx_i$ and $\alpha_{j} \geq \alpha_i$, then $\pder{\Ls}{\alpha^j} \leq \pder{\Ls}{\alpha_i}$
\end{theorem}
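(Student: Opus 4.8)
The plan is to reduce the claim to a direct computation of $\pder{\Ls}{\alpha_k}$ via the chain rule, exploiting the fact that the only dependence of $\Ls$ on the $\alpha$'s is through the mixed feature $\bar{x} = \sum_k p_k x_k$. First I would differentiate the softmax using its standard Jacobian $\pder{p_m}{\alpha_k} = p_m(\delta_{mk} - p_k)$ and push it through $\bar{x}$. This collapses neatly: $\pder{\bar{x}}{\alpha_k} = \sum_m \pder{p_m}{\alpha_k} x_m = p_k\bigl(x_k - \sum_m p_m x_m\bigr) = p_k(x_k - \bar{x})$, so that $\pder{\Ls}{\alpha_k} = \pder{\Ls}{\bar{x}}^T \pder{\bar{x}}{\alpha_k} = p_k\,\pder{\Ls}{\bar{x}}^T(x_k - \bar{x})$.

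Writing $c_k := \pder{\Ls}{\bar{x}}^T x_k$ and $\bar{c} := \pder{\Ls}{\bar{x}}^T \bar{x} = \sum_m p_m c_m$, this becomes the compact form $\pder{\Ls}{\alpha_k} = p_k(c_k - \bar{c})$. The two hypotheses then read cleanly: $\pder{\Ls}{\bar{x}}^T x_j < \pder{\Ls}{\bar{x}}^T x_i$ is exactly $c_j < c_i$, while $\alpha_j \geq \alpha_i$ gives $p_j \geq p_i > 0$ by coordinatewise monotonicity of the softmax. The target inequality $\pder{\Ls}{\alpha_j} \leq \pder{\Ls}{\alpha_i}$ becomes $p_j(c_j - \bar{c}) \leq p_i(c_i - \bar{c})$.

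To establish this I would argue by the sign of $c_k - \bar{c}$ relative to the common baseline $\bar{c}$. In the decisive regime $c_j \leq \bar{c} \leq c_i$ the inequality is immediate, since $\pder{\Ls}{\alpha_j} = p_j(c_j - \bar{c}) \leq 0 \leq p_i(c_i - \bar{c}) = \pder{\Ls}{\alpha_i}$, and the hypothesis $p_j \geq p_i$ is not even needed. When $i$ and $j$ are the only two competing operations this regime always holds, because $\bar{c} = p_i c_i + p_j c_j$ is a convex combination of $c_i$ and $c_j$ and hence lies between them; indeed one computes directly $\pder{\Ls}{\alpha_i} = p_i p_j(c_i - c_j) \geq 0$ and $\pder{\Ls}{\alpha_j} = -p_i p_j(c_i - c_j) \leq 0$, closing the argument.

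The hard part will be the general multi-operation case, where $\bar{c}$ is an average over \emph{all} operations and the two being compared can both sit on the same side of it. There the clean sign split breaks down, and I would have to lean on $p_j \geq p_i$ together with $c_j < c_i$ to control the difference $p_i(c_i - \bar{c}) - p_j(c_j - \bar{c})$, which after substituting $\bar{c} = \sum_m p_m c_m$ couples to every other operation's pair $(p_m, c_m)$. This coupling through the shared mean is the genuine obstacle: without an additional assumption pinning $\bar{c}$ between $c_j$ and $c_i$ — for instance that $j$ is the currently dominant operation so the remaining mass contributes little, matching the early-stage scenario in which the theorem is applied — the bare monotonicity hypotheses do not by themselves force the sign, so I expect the rigorous argument to restrict to this two-way competition or to the balanced-mean regime.
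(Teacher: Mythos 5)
Your reduction is exactly the one in the paper's appendix: the same softmax--Jacobian collapse gives $\pder{\Ls}{\alpha_k} = p_k\,\pder{\Ls}{\bar{x}}^T(x_k - \bar{x}) = p_k(c_k - \bar{c})$ (the paper writes it as $p_k\sum_m p_m \pder{\Ls}{\bar{x}}^T(x_k - x_m)$), and your two-operation and sign-split arguments are correct. More importantly, your closing doubt is not a gap in your proof --- the statement as written is actually \emph{false} once a third operation is present, so no blind attempt could have closed it. Take $g$ linear with constant gradient $v$, three operations with $c_i = v^Tx_i = 1$, $c_j = v^Tx_j = 0$, $c_m = v^Tx_m = -10$, and $(p_i, p_j, p_m) = (0.1,\, 0.4,\, 0.5)$. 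Both hypotheses hold ($c_j < c_i$ and $\alpha_j \geq \alpha_i$), yet $\bar{c} = -4.9$ gives
\begin{equation*}
\pder{\Ls}{\alpha_j} = p_j(c_j - \bar{c}) = 1.96 \;>\; 0.59 = p_i(c_i - \bar{c}) = \pder{\Ls}{\alpha_i},
\end{equation*}
contradicting the conclusion. Incidentally, the patch needed is weaker than your full regime $c_j \leq \bar{c} \leq c_i$: from the identity $\pder{\Ls}{\alpha_i} - \pder{\Ls}{\alpha_j} = p_i(c_i - c_j) + (p_i - p_j)(c_j - \bar{c})$, the single extra condition $c_j \leq \bar{c}$ already forces the result under the stated hypotheses.

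The paper's own proof does not establish the pairwise claim either; it silently restricts to precisely the kind of regime you predicted would be necessary. After deriving the same pairwise difference, the appendix switches to the setting of its Theorem 4.3, in which the distinguished operation $i^*$ extremizes $\pder{\Ls}{\bar{x}}^Tx_k$ over \emph{all} $k$ with margin $\delta = \min_{i\neq i^*}\pder{\Ls}{\bar{x}}^T(x_i - x_{i^*}) \geq 0$. Global extremality is exactly what pins $c_{i^*}$ on the correct side of the weighted mean $\bar{c}$, so the sign argument closes there and yields the quantitative lower bound of order $p_{i^*}\delta$ used for the convergence rate (with some slack: the proof bounds $\sum_k p_k \pder{\Ls}{\bar{x}}^T(x_{i^*}-x_k)$ by $\delta$ when only $(1-p_{i^*})\delta$ is justified, though the sign is unaffected). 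So the theorem is true, and is proved by the paper, only under the stronger hypothesis that $j$ is the operation with the globally extremal gradient correlation --- a restriction of the same family as, though not identical to, the ``dominant operation'' variant you suggested. Your verdict should therefore be read as a correct identification of an overclaim in the theorem statement, not as an incompleteness of your argument.
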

\begin{proof}
It's easily to see that $p_i \geq p_j$ if $\alpha_i \geq \alpha_j$ and $p_{i^*} \geq \frac{1}{n}$ for $\sum_i p_i = 1$.
Consider
\begin{equation*}
\begin{split}
\pder{\Ls}{\alpha_i} &= \pder{\Ls}{\bar{x}}^T(p_i(1-p_i)x_i - p_i\sum_{k\neq i}p_k x_k) \\
                    &= \pder{\Ls}{\bar{x}}^T \sum_{k\neq i} p_i p_k (x_i - x_k) \\
                    &= \pder{\Ls}{\bar{x}}^T \sum_{k} p_i p_k (x_i - x_k) \\
                    &= p_i\sum_k p_k \pder{\Ls}{\bar{x}}^T(x_i - x_k) \\
\end{split}
\end{equation*}
For $\pder{\Ls}{\bar{x}}^Tx_j \leq \pder{\Ls}{\bar{x}}^Tx_i$,
\begin{equation*}
\begin{split}
    &\pder{\Ls}{\alpha_i} - \pder{\Ls}{\alpha_j} \\
    &= p_i\sum_k p_k \pder{\Ls}{\bar{x}}^T(x_i - x_k) - p_j\sum_k p_k \pder{\Ls}{\bar{x}}^T(x_j - x_k) \\
    &= (p_i - p_j)\sum_k p_k \pder{\Ls}{\bar{x}}(x_i - x_k) + p_j(\sum_k p_k \pder{\Ls}{\bar{x}} (x_i - x_k) \\
    &-\sum_k p_k\pder{\Ls}{\bar{x}}(x_j-x_k)) \\
    &= (p_i - p_j)\sum_k p_k \pder{\Ls}{\bar{x}}(x_i - x_k) + p_j\sum_k p_k\pder{\Ls}{\bar{x}}(x_i - x_j)
\end{split}    
\end{equation*}

As a result
\begin{equation}
\begin{split}
&\frac{\partial l}{\partial\alpha_{i^*}} - \frac{\partial l}{\partial\alpha_{i}}\\
&= p_{i^*} \sum_{j} p_j\frac{\partial l}{\partial\vz} (\vz_{i^*}-\vz_j) - p_i \sum_{j} p_j\frac{\partial l}{\partial\vz} (\vz_i-\vz_j) \\
&= (p_{i^*} - p_i) \sum_{j} p_j\frac{\partial l}{\partial\vz} (\vz_{i^*}-\vz_j) + p_i (\sum_{j} p_j\frac{\partial l}{\partial\vz} (\vz_{i^*}-\vz_j) \\
&- \sum_{j} p_j\frac{\partial l}{\partial\vz} (\vz_i-\vz_j)) \\
&= (p_{i^*} - p_i) \sum_{j} p_j\frac{\partial l}{\partial\vz} (\vz_{i^*}-\vz_j) + p_i\sum_{j} p_j\frac{\partial l}{\partial\vz} (\vz_{i^*}-\vz_i) \\
&\geq (p_{i^*} - p_i) \delta + p_i \delta \\
&= p_{i^*} \delta \\
&\geq \frac{\delta}{n} \\
\end{split}
\end{equation}
Under gradient ascent, on update step $t$ we have, for any $i \neq i^*$,
\begin{equation}
\begin{split}
\alpha_{i^*}^t - \alpha_{i}^t &= \alpha_{i^*}^{t-1} - \alpha_{i}^{t-1} + \eta(\frac{dl_{t-1}}{d\alpha_{i^*}^{t-1}} - \frac{dl_{t-1}}{d\alpha_i^{t-1}}) \\
&\geq \alpha_{i^*}^{t-1} - \alpha_{i}^{t-1} + \eta p_{i^*}^t\delta_t \\
& \geq \alpha_{i^*}^{0} - \alpha_{i}^{0} + \eta \sum_t p_{i^*}^t\delta_t \\
& \geq \frac{\eta t \delta}{n}  \\
\end{split}
\end{equation}
When $t=\frac{n \ln((1-\epsilon)n )}{\eta \delta}$, we have \\
\begin{equation*}
	\alpha_{i^*} - \alpha_i \geq \ln((1-\epsilon)n)
\end{equation*}
Thus
\begin{equation}
 p_{i^*} = \frac{1}{\sum_i \exp({\alpha_i - \alpha_{i^*}})} \geq \frac{1}{\sum_i \exp(-\ln((1-\epsilon)n))} = 1 - \epsilon \\
\end{equation}
Under gradient descent, for $i^*=\argmin_i \frac{\partial l}{\partial\vz}\vz_i$, we have the same conclusion.
\end{proof}

\section{Searched results in DARTS}
We use Single-DARTS to search directly on the ImageNet-1K dataset in DARTS space. Initializing $\alpha_i$ as $-\ln(7)$ ( $\softmax(\alpha_i)=0.125$) will improve the performance. 
The training setting follows PDARTS, without any additional tricks. In addition, for Single-DARTS, using half of the dataset also gains promising results.

\begin{table}[h]
  \caption{$*$ denotes $\alpha$ is initialized as $-\ln(7)$. 'Data' means using full or half of data to search.}
  \label{table:darts-imagenet}
  \centering
  \resizebox{\linewidth}{!}{
  \begin{tabular}{lccccc}
  	\toprule
   \textbf{Activation} & \textbf{Data} & \textbf{Seed}  & \textbf{FLOPs} & \textbf{Params} & \textbf{Top-1.}   \\
  & (M)&  (M)  &  (\%) \\
    \midrule
    softmax & full & 0 & 714.72 & 6.58 & 76.28  \\
    softmax & full & 1 & 712.92 & 6.56 & 76.71  \\ 
    softmax & full & 2 & 722.25 &6.61 & 76.27  \\
    sigmoid & full & 0 & 738.21 & 6.69 & 76.12 \\
    sigmoid & full & 1 & 738.21 & 6.69 & 76.58 \\
    sigmoid & full & 2 & 721.35 & 6.60 & 76.29 \\
    sigmoid* & full & 0 & 707.89 & 6.50 & 76.96  \\
    sigmoid* & full & 1  & 721.35 &6.60 &  77.0 \\
    sigmoid* & full & 2 & 700.61 & 6.40 & 76.95 \\
    softmax & half & 0 & 712.01 & 6.55 & 76.51  \\
    softmax & half & 1 & 692.18 & 6.36 & 76.31	  \\ 
    softmax & half & 2 & 709.04 & 6.45 & 76.51  \\
    sigmoid* & half & 0 & 720.44 & 6.59 & 76.67  \\
    sigmoid* & half & 1 & 692.18 & 6.36 &  76.78 \\
    sigmoid* & half & 2 & 707.89 & 6.50 & 76.54 \\
    \bottomrule
  \end{tabular}}
\end{table}

\section{More comparison of gradients of $p_i$ in 5.2.2}
\begin{figure*}[h]
\centering
 \begin{subfigure}[b]{0.3\linewidth}
 \centering
\includegraphics[width=\textwidth]{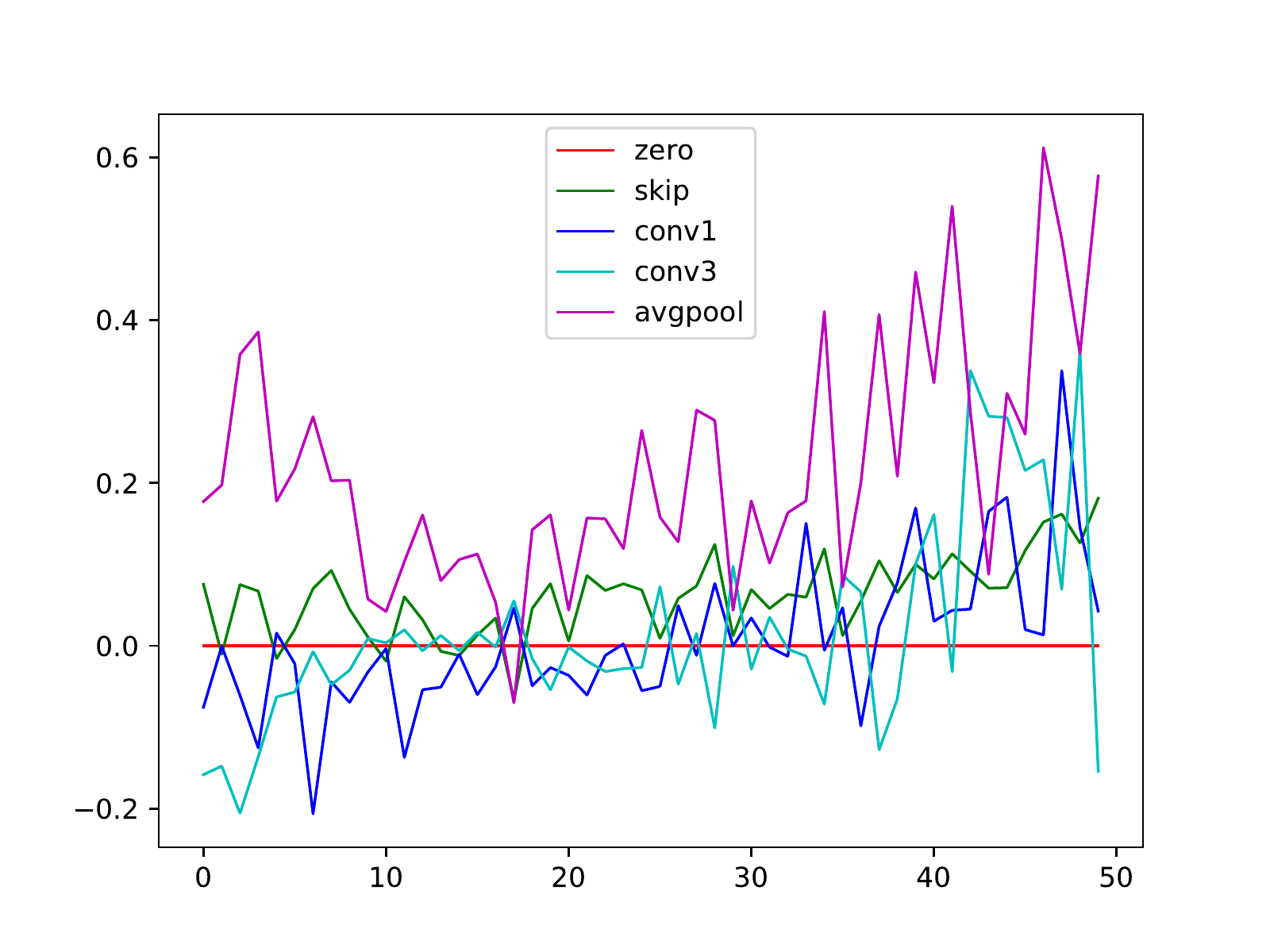}
 \caption{edge.1$\leftarrow$0}
\end{subfigure}
\hfill
 \begin{subfigure}[b]{0.3\linewidth}
 \centering
\includegraphics[width=\textwidth]{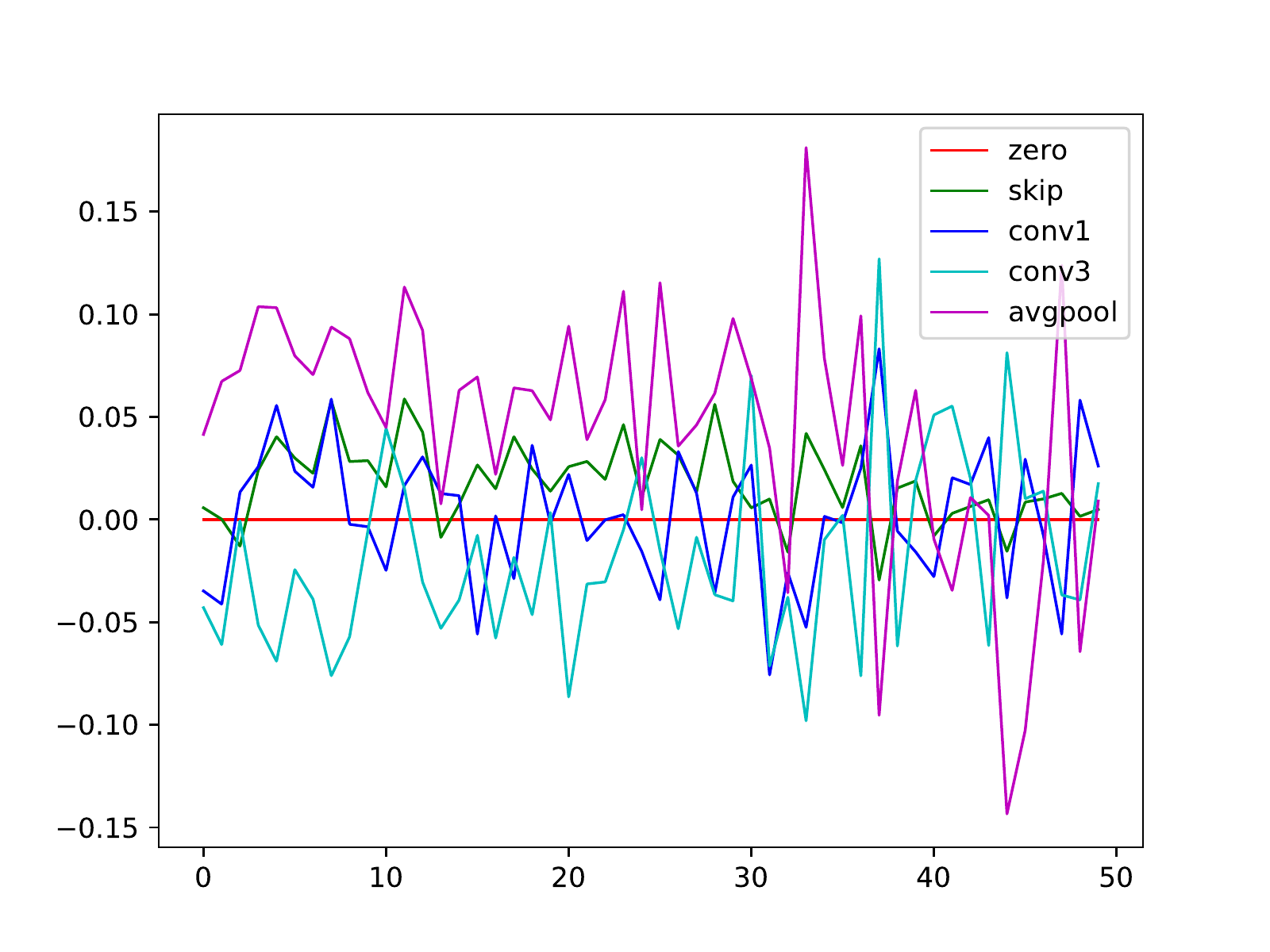}
 \caption{edge.2$\leftarrow$0}
\end{subfigure}
\hfill
 \begin{subfigure}[b]{0.3\linewidth}
 \centering
\includegraphics[width=\textwidth]{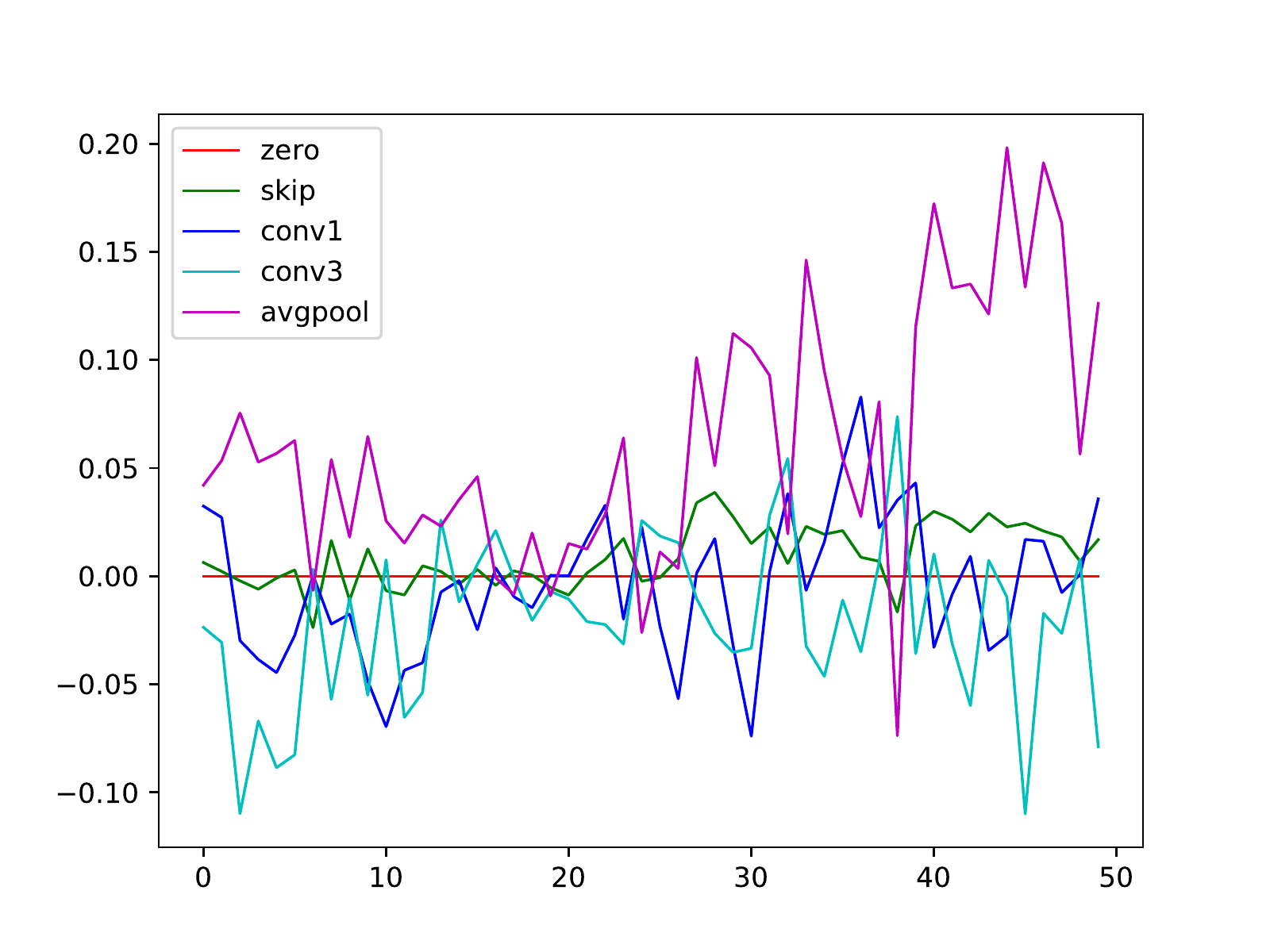}
 \caption{edge.2$\leftarrow$1}
\end{subfigure}
\hfill
\quad
 \begin{subfigure}[b]{0.3\linewidth}
 \centering
\includegraphics[width=\textwidth]{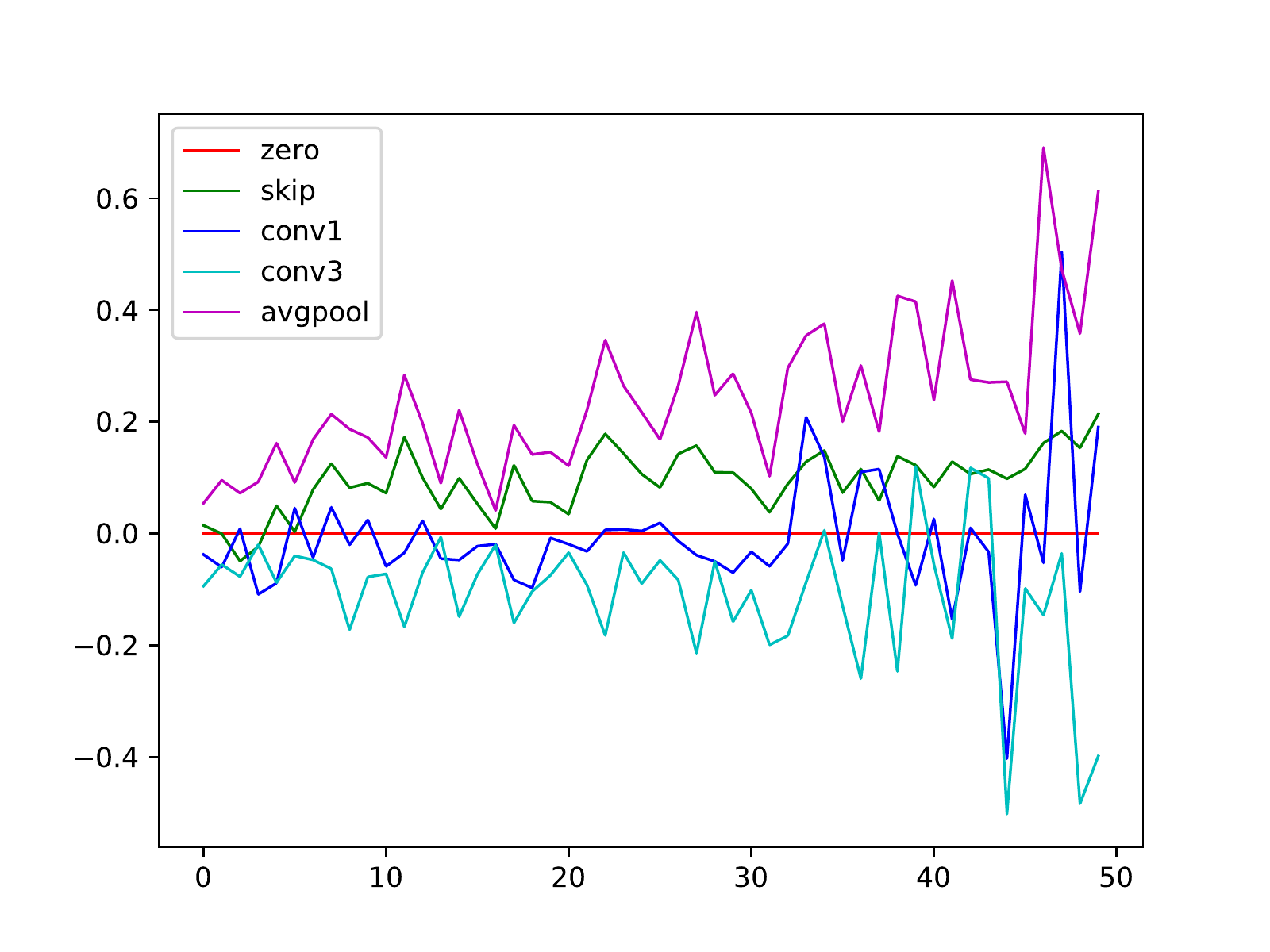}
 \caption{edge.3$\leftarrow$0}
\end{subfigure}
\hfill
 \begin{subfigure}[b]{0.3\linewidth}
 \centering
\includegraphics[width=\textwidth]{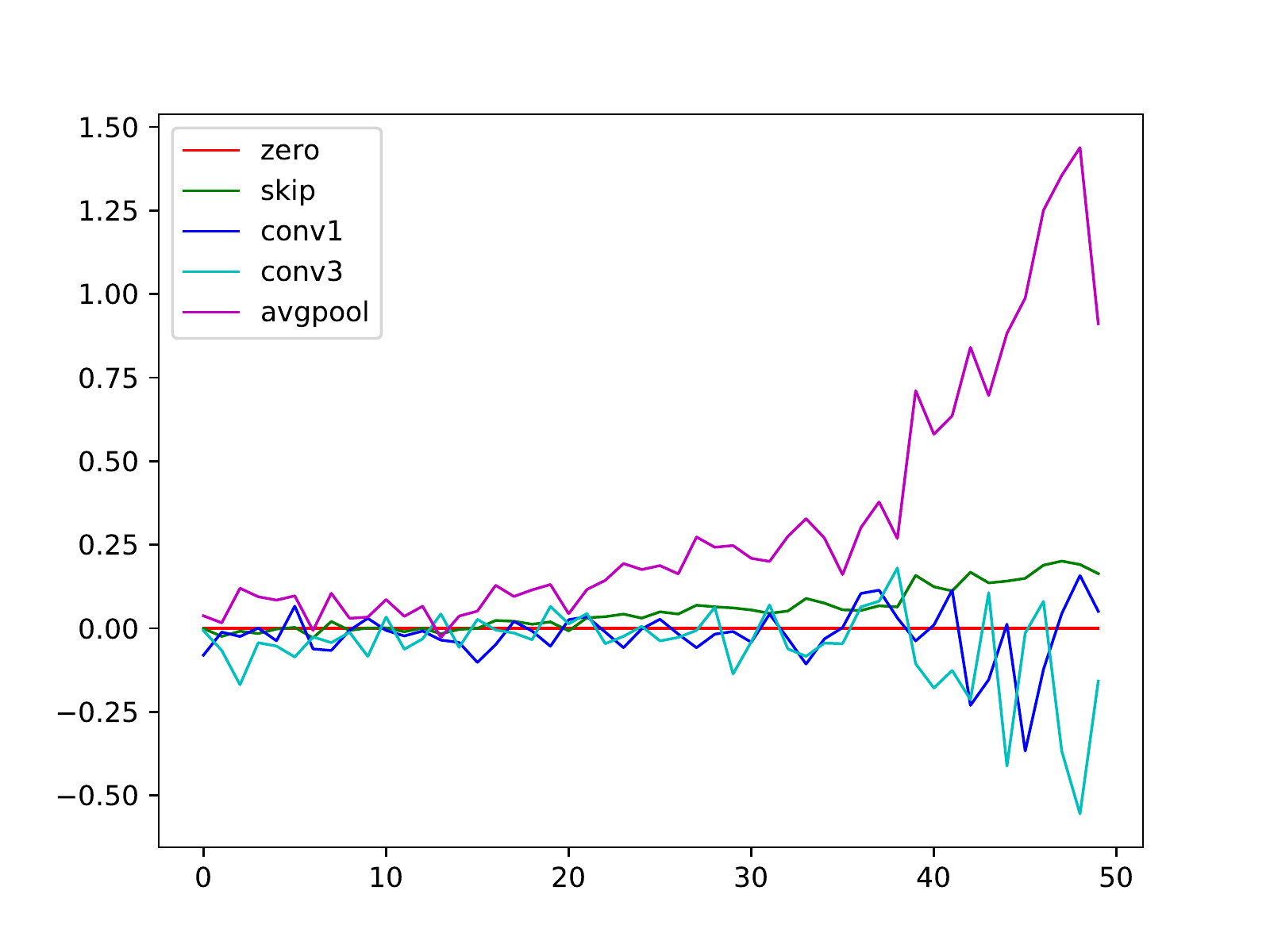}
 \caption{edge.3$\leftarrow$1}
\end{subfigure}
\hfill
 \begin{subfigure}[b]{0.3\linewidth}
 \centering
\includegraphics[width=\textwidth]{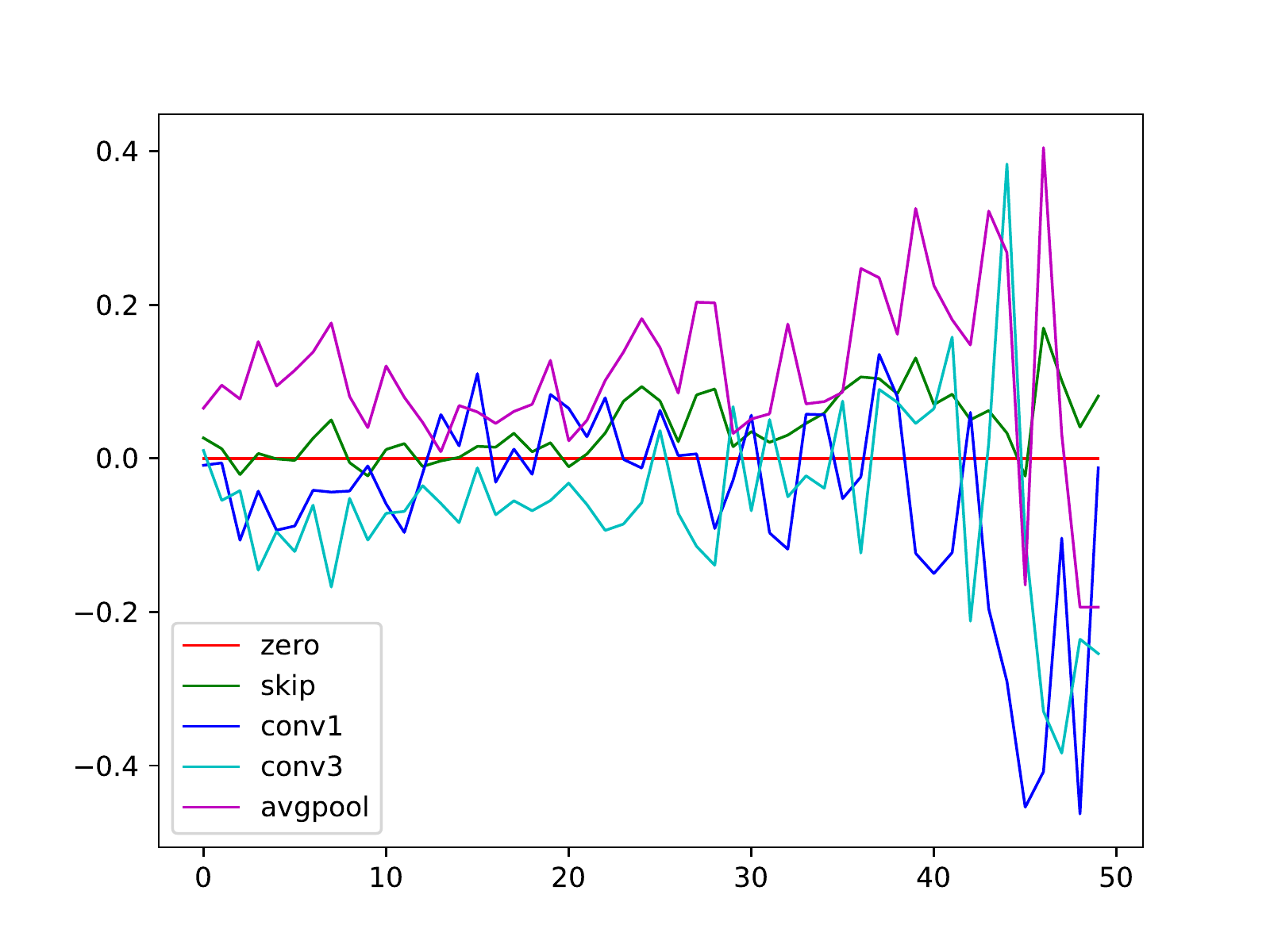}
 \caption{edge.3$\leftarrow$2}
\end{subfigure}
\hfill
\caption{DARTS, the 0th cell.}
\end{figure*}

\begin{figure*}[h]
 \begin{subfigure}[b]{0.3\linewidth}
 \centering
\includegraphics[width=\textwidth]{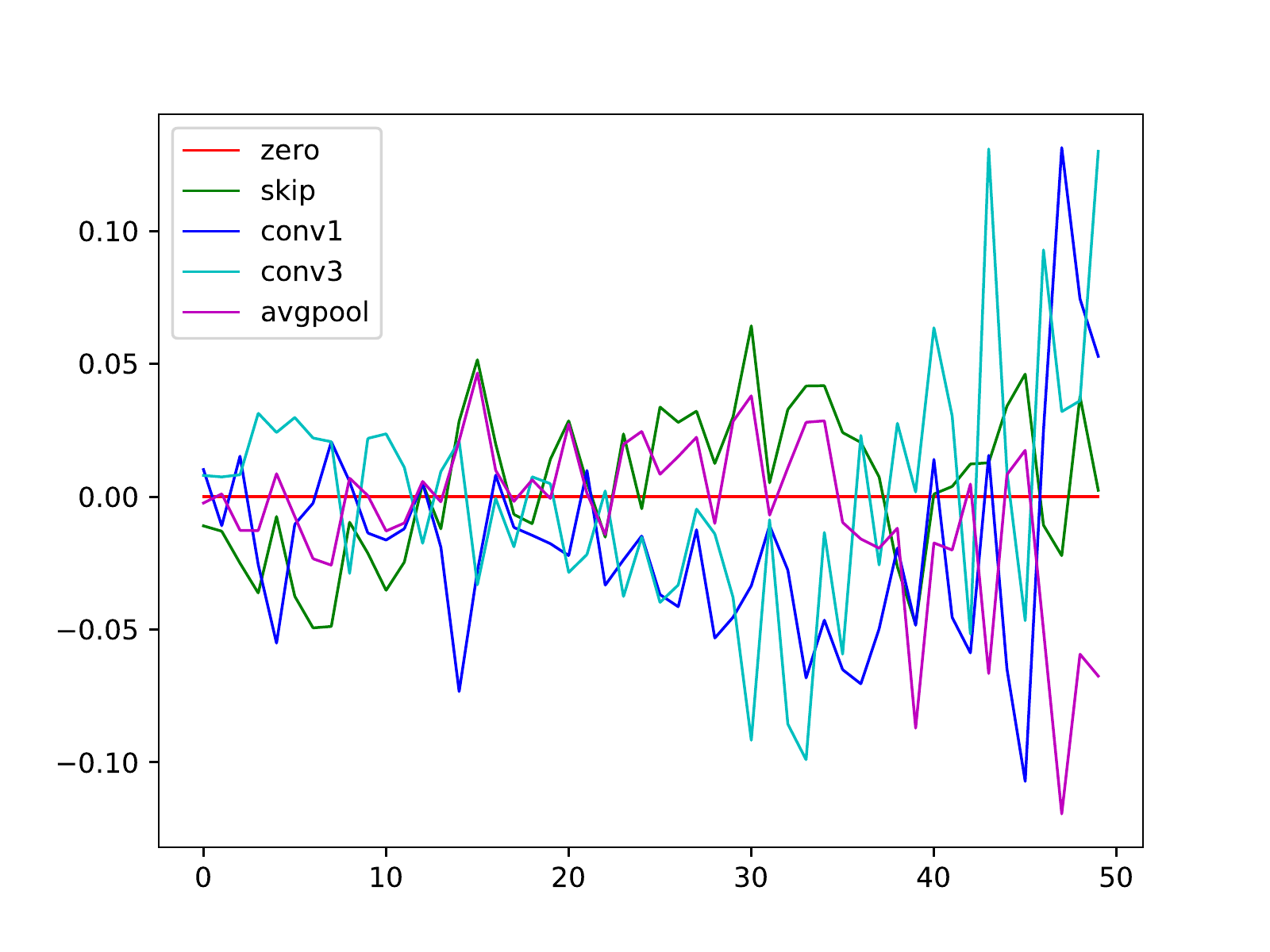}
 \caption{edge.1$\leftarrow$0}
\end{subfigure}
\hfill
 \begin{subfigure}[b]{0.3\linewidth}
 \centering
\includegraphics[width=\textwidth]{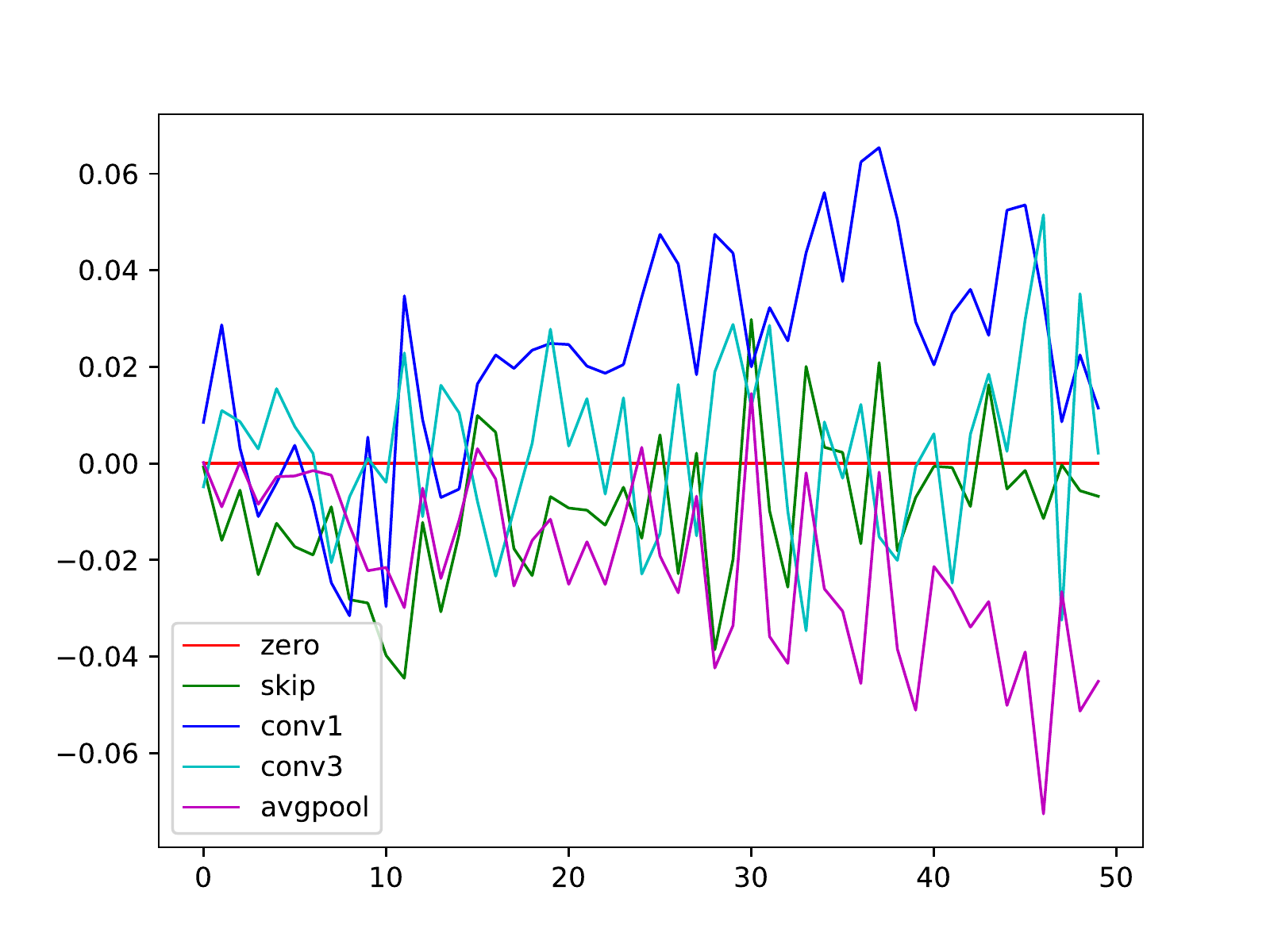}
 \caption{edge.2$\leftarrow$0}
\end{subfigure}
\hfill
 \begin{subfigure}[b]{0.3\linewidth}
 \centering
\includegraphics[width=\textwidth]{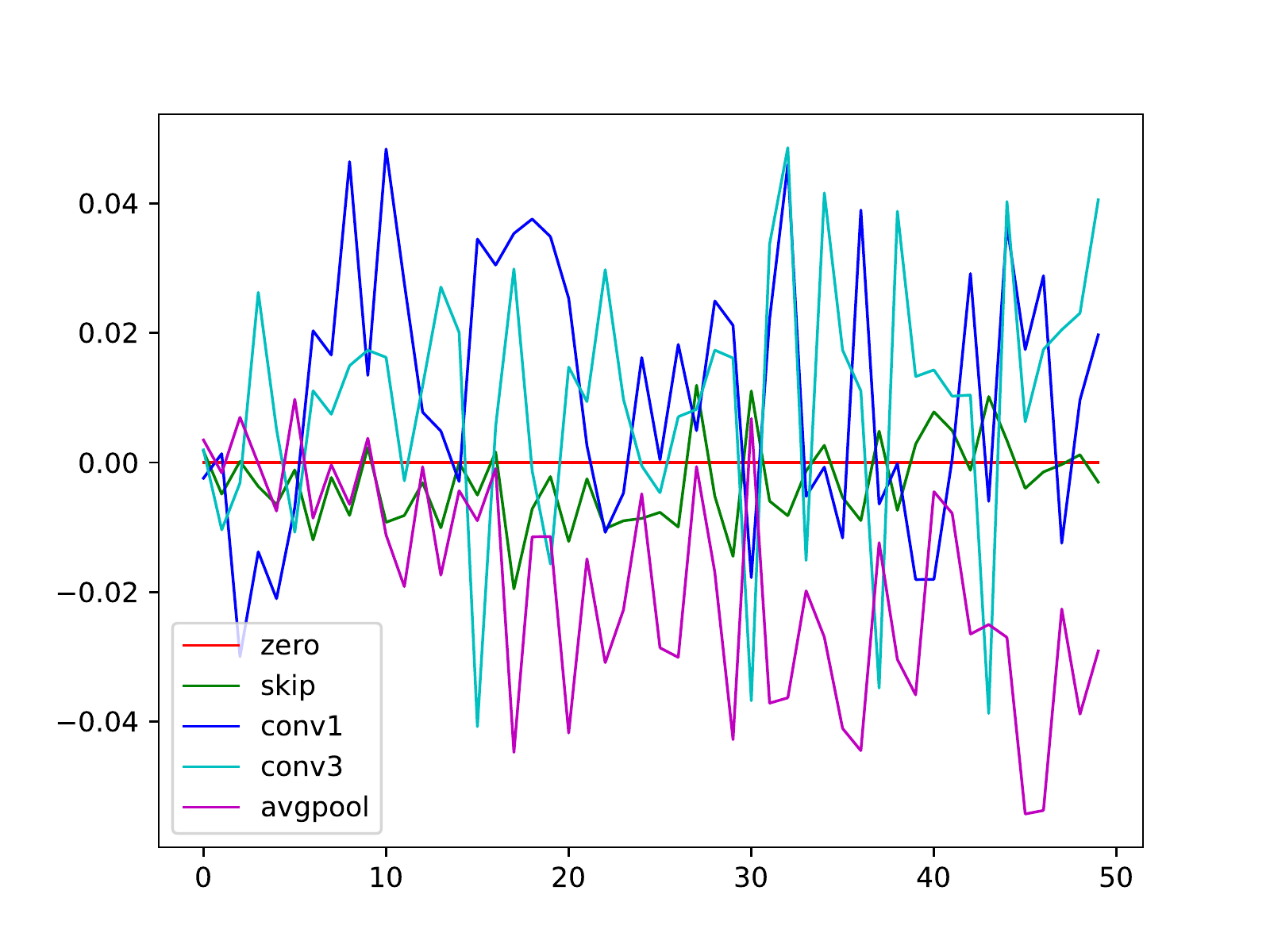}
 \caption{edge.2$\leftarrow$1}
\end{subfigure}
\hfill
\quad
 \begin{subfigure}[b]{0.3\linewidth}
 \centering
\includegraphics[width=\textwidth]{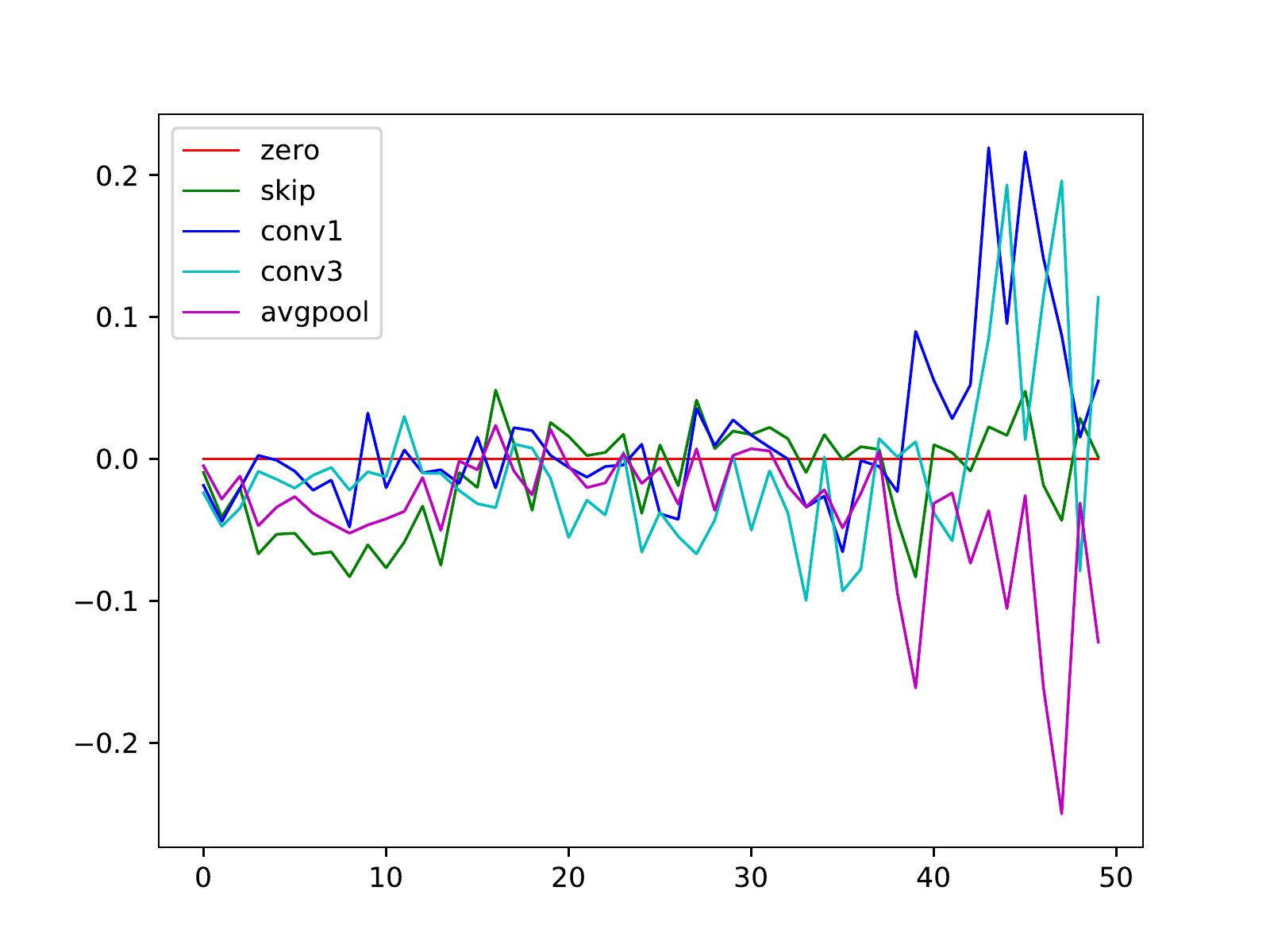}
 \caption{edge.3$\leftarrow$0}
\end{subfigure}
\hfill
 \begin{subfigure}[b]{0.3\linewidth}
 \centering
\includegraphics[width=\textwidth]{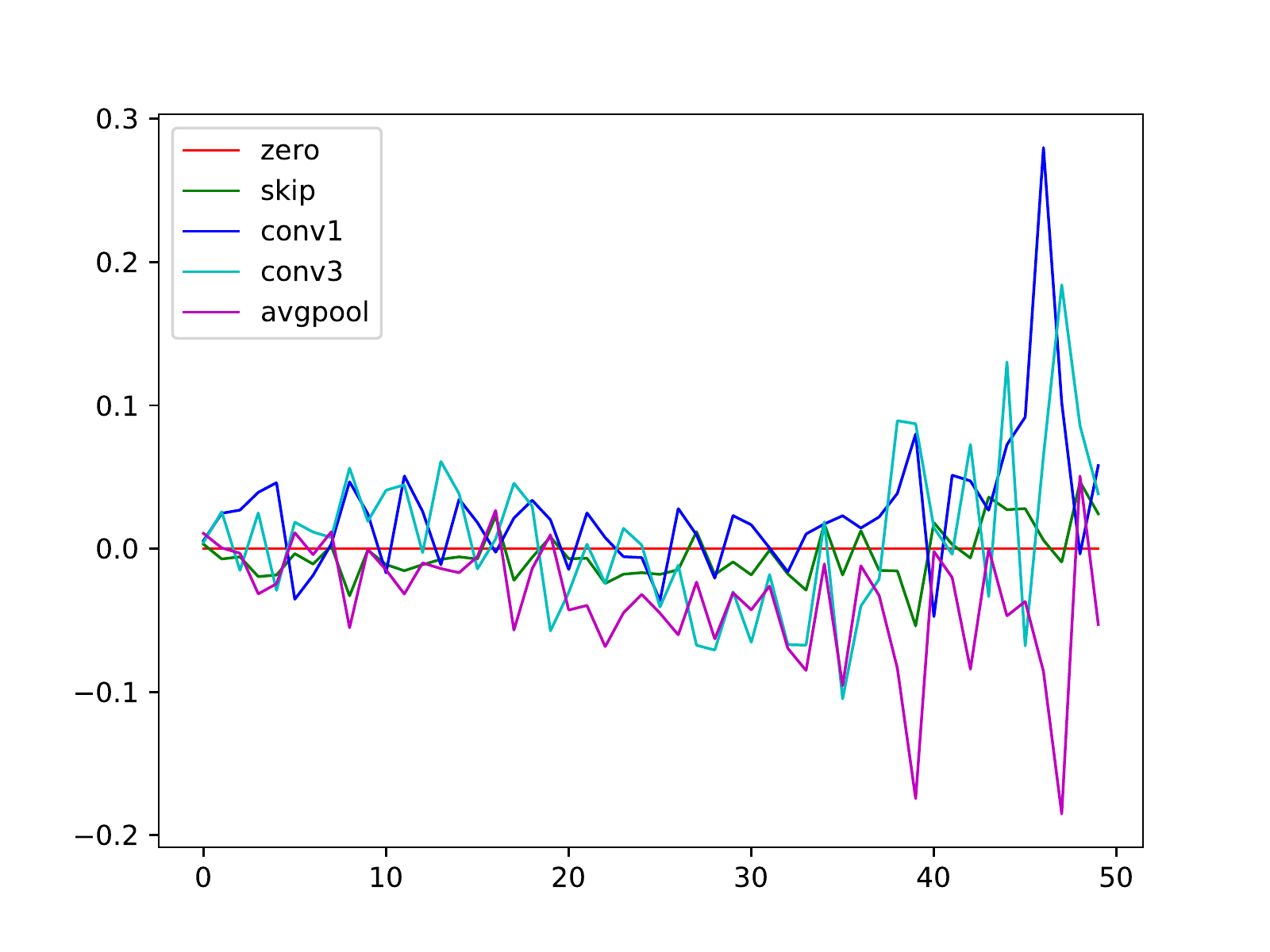}
 \caption{edge.3$\leftarrow$1}
\end{subfigure}
\hfill
 \begin{subfigure}[b]{0.3\linewidth}
 \centering
\includegraphics[width=\textwidth]{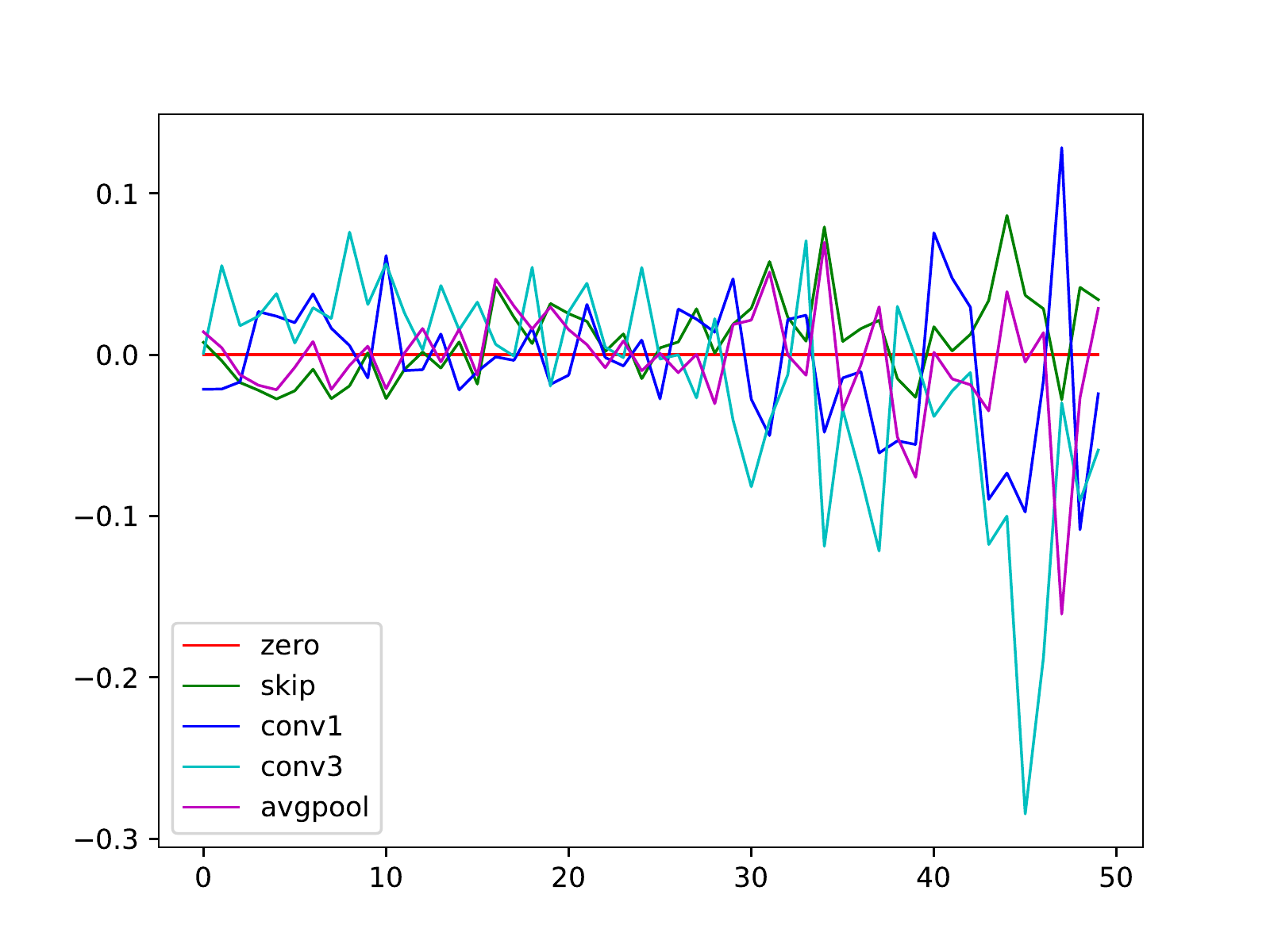}
 \caption{edge.3$\leftarrow$2}
\end{subfigure}
\hfill
\caption{DARTS, the 8th cell.}
\end{figure*}

\begin{figure*}[h]
 \begin{subfigure}[b]{0.3\linewidth}
 \centering
\includegraphics[width=\textwidth]{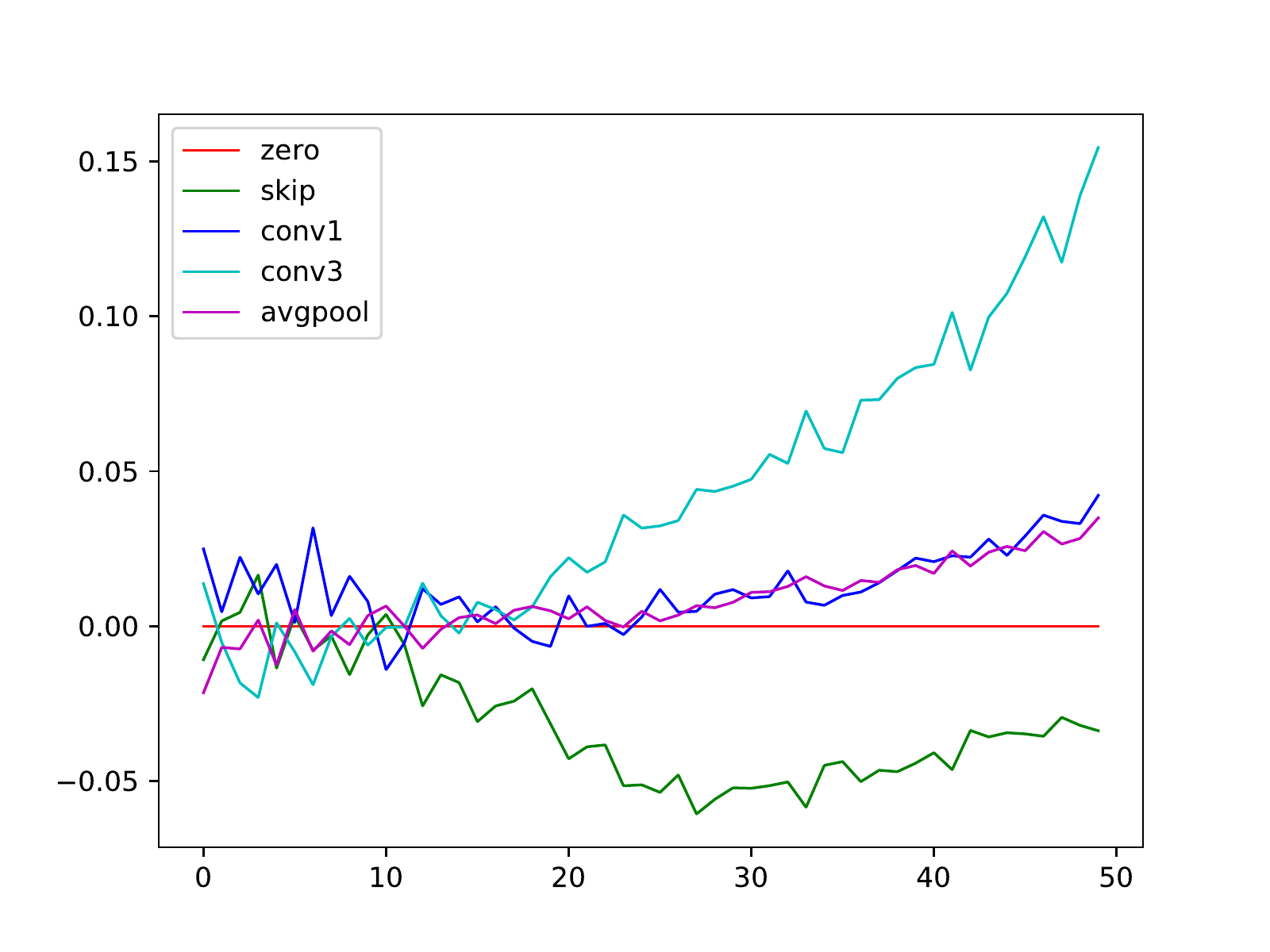}
 \caption{edge.1$\leftarrow$0}
\end{subfigure}
\hfill
 \begin{subfigure}[b]{0.3\linewidth}
 \centering
\includegraphics[width=\textwidth]{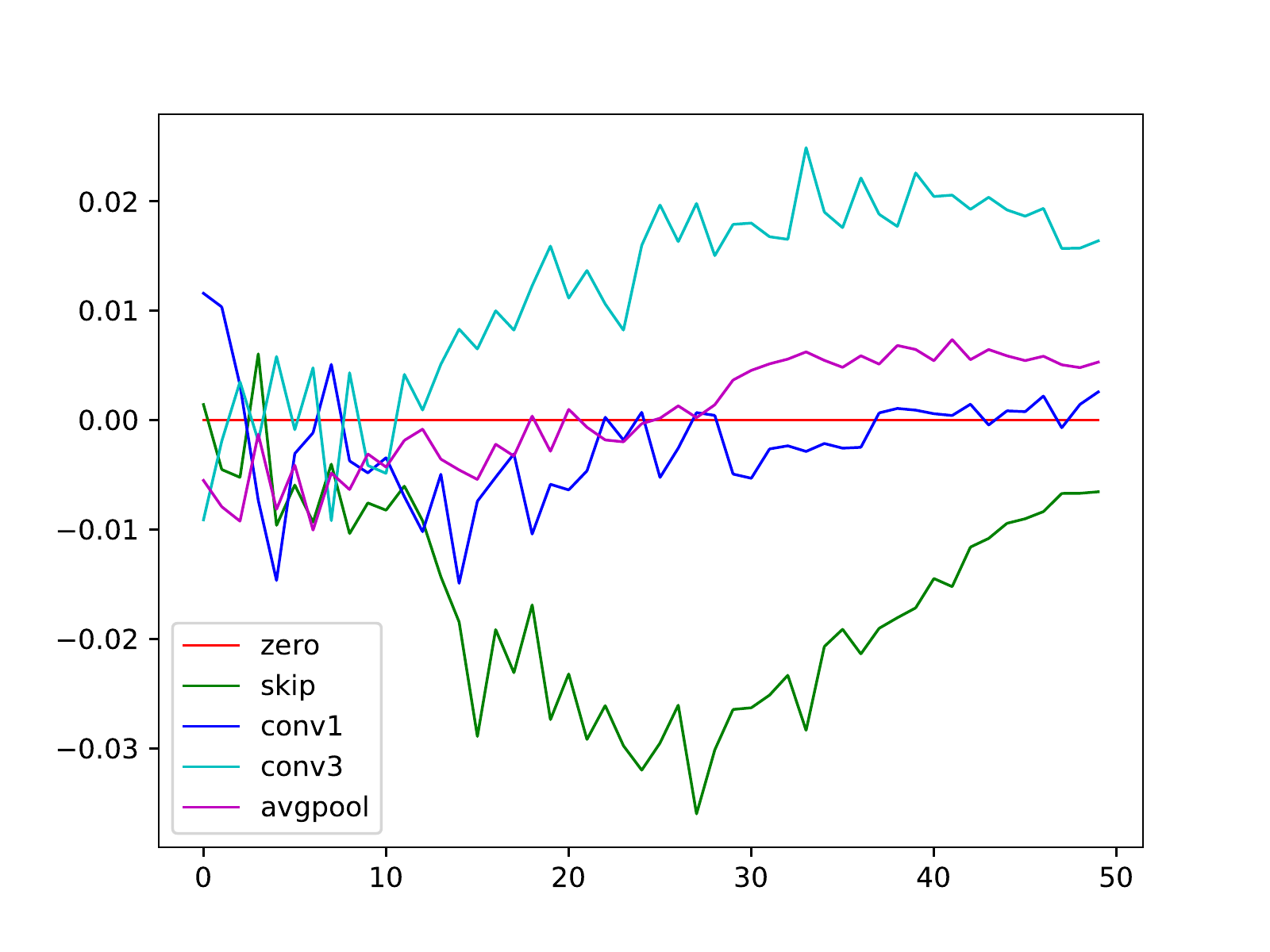}
 \caption{edge.2$\leftarrow$0}
\end{subfigure}
\hfill
 \begin{subfigure}[b]{0.3\linewidth}
 \centering
\includegraphics[width=\textwidth]{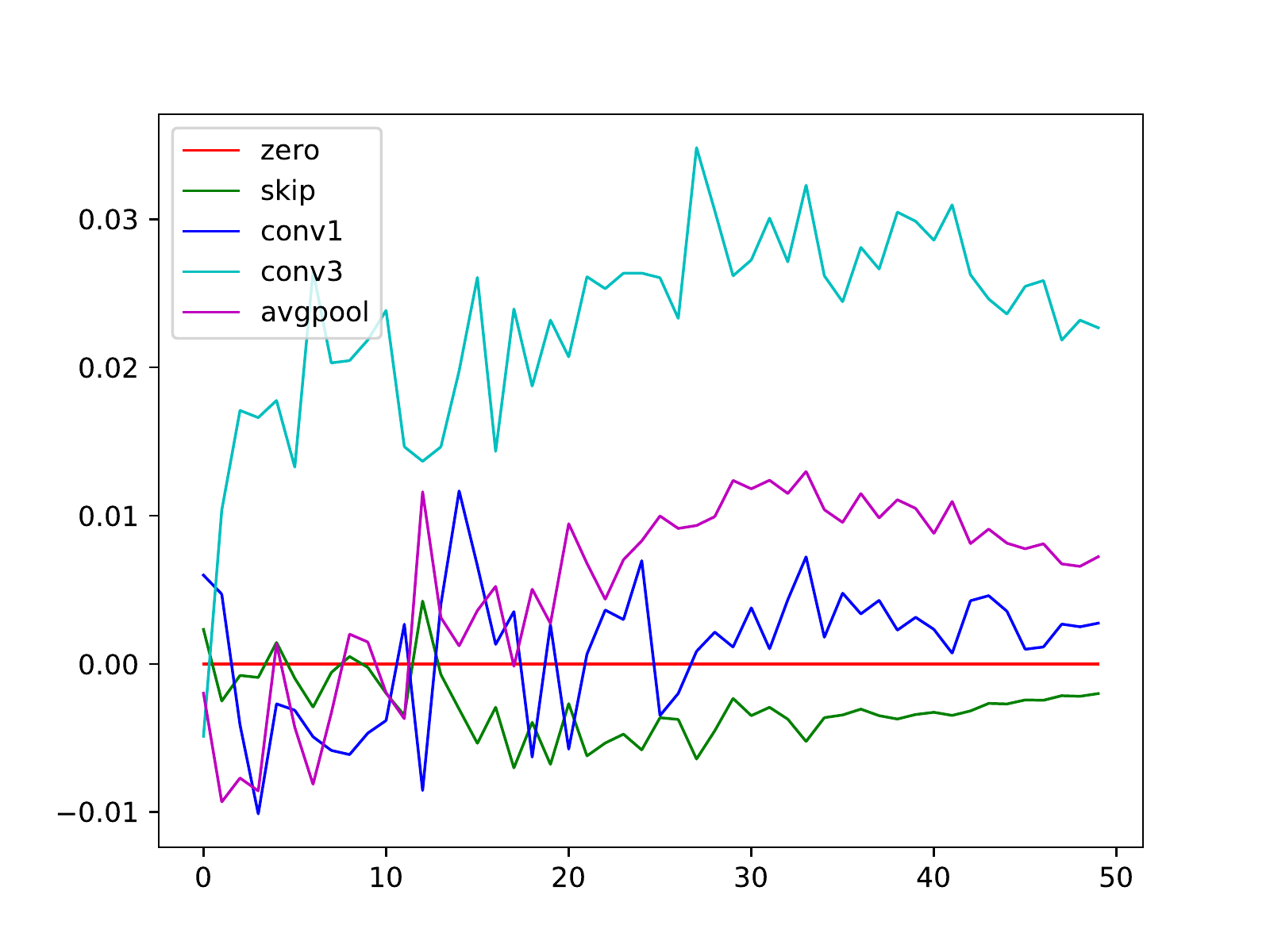}
 \caption{edge.2$\leftarrow$1}
\end{subfigure}
\hfill
\quad
\begin{subfigure}[b]{0.3\linewidth}
 \centering
\includegraphics[width=\textwidth]{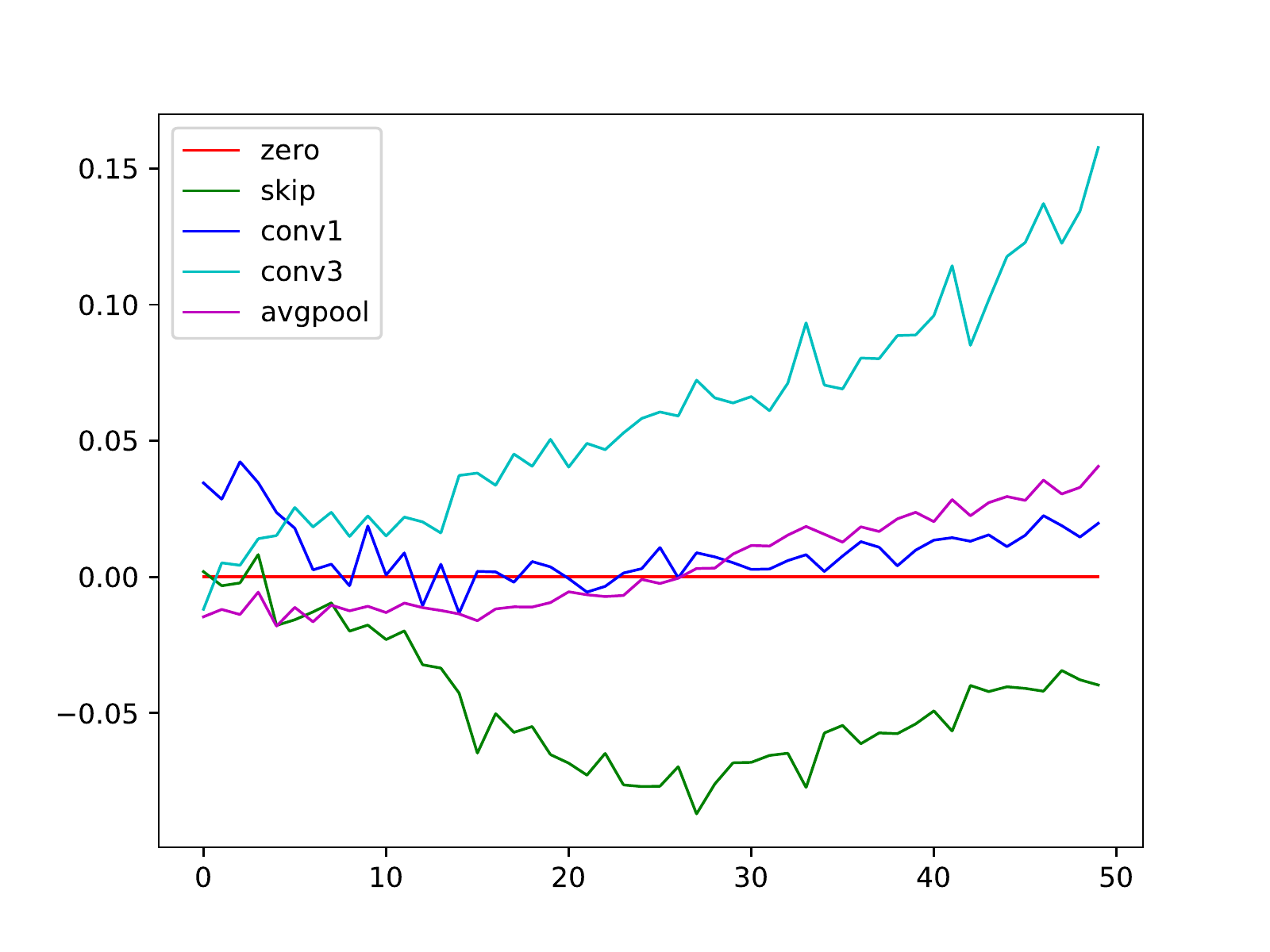}
 \caption{edge.3$\leftarrow$0}
\end{subfigure}
\hfill
 \begin{subfigure}[b]{0.3\linewidth}
 \centering
\includegraphics[width=\textwidth]{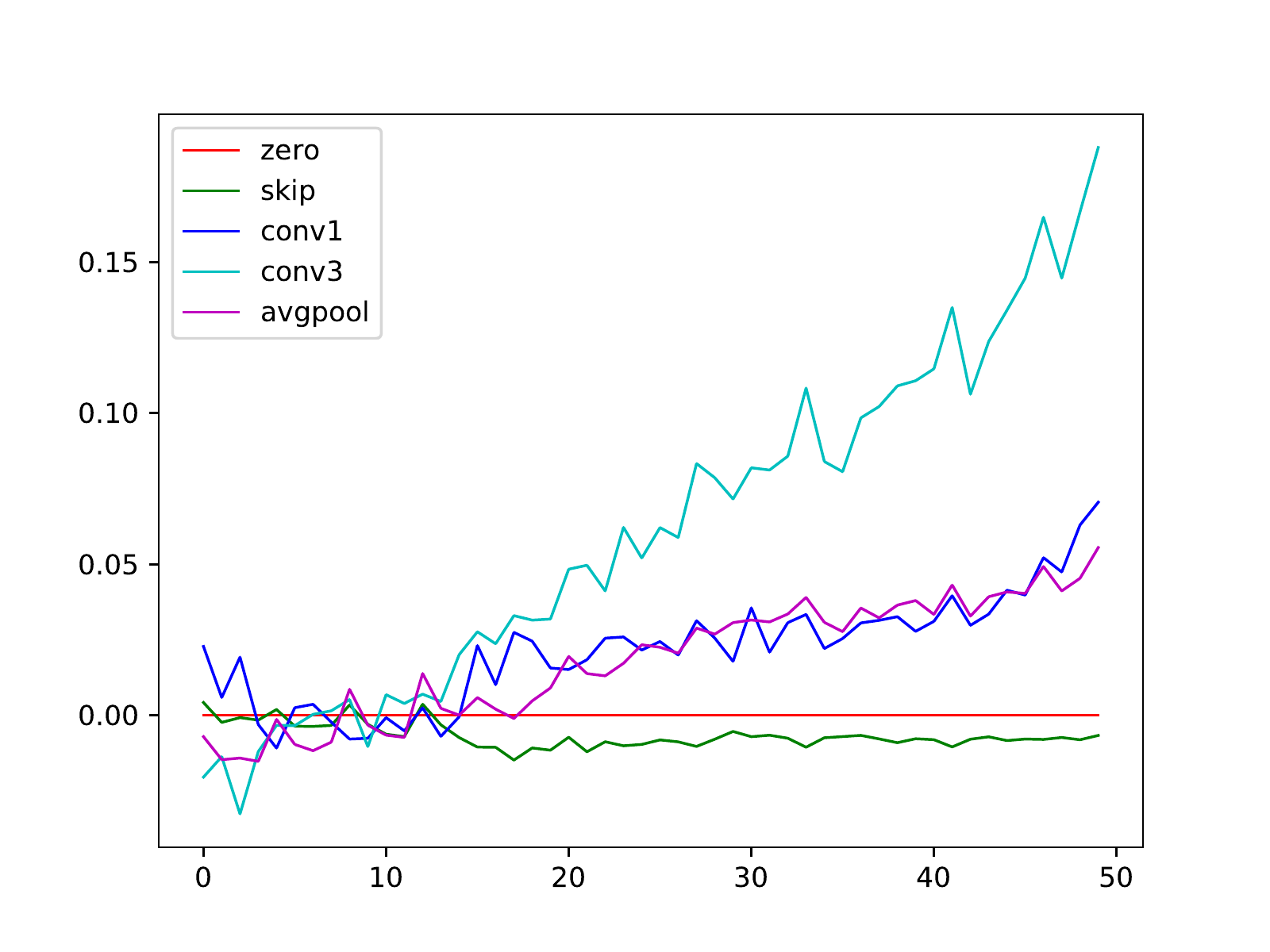}
 \caption{edge.3$\leftarrow$1}
\end{subfigure}
\hfill
 \begin{subfigure}[b]{0.3\linewidth}
 \centering
\includegraphics[width=\textwidth]{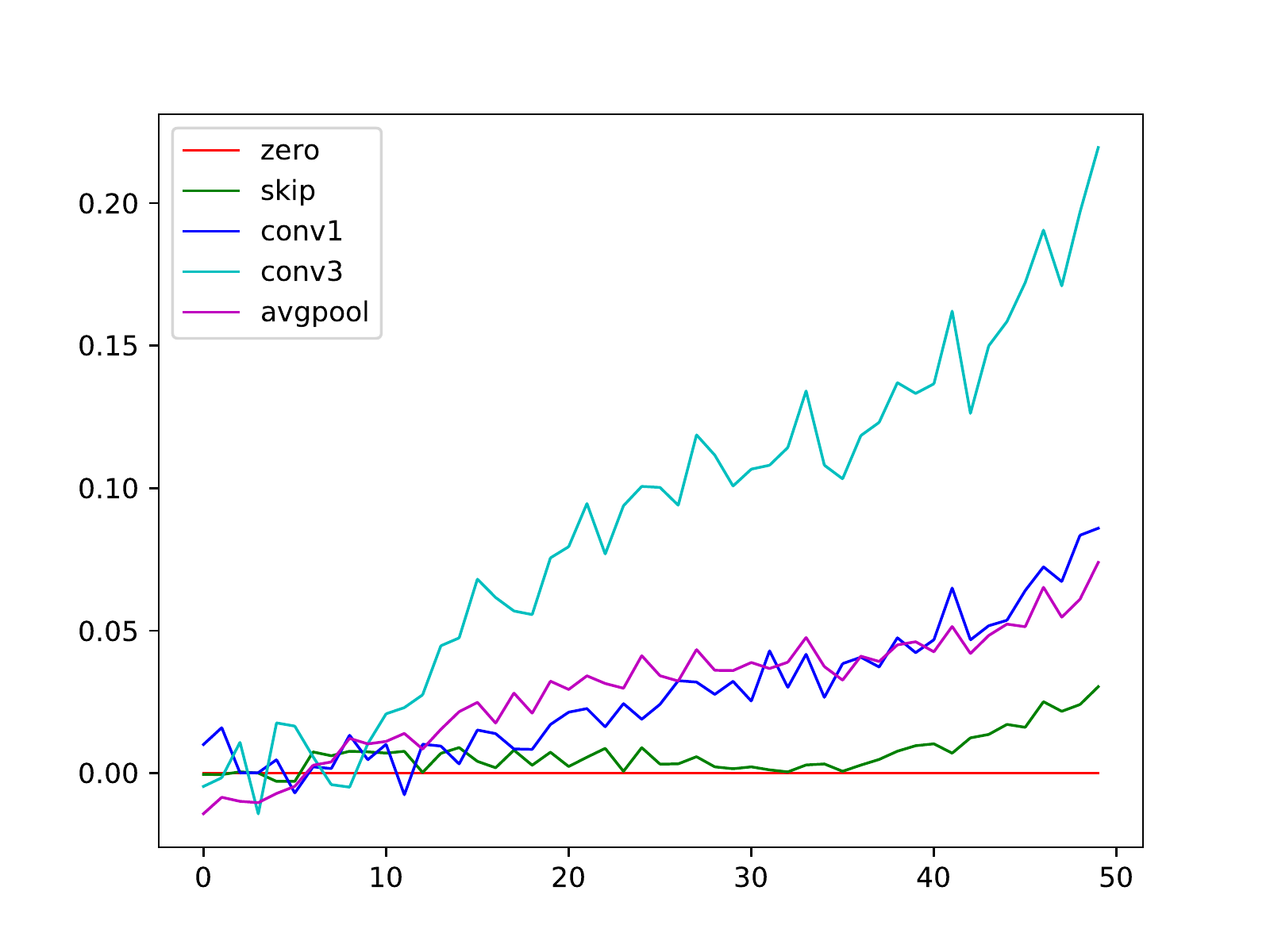}
 \caption{edge.3$\leftarrow$2}
\end{subfigure}
\hfill
\caption{DARTS, the 16th cell.}
\end{figure*}
\begin{figure*}[h]
\centering
 \begin{subfigure}[b]{0.3\linewidth}
 \centering
\includegraphics[width=\textwidth]{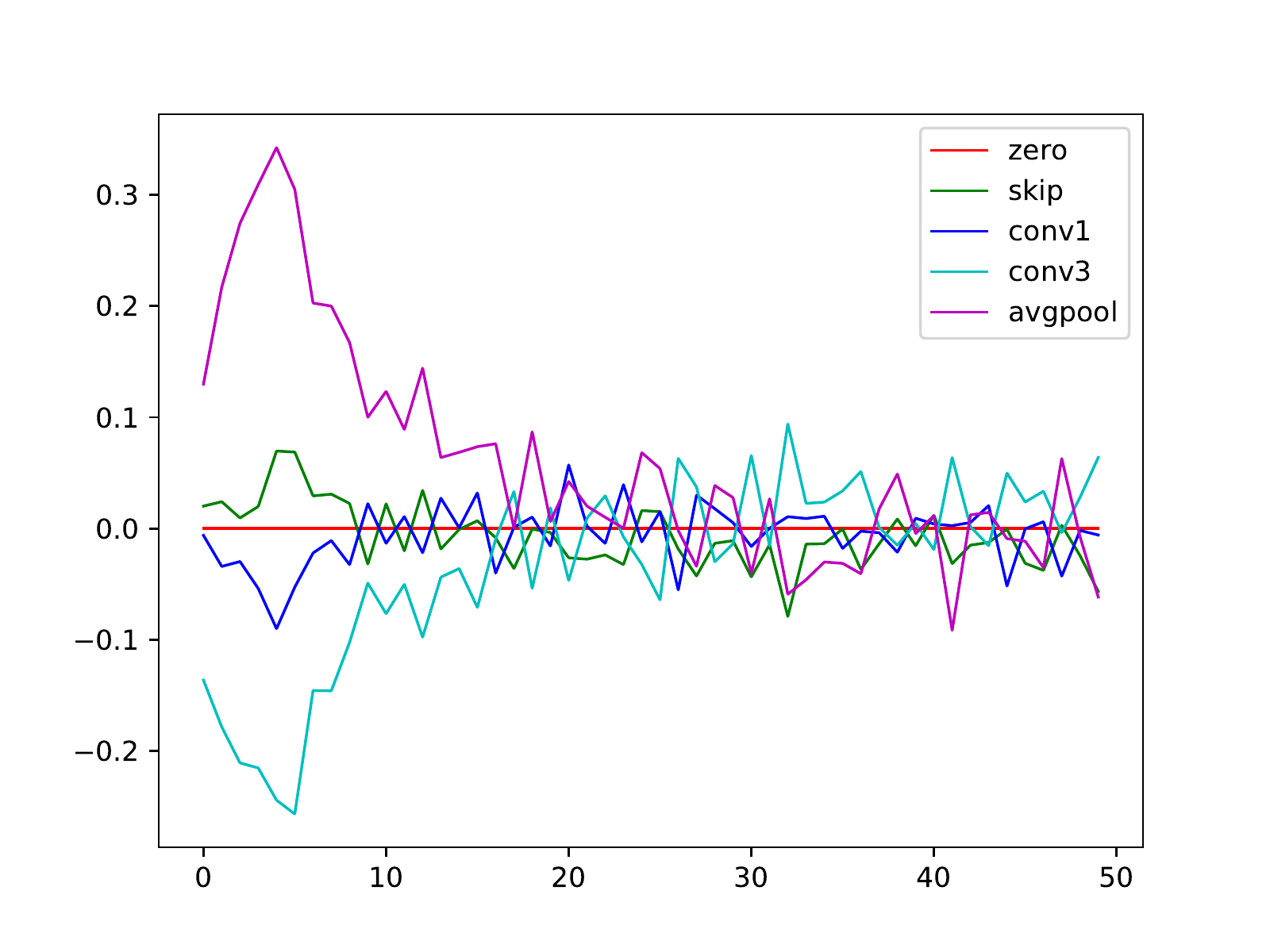}
 \caption{edge.1$\leftarrow$0}
\end{subfigure}
\hfill
 \begin{subfigure}[b]{0.3\linewidth}
 \centering
\includegraphics[width=\textwidth]{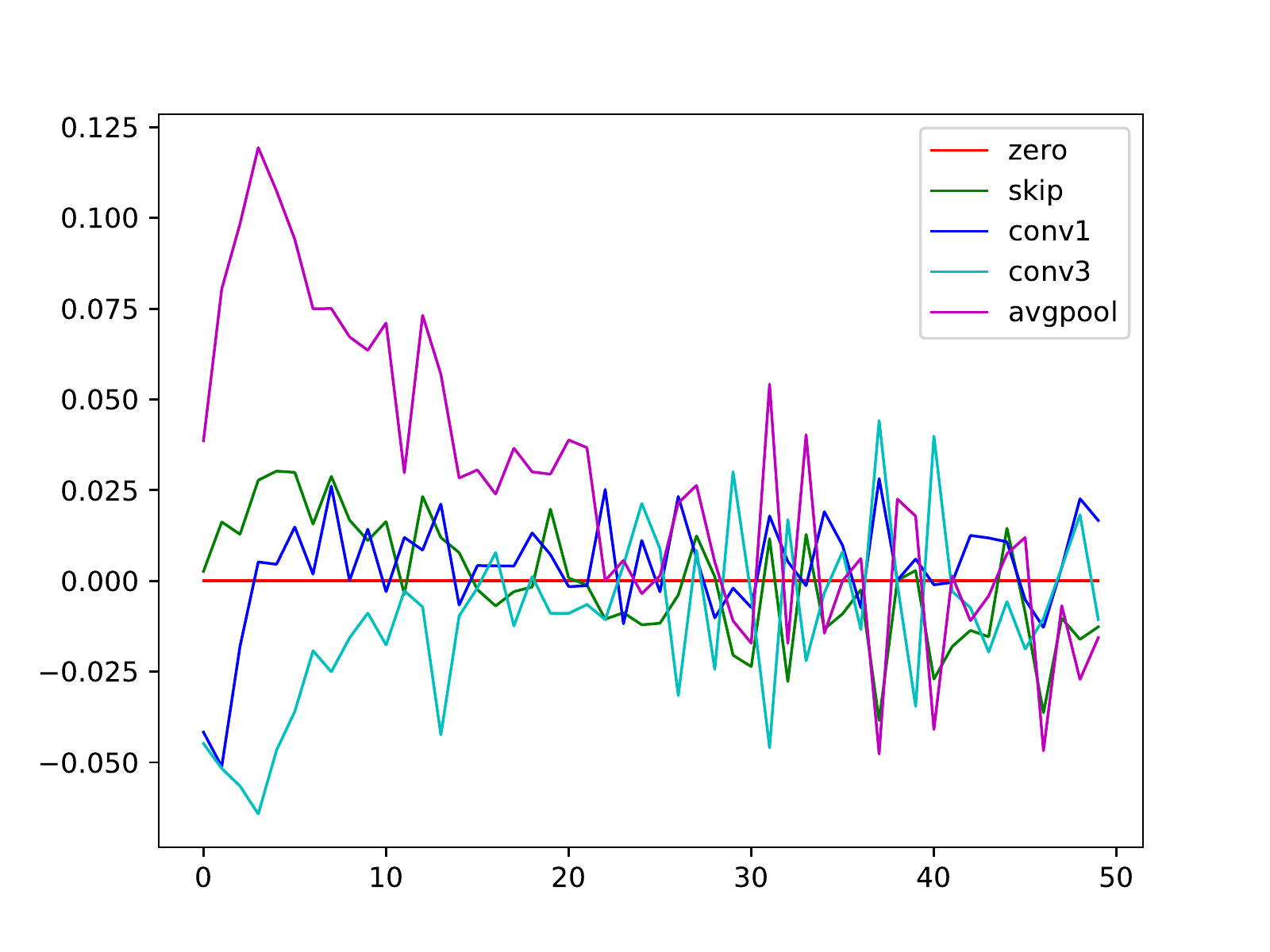}
 \caption{edge.2$\leftarrow$0}
\end{subfigure}
\hfill
 \begin{subfigure}[b]{0.3\linewidth}
 \centering
\includegraphics[width=\textwidth]{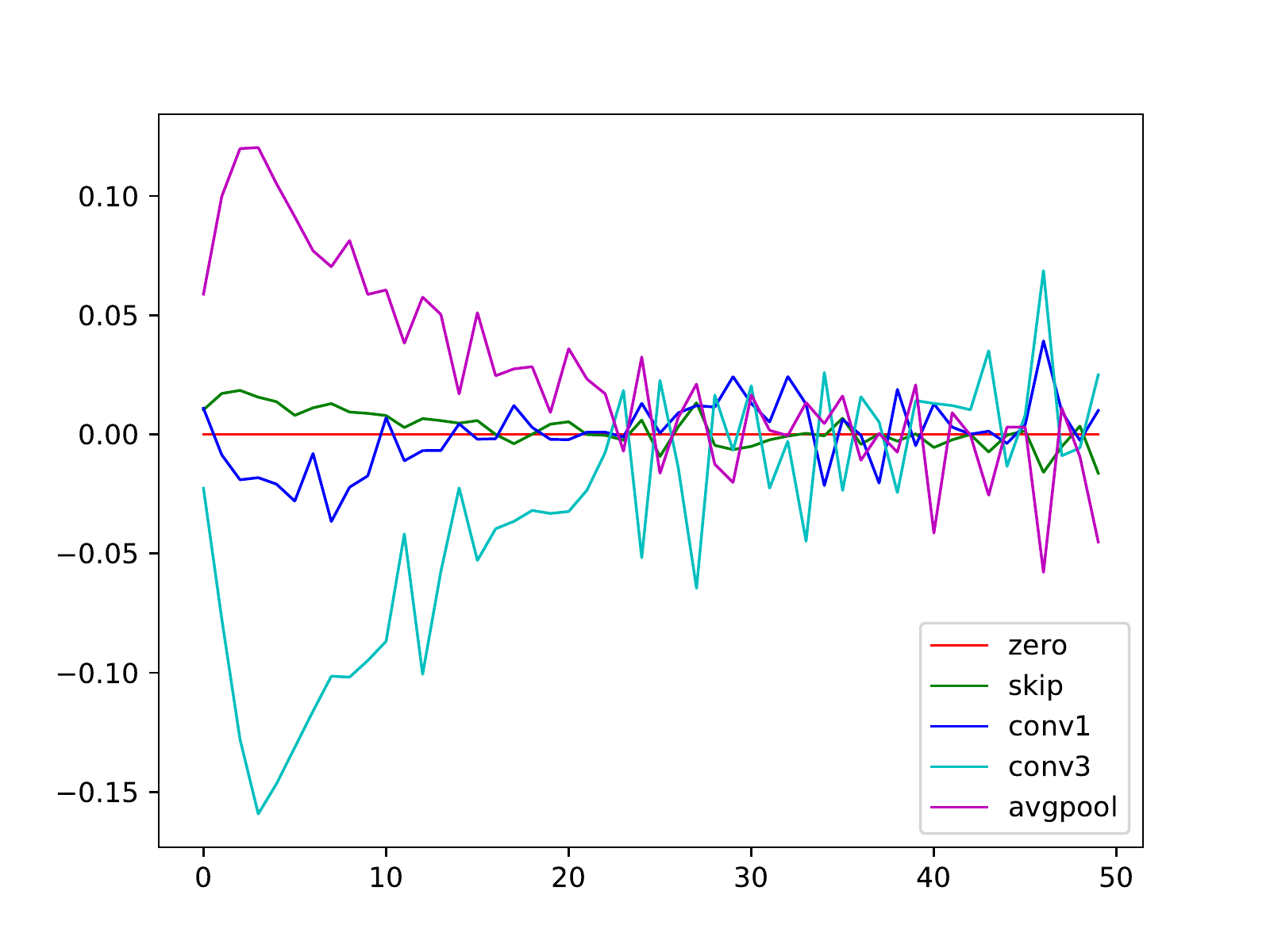}
 \caption{edge.2$\leftarrow$1}
\end{subfigure}
\hfill
\quad
 \begin{subfigure}[b]{0.3\linewidth}
 \centering
\includegraphics[width=\textwidth]{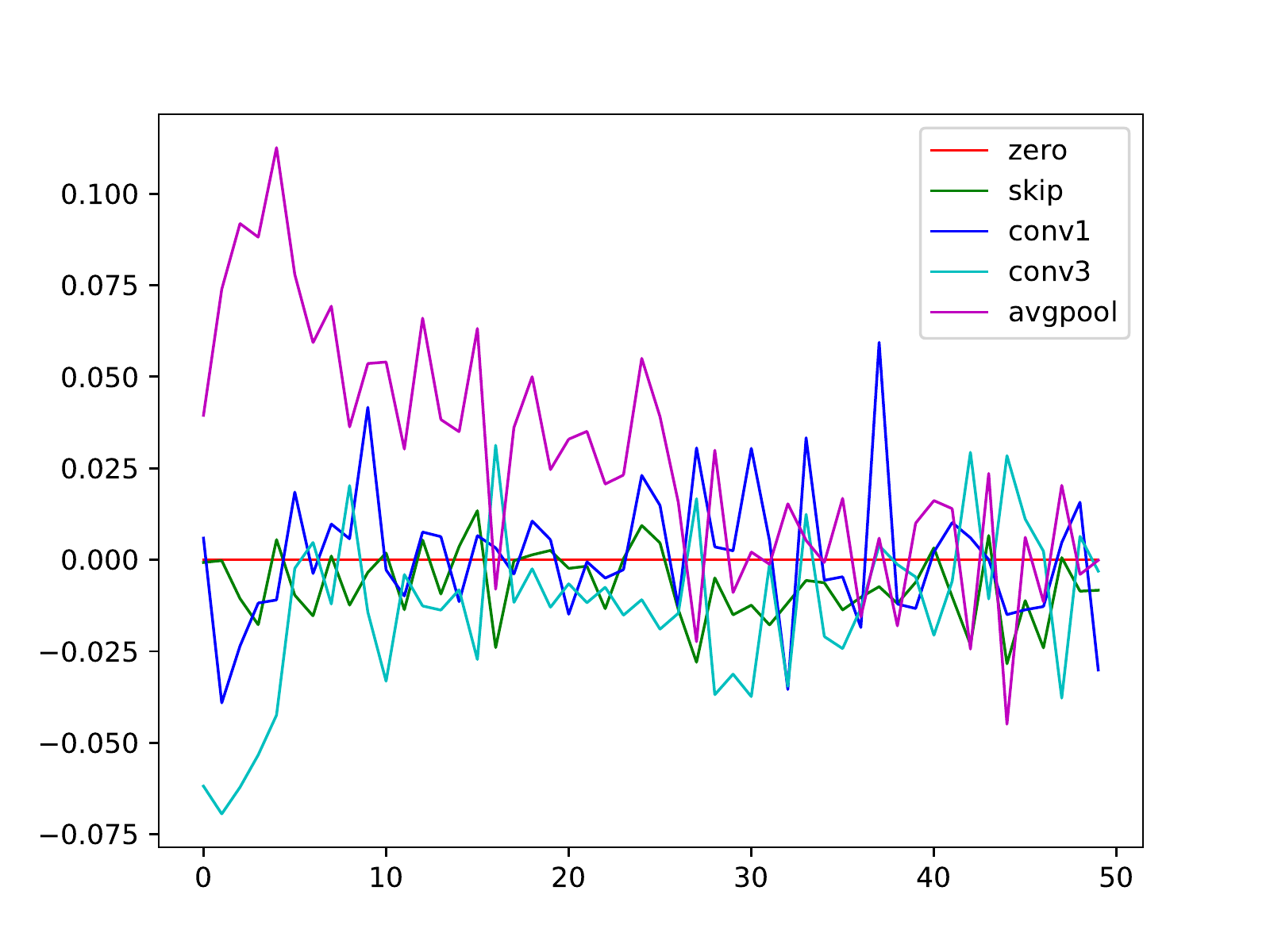}
 \caption{edge.3$\leftarrow$0}
\end{subfigure}
\hfill
 \begin{subfigure}[b]{0.3\linewidth}
 \centering
\includegraphics[width=\textwidth]{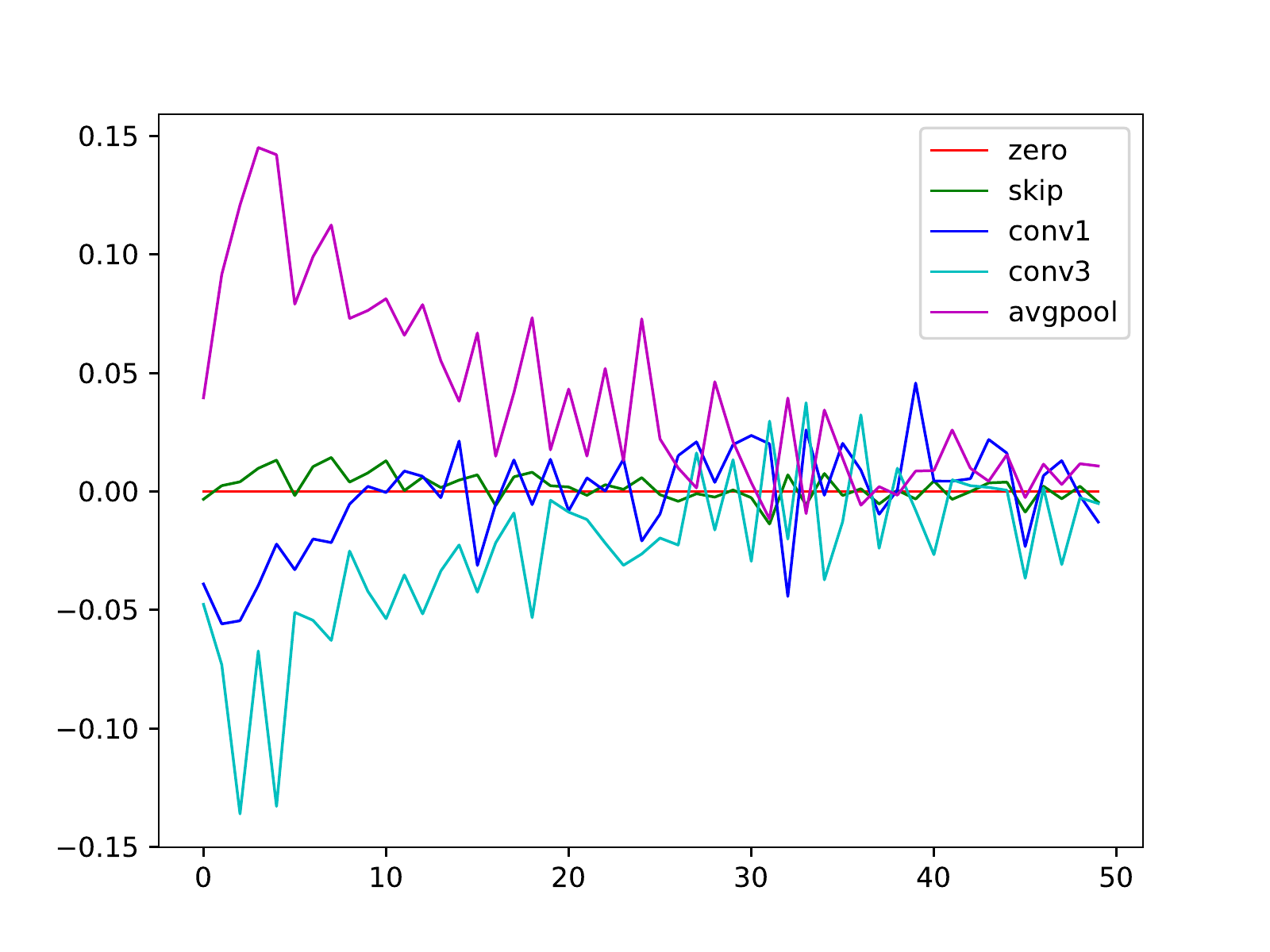}
 \caption{edge.3$\leftarrow$1}
\end{subfigure}
\hfill
 \begin{subfigure}[b]{0.3\linewidth}
 \centering
\includegraphics[width=\textwidth]{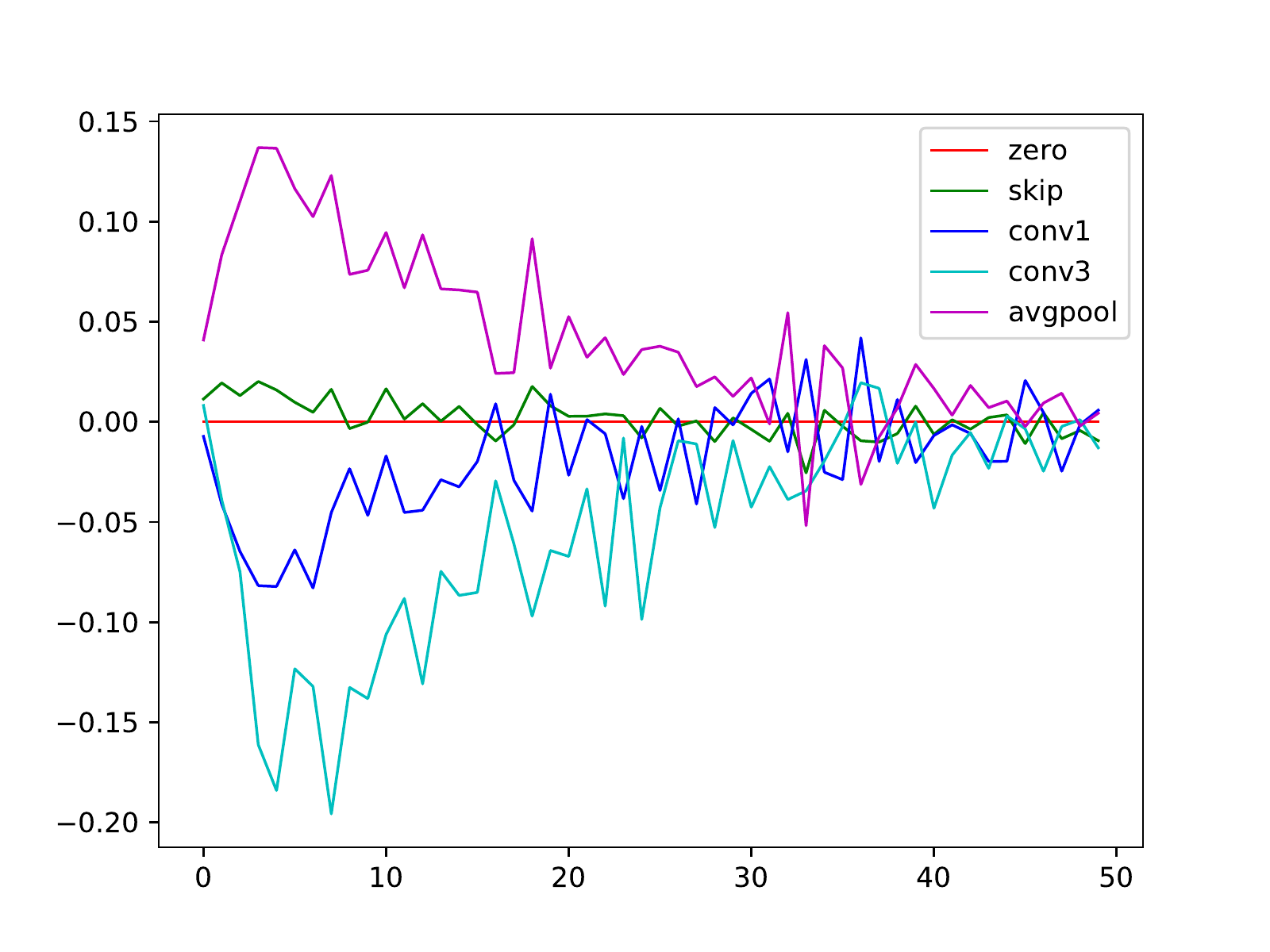}
 \caption{edge.3$\leftarrow$2}
\end{subfigure}
\hfill
\caption{Single-DARTS, the 0th cell.}
\end{figure*}

\begin{figure*}[h]
 \begin{subfigure}[b]{0.3\linewidth}
 \centering
\includegraphics[width=\textwidth]{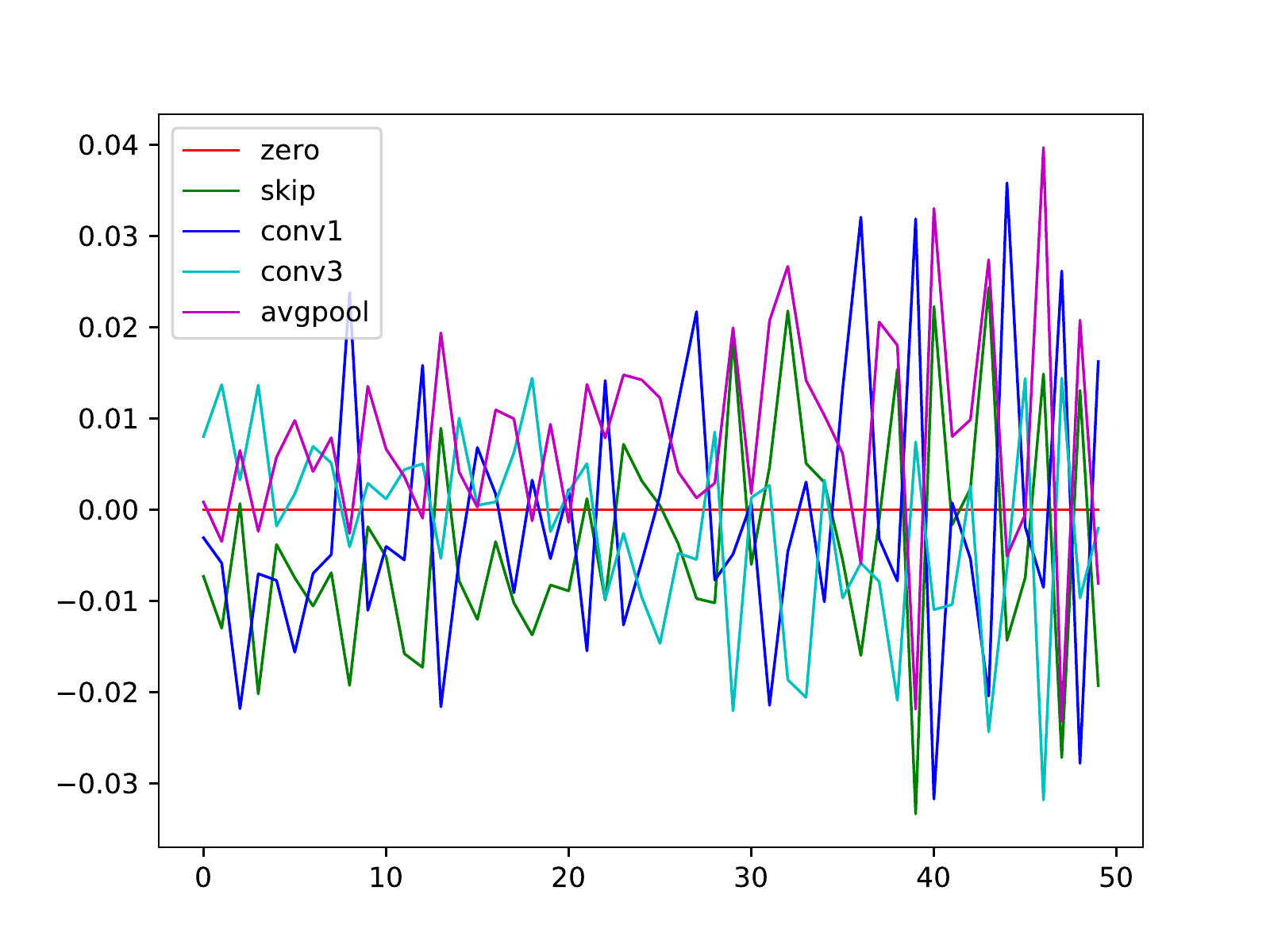}
 \caption{edge.1$\leftarrow$0}
\end{subfigure}
\hfill
 \begin{subfigure}[b]{0.3\linewidth}
 \centering
\includegraphics[width=\textwidth]{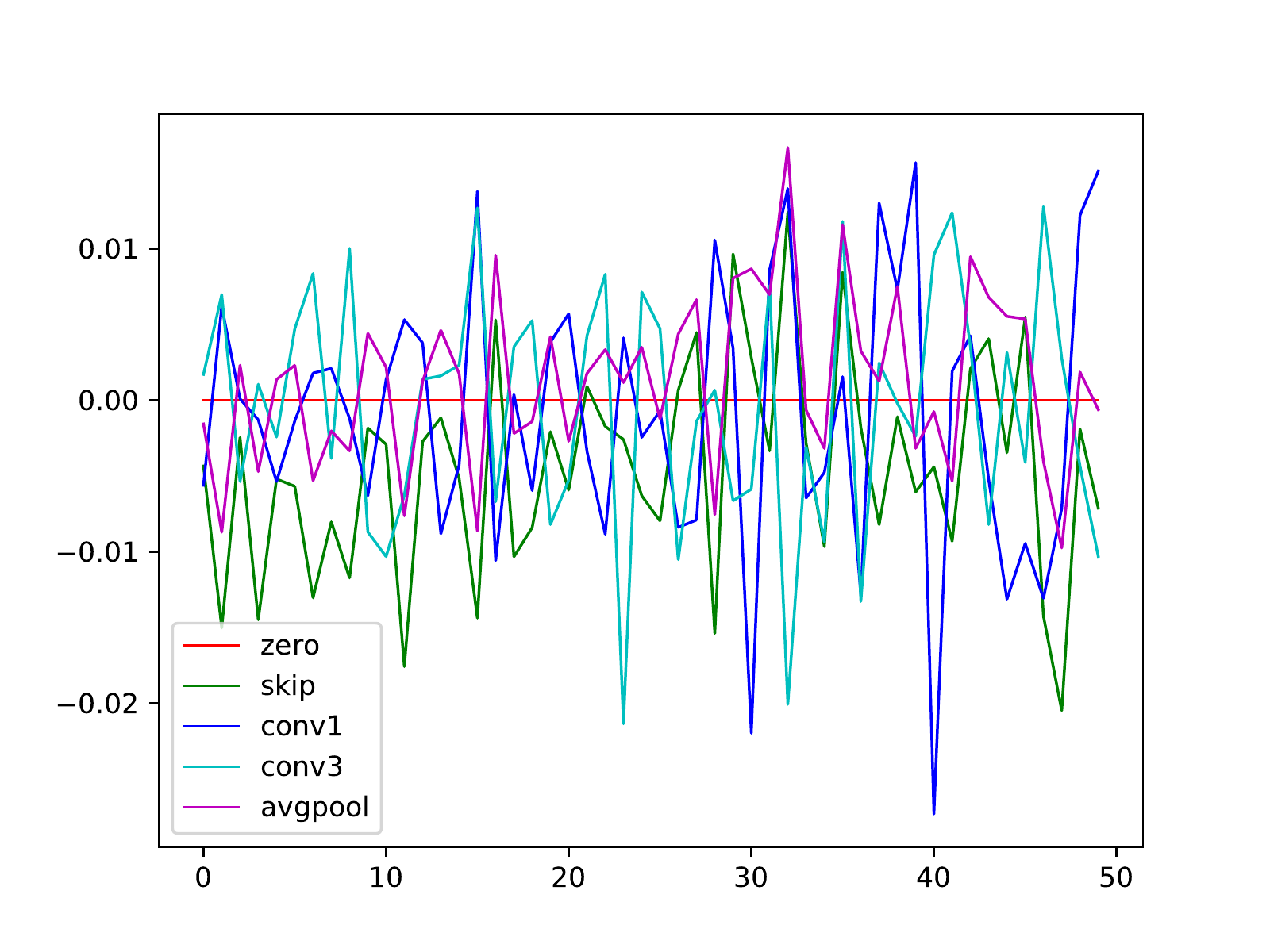}
 \caption{edge.2$\leftarrow$0}
\end{subfigure}
\hfill
 \begin{subfigure}[b]{0.3\linewidth}
 \centering
\includegraphics[width=\textwidth]{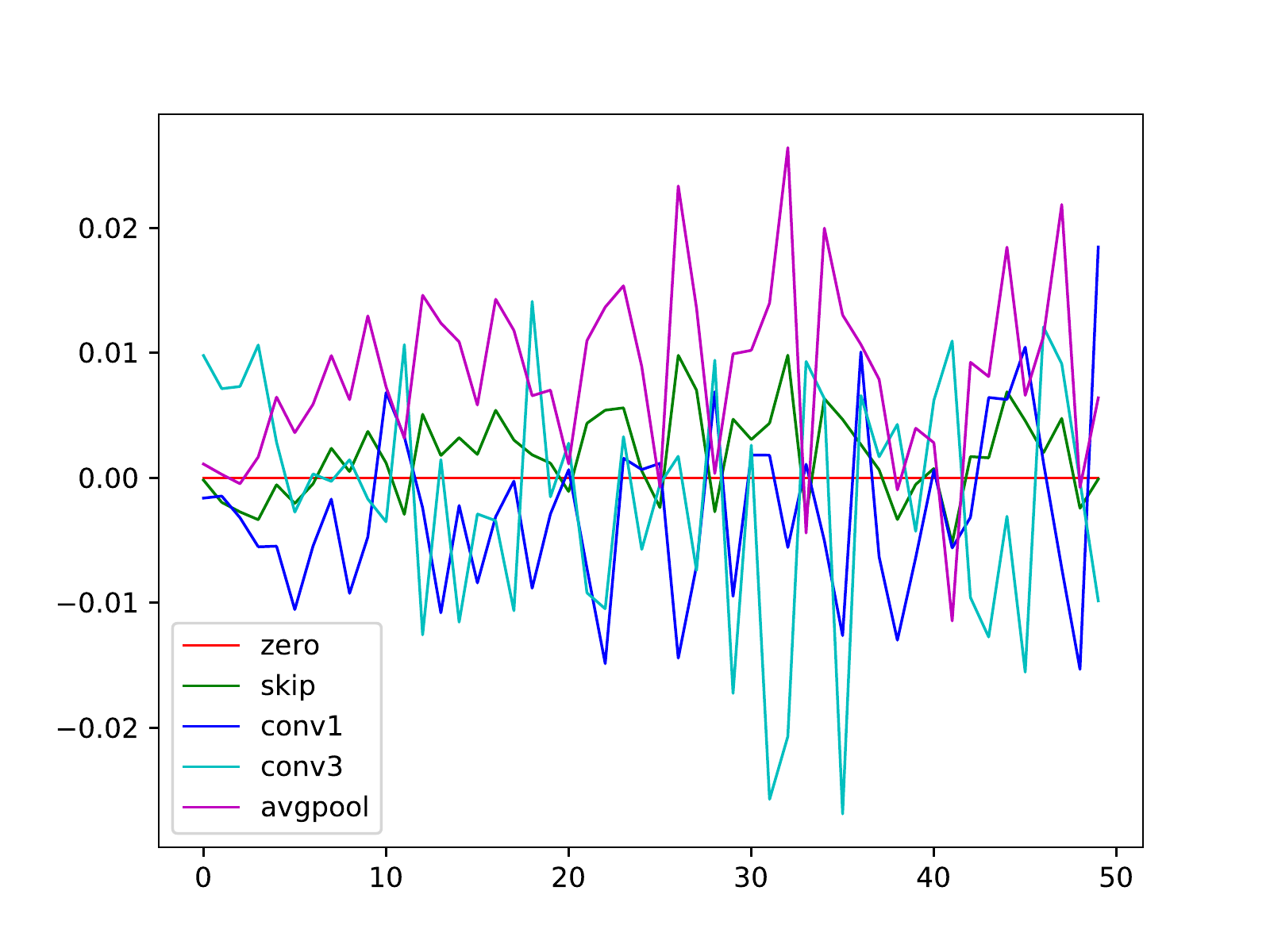}
 \caption{edge.2$\leftarrow$1}
\end{subfigure}
\hfill
\quad
 \begin{subfigure}[b]{0.3\linewidth}
 \centering
\includegraphics[width=\textwidth]{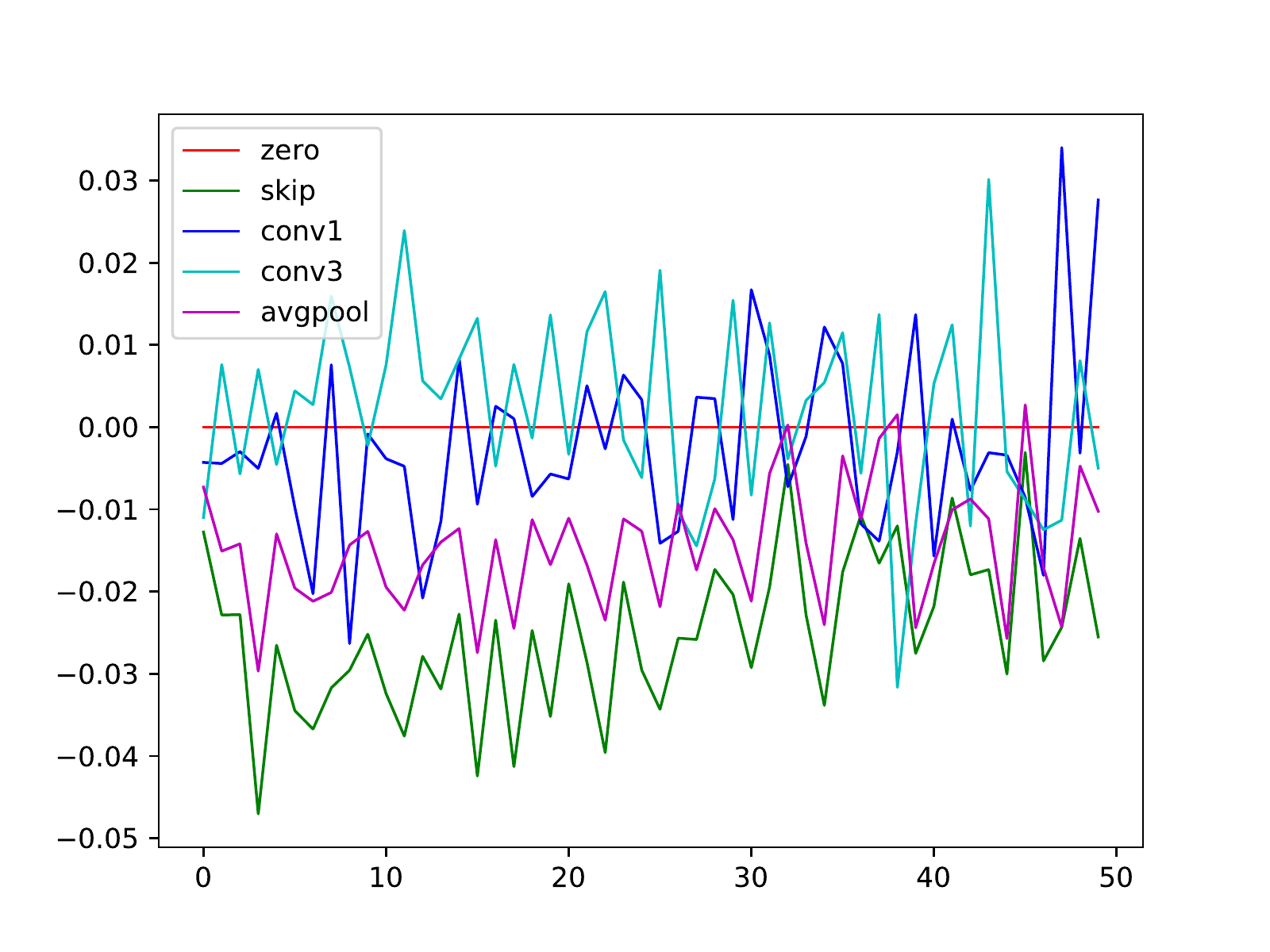}
 \caption{edge.3$\leftarrow$0}
\end{subfigure}
\hfill
 \begin{subfigure}[b]{0.3\linewidth}
 \centering
\includegraphics[width=\textwidth]{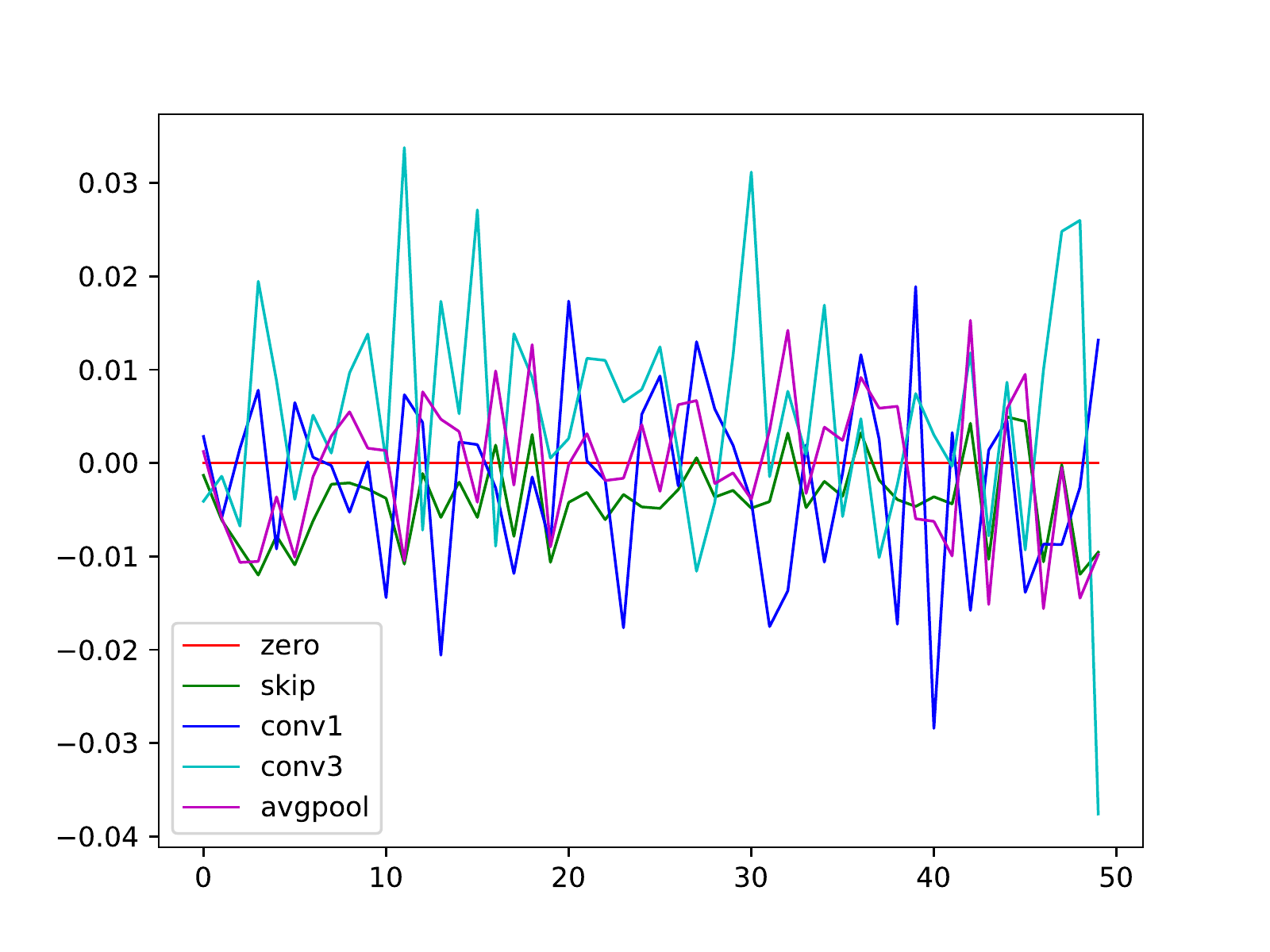}
 \caption{edge.3$\leftarrow$1}
\end{subfigure}
\hfill
 \begin{subfigure}[b]{0.3\linewidth}
 \centering
\includegraphics[width=\textwidth]{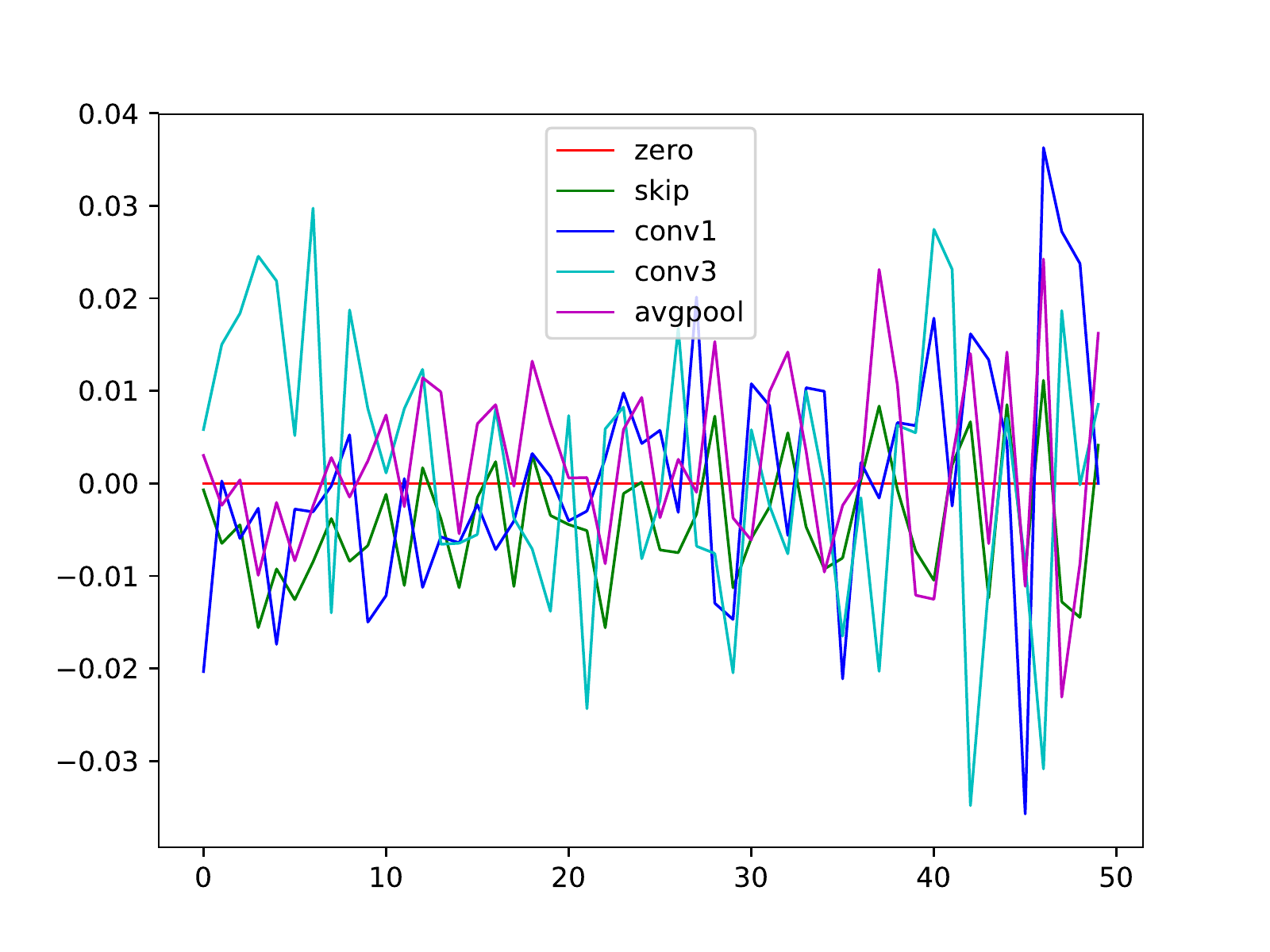}
 \caption{edge.3$\leftarrow$2}
\end{subfigure}
\hfill
\caption{Single-DARTS, the 8th cell.}
\end{figure*}

\begin{figure*}[h]
 \begin{subfigure}[b]{0.3\linewidth}
 \centering
\includegraphics[width=\textwidth]{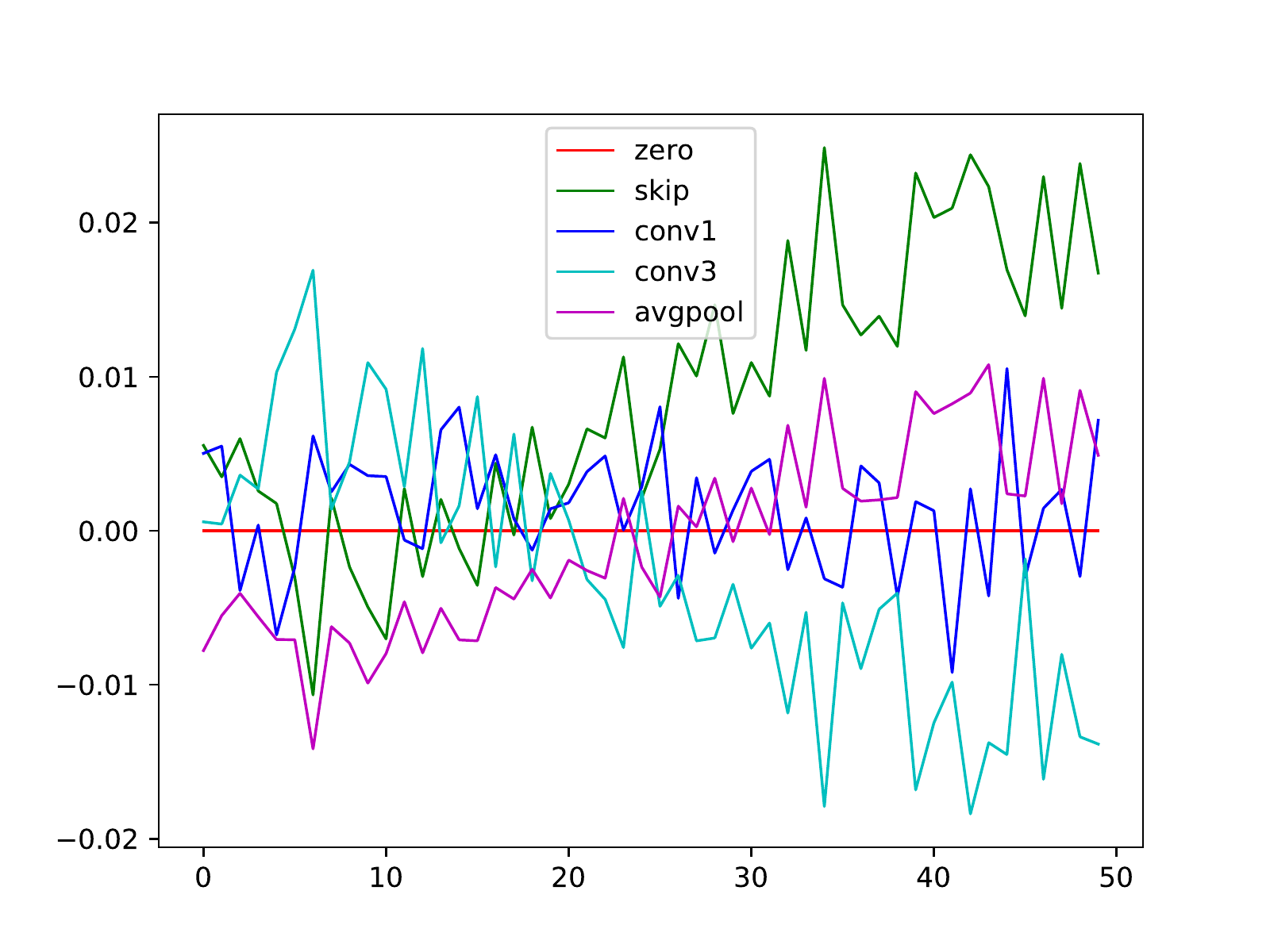}
 \caption{edge.1$\leftarrow$0}
\end{subfigure}
\hfill
 \begin{subfigure}[b]{0.3\linewidth}
 \centering
\includegraphics[width=\textwidth]{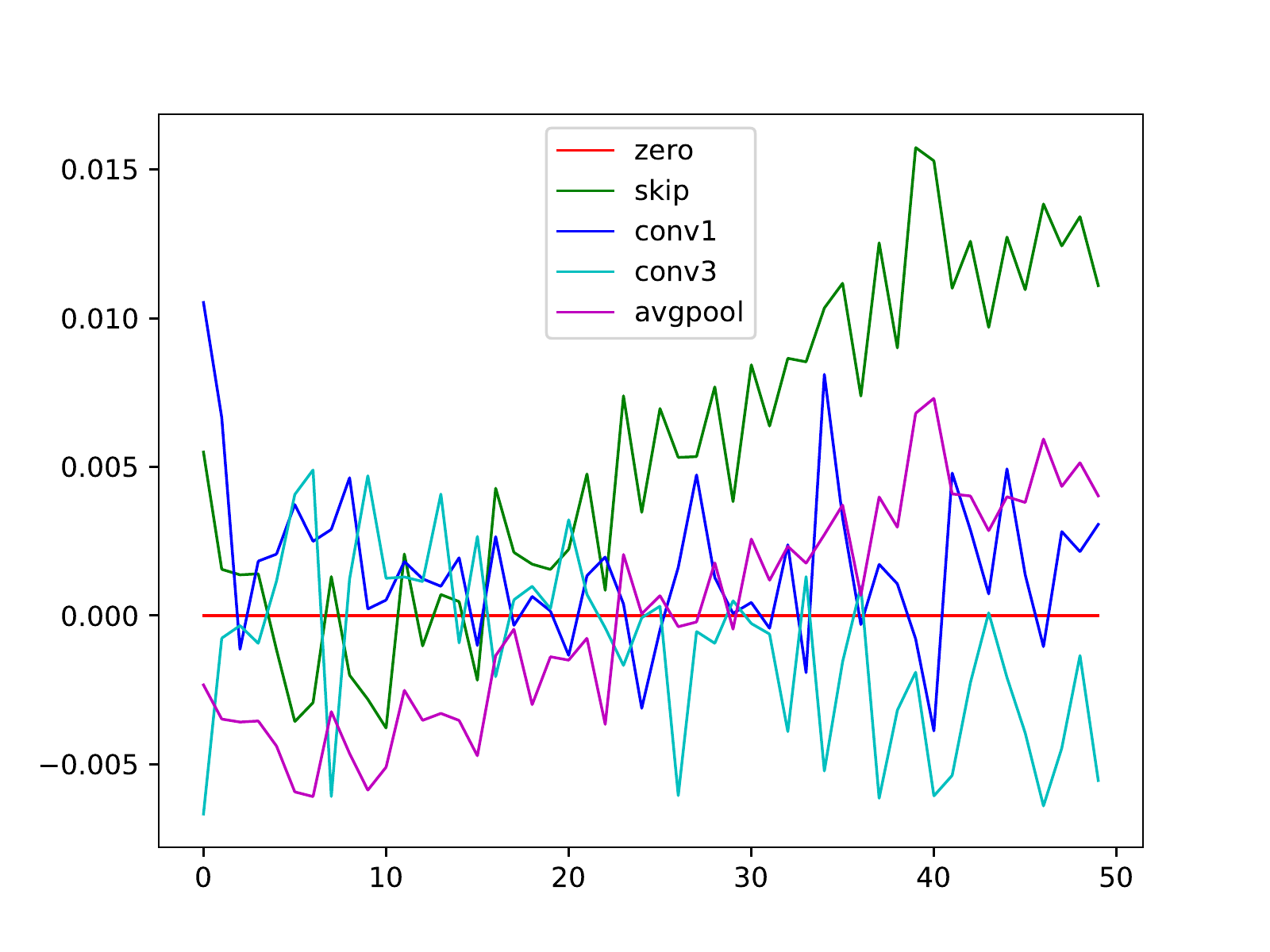}
 \caption{edge.2$\leftarrow$0}
\end{subfigure}
\hfill
 \begin{subfigure}[b]{0.3\linewidth}
 \centering
\includegraphics[width=\textwidth]{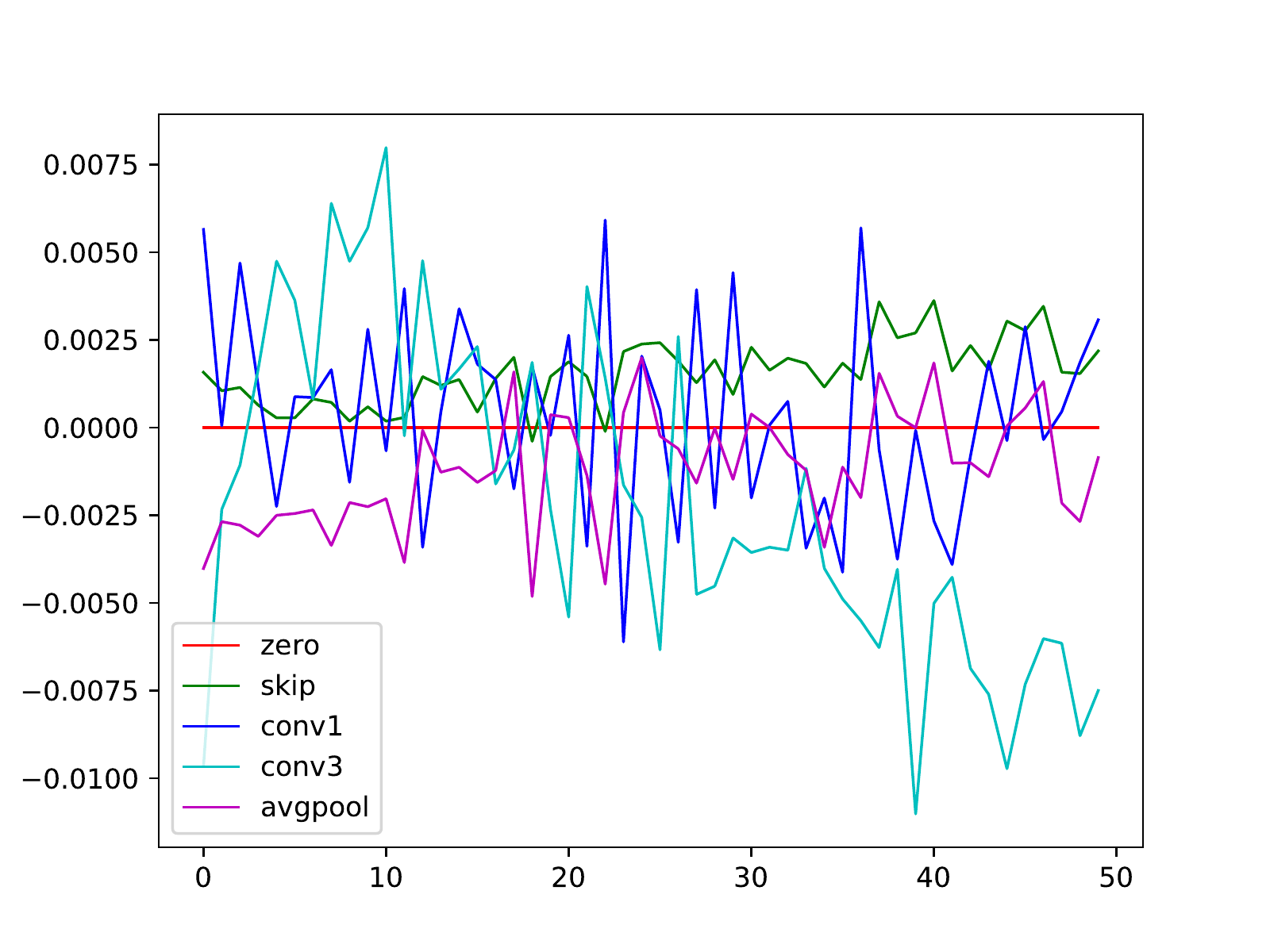}
 \caption{edge.2$\leftarrow$1}
\end{subfigure}
\hfill
\quad
\begin{subfigure}[b]{0.3\linewidth}
 \centering
\includegraphics[width=\textwidth]{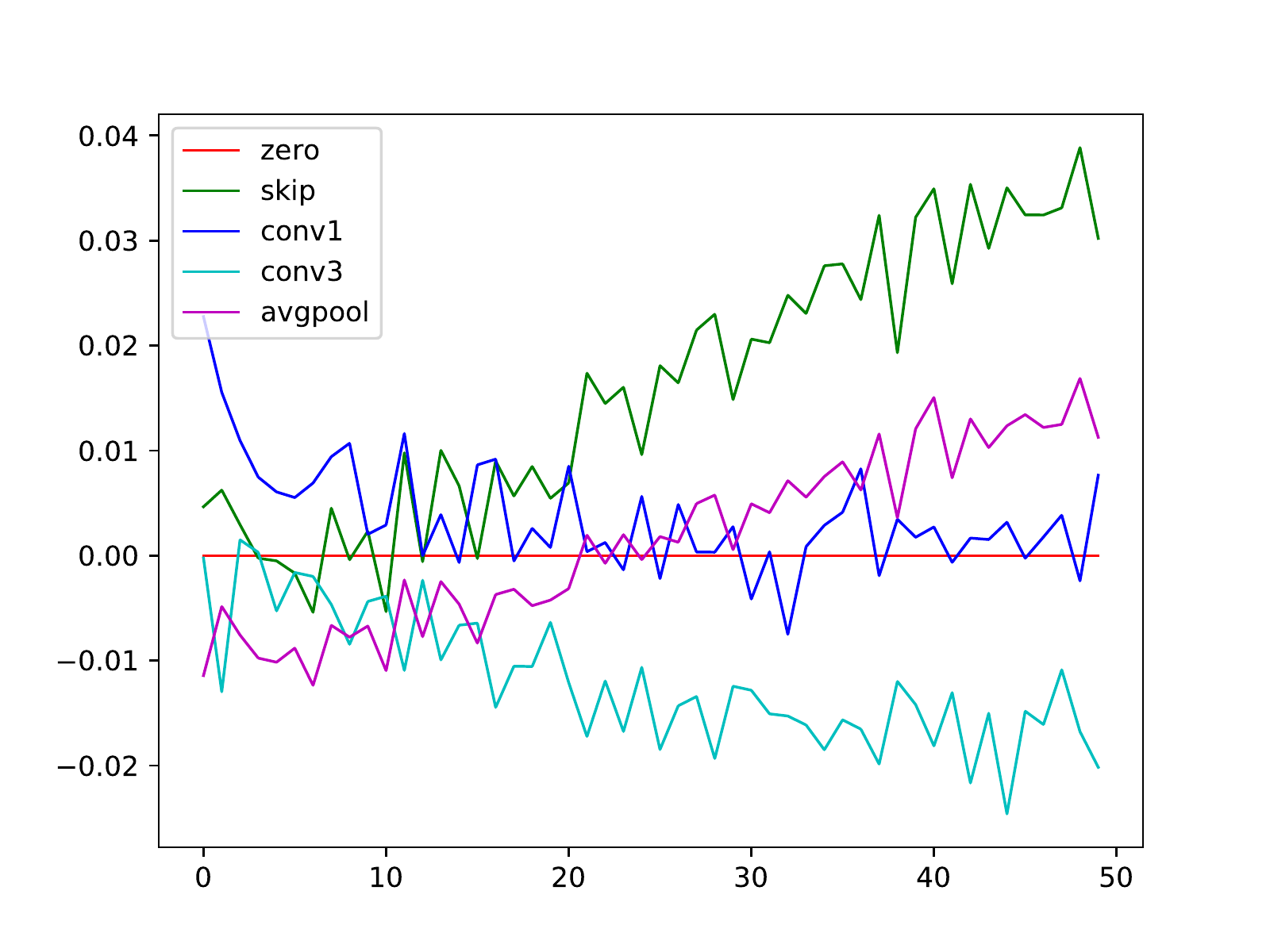}
 \caption{edge.3$\leftarrow$0}
\end{subfigure}
\hfill
 \begin{subfigure}[b]{0.3\linewidth}
 \centering
\includegraphics[width=\textwidth]{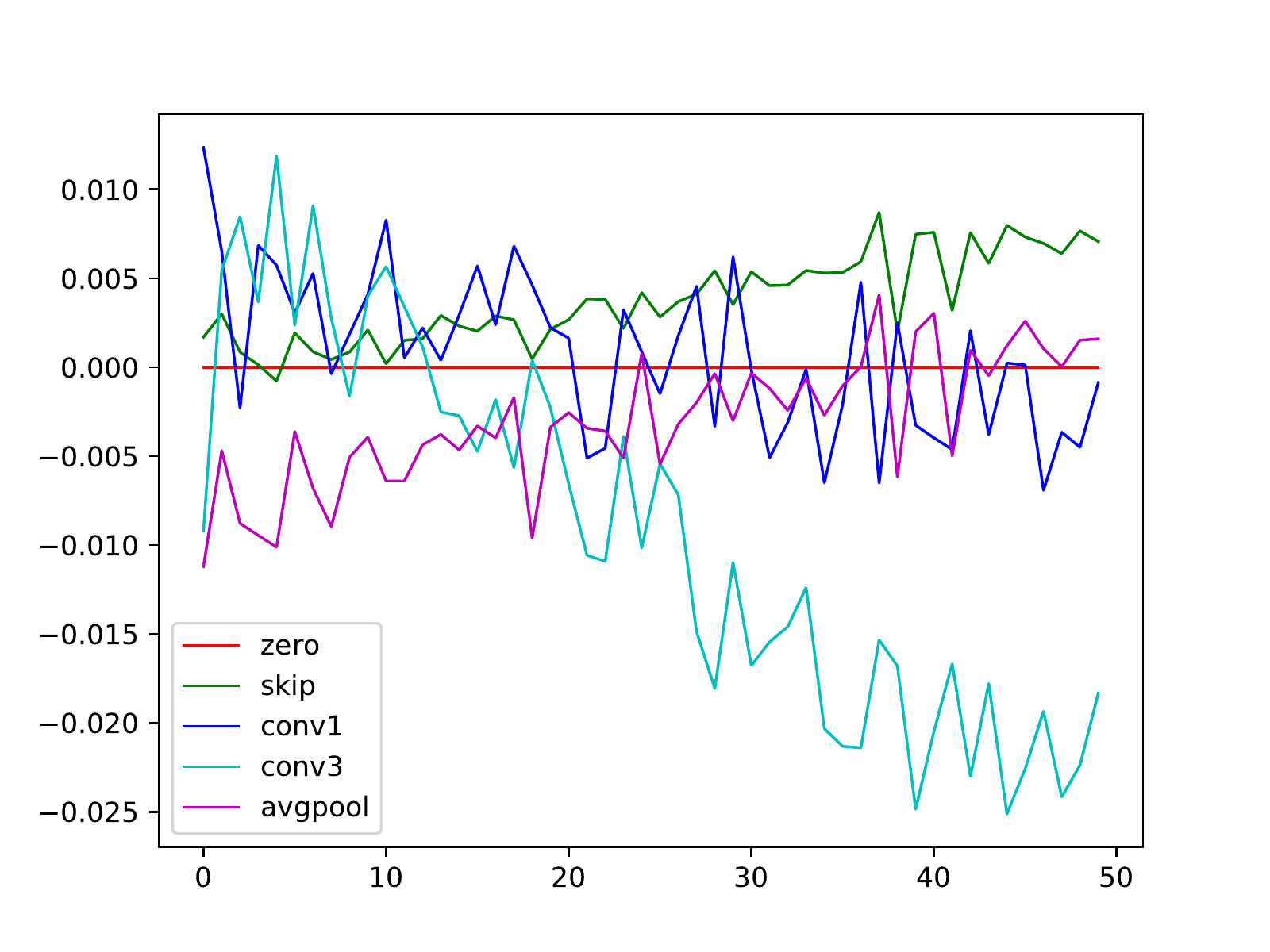}
 \caption{edge.3$\leftarrow$1}
\end{subfigure}
\hfill
 \begin{subfigure}[b]{0.3\linewidth}
 \centering
\includegraphics[width=\textwidth]{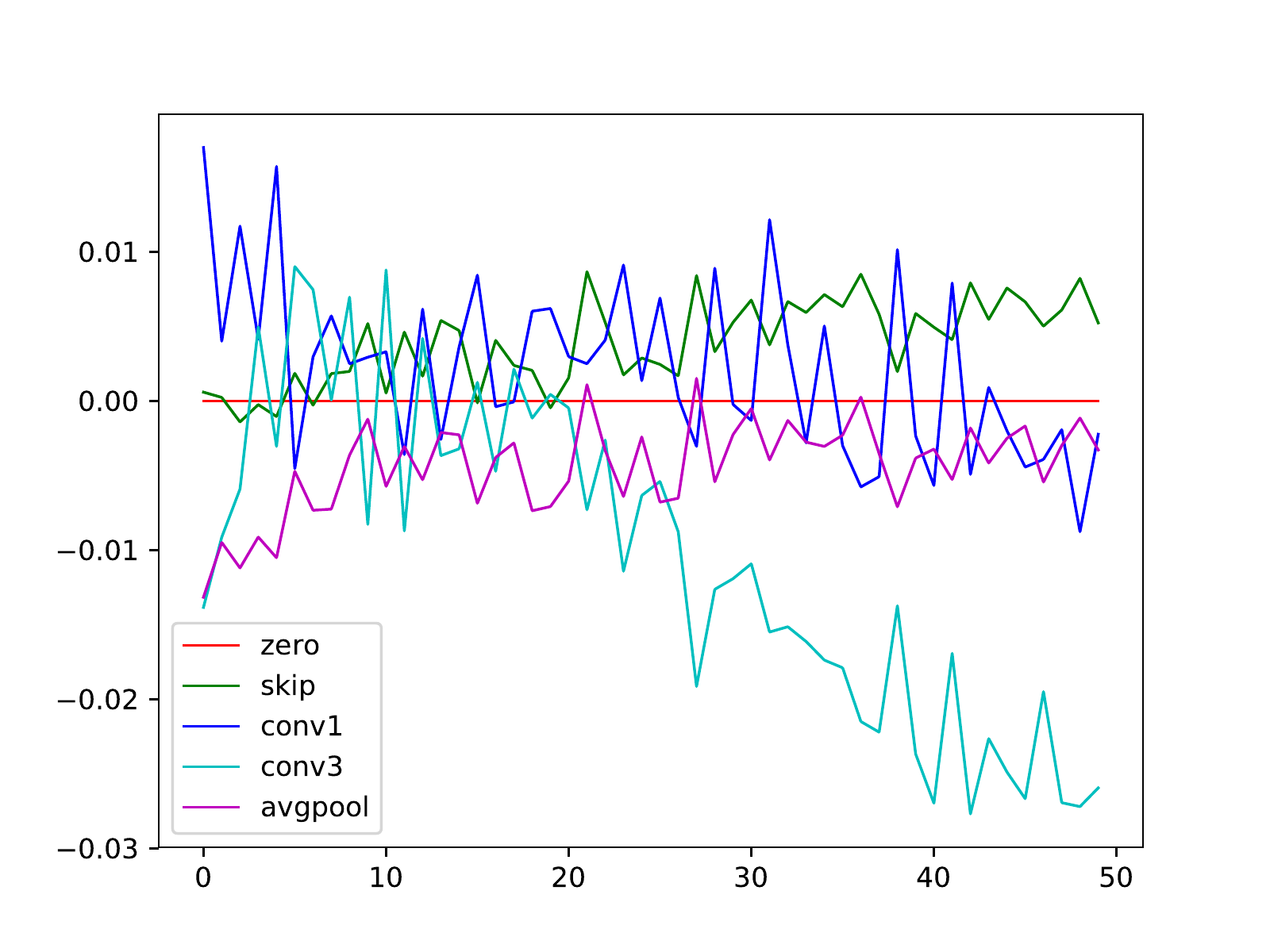}
 \caption{edge.3$\leftarrow$2}
\end{subfigure}
\hfill
\caption{Single-DARTS, the 16th cell.}
\end{figure*}

\section{Visualization of architectures}
\begin{figure*}[h]
 \begin{subfigure}[b]{0.49\linewidth}
  \parbox[][2.2cm][c]{\linewidth}{
 \centering
\includegraphics[width=\textwidth]{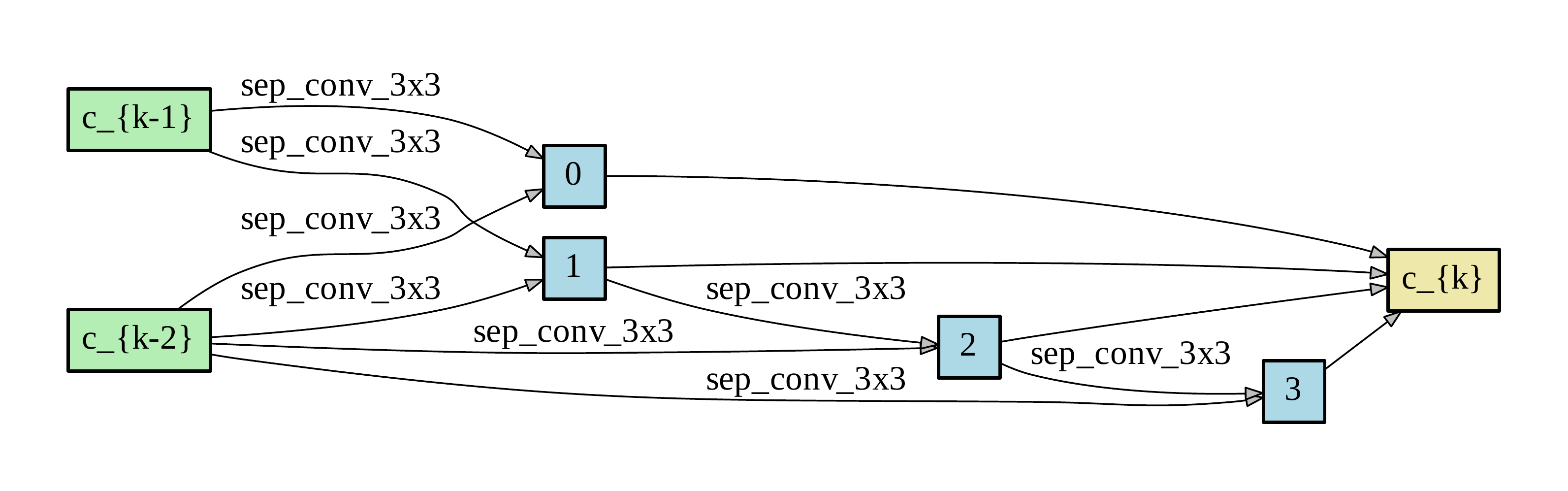}
 }
 \caption{normal}
\end{subfigure}
\hfill
 \begin{subfigure}[b]{0.49\linewidth}
 \centering
\includegraphics[width=\textwidth]{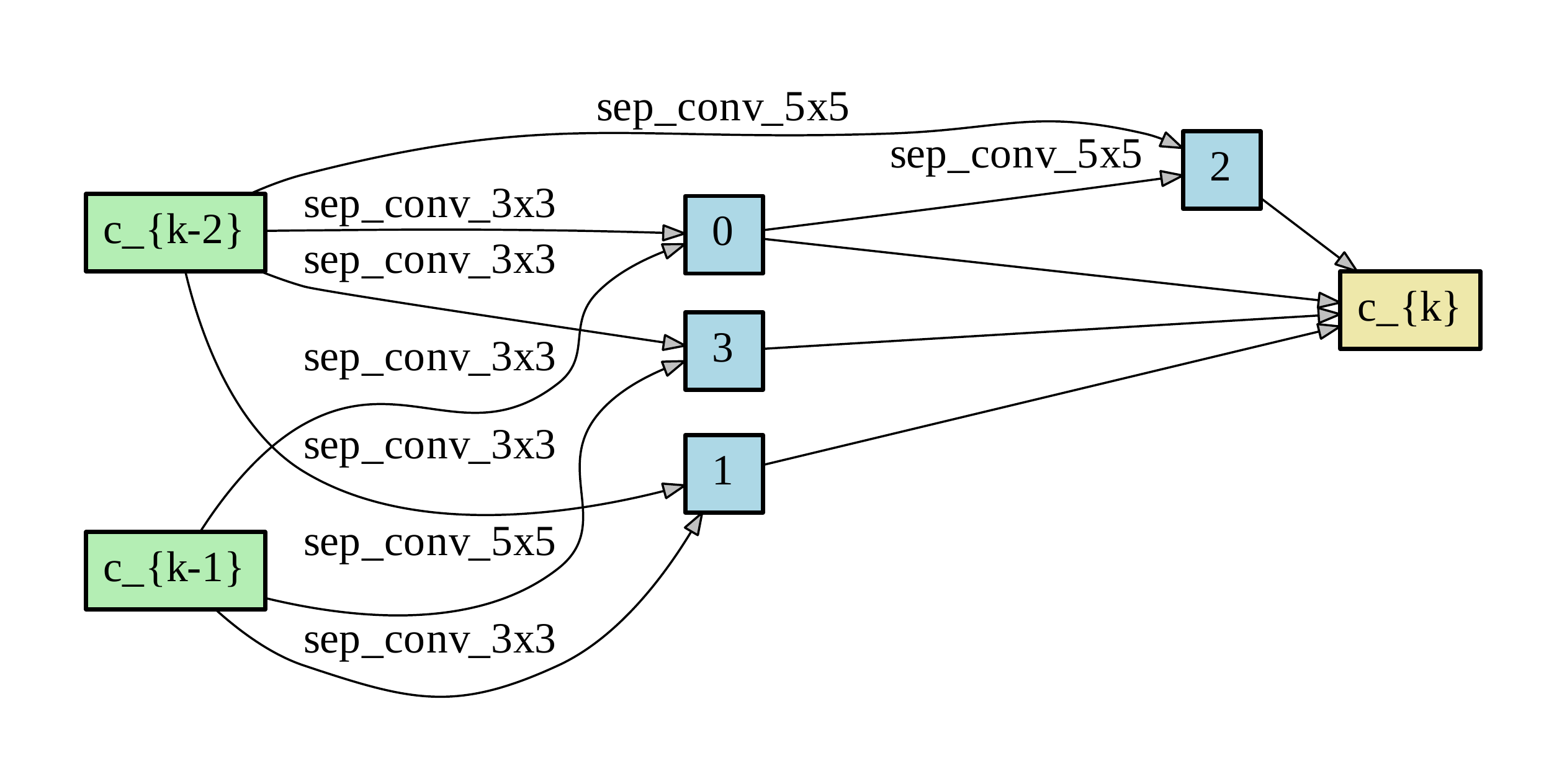}
 \caption{reduction}
\end{subfigure}
\hfill
\quad
\caption{activation=softmax, data=full, seed=0}
\end{figure*}

\begin{figure*}[h]
 \begin{subfigure}[b]{0.49\linewidth}
 \centering
\includegraphics[width=\textwidth]{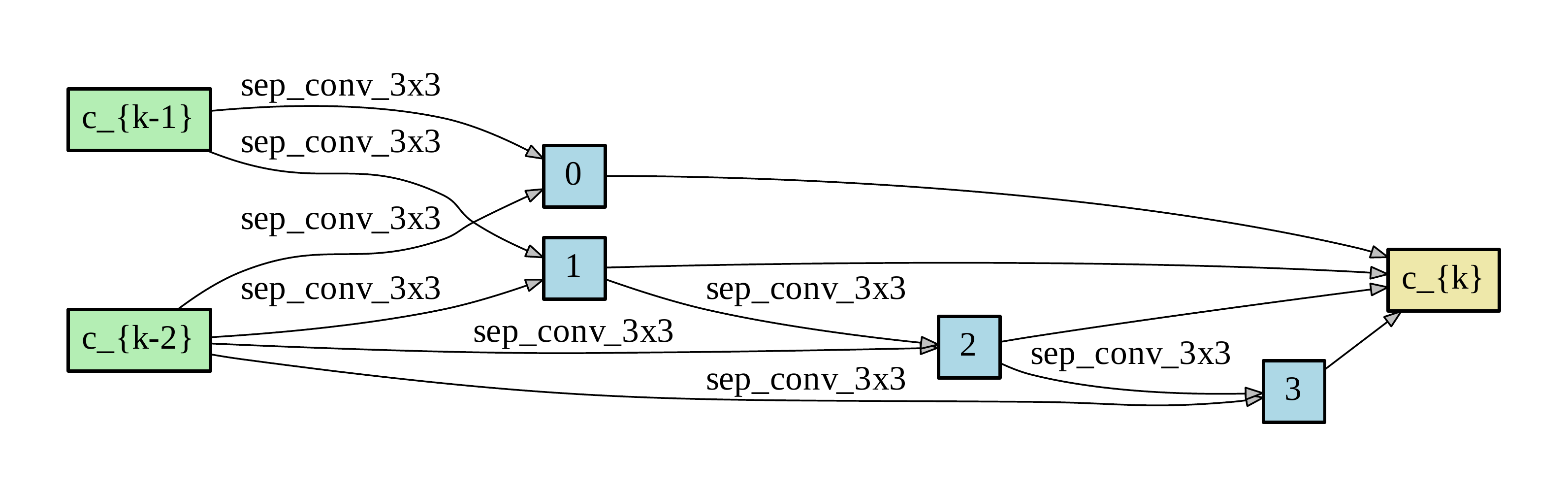}
 \caption{normal}
\end{subfigure}
\hfill
 \begin{subfigure}[b]{0.49\linewidth}
 \centering
\includegraphics[width=\textwidth]{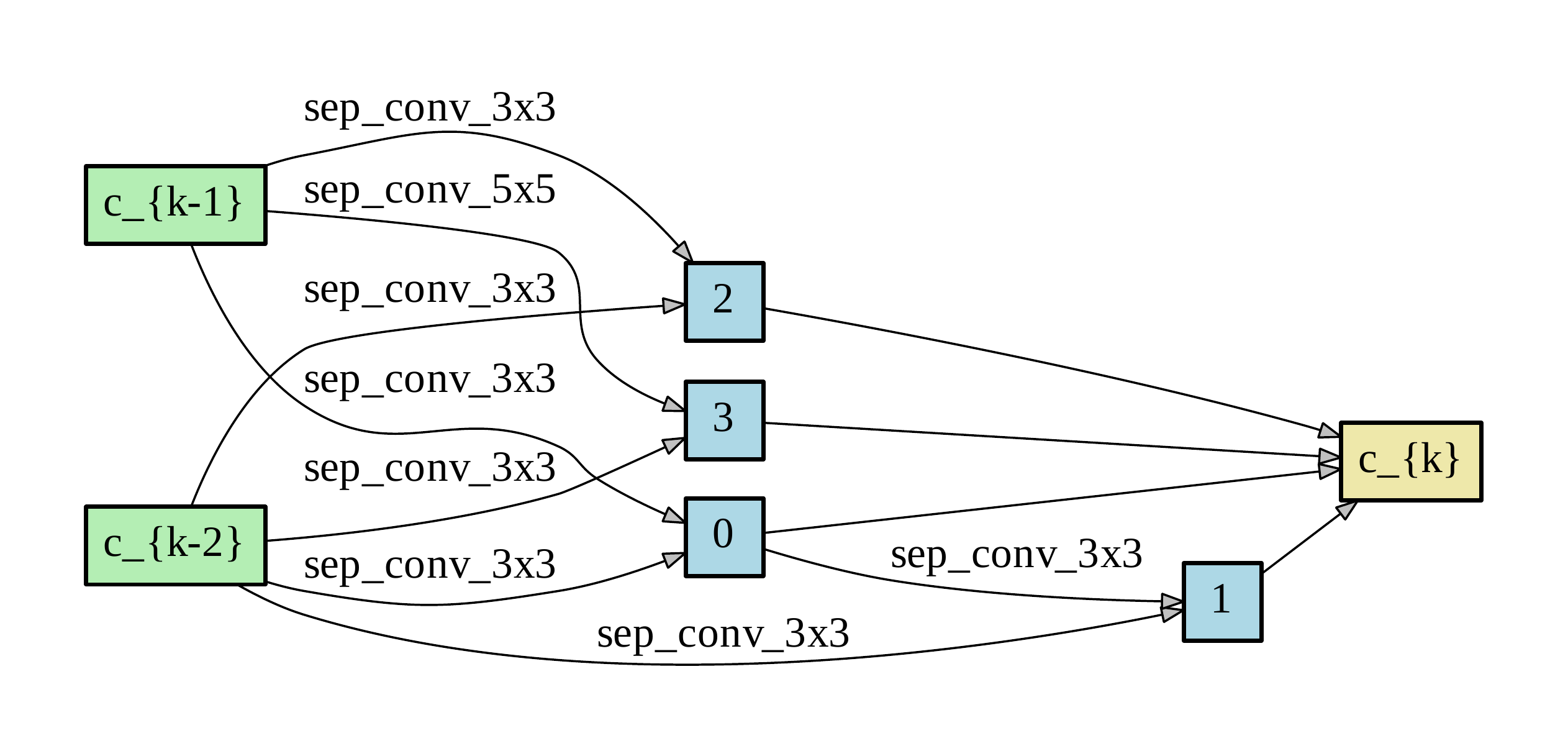}
 \caption{reduction}
\end{subfigure}
\hfill
\quad
\caption{activation=softmax, data=full, seed=1}
\end{figure*}

\begin{figure*}[h]
 \begin{subfigure}[b]{0.49\linewidth}
 \parbox[][2.0cm][c]{\linewidth}{
 \centering
\includegraphics[width=\textwidth]{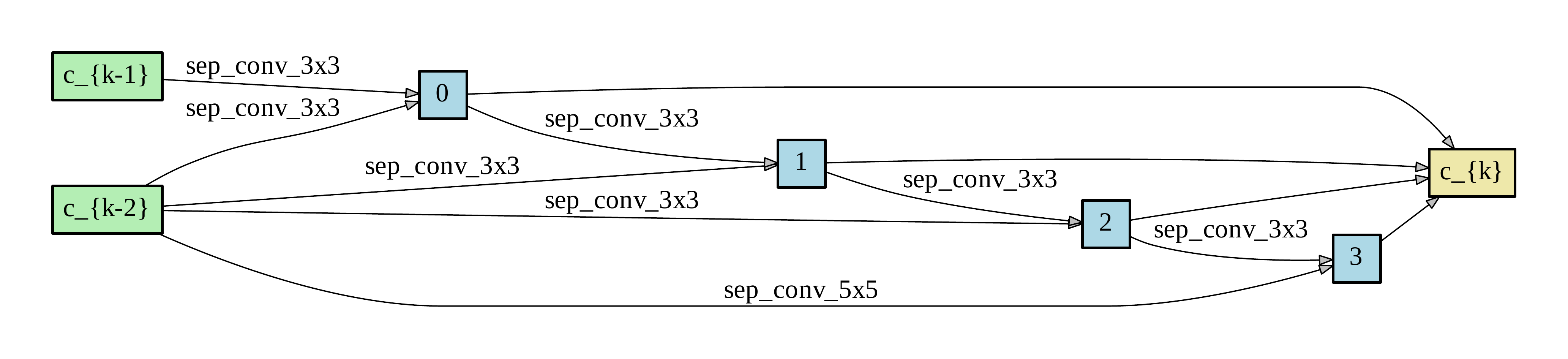}
 }
 \caption{normal}
\end{subfigure}
\hfill
 \begin{subfigure}[b]{0.49\linewidth}
 \centering
\includegraphics[width=\textwidth]{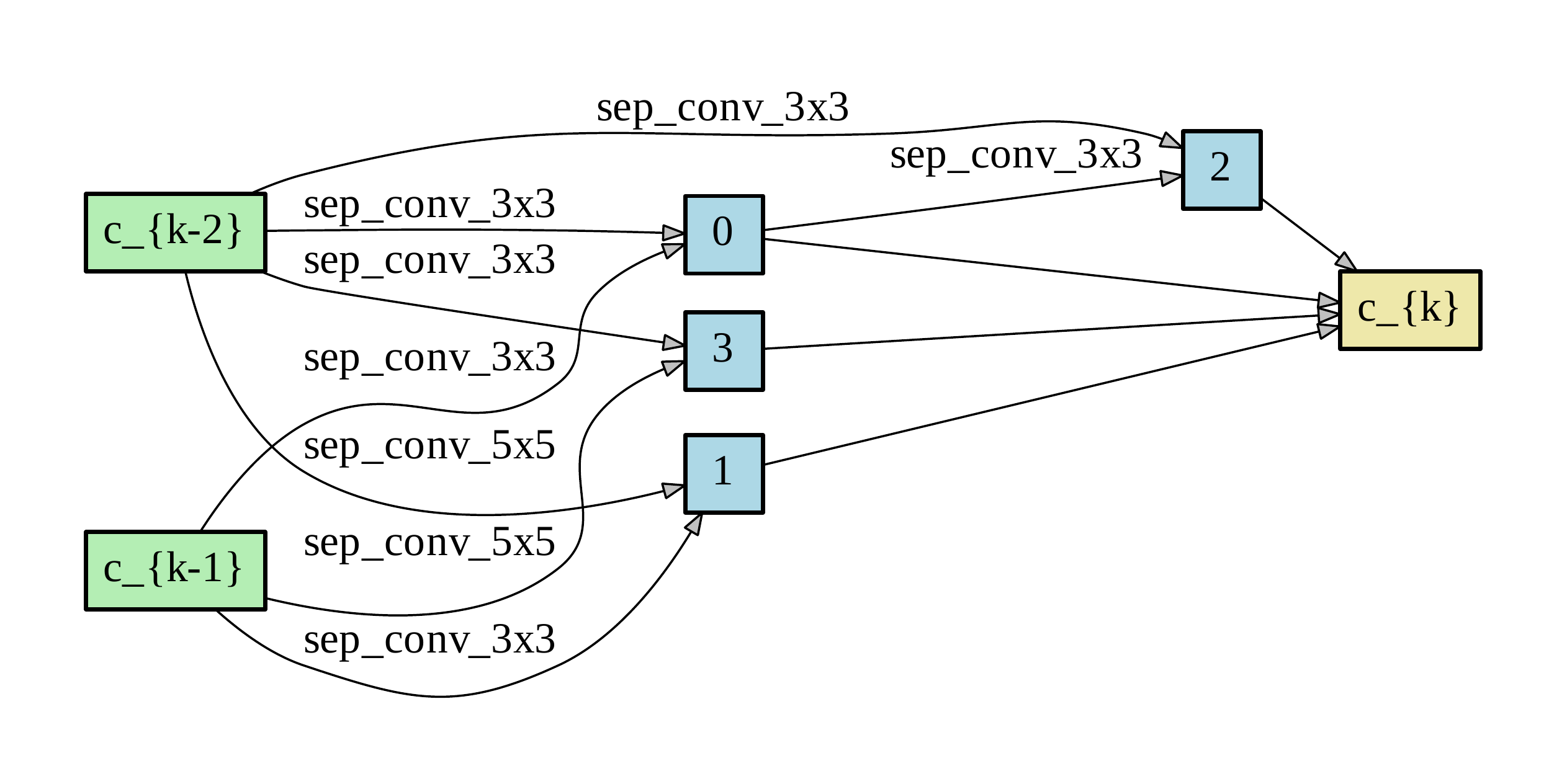}
 \caption{reduction}
\end{subfigure}
\hfill
\quad
\caption{activation=softmax, data=full, seed=2}
\end{figure*}

\begin{figure*}[h]
\centering
 \begin{subfigure}[b]{0.49\linewidth}
 \parbox[][3.5cm][c]{\linewidth}{
 \centering
\includegraphics[width=\textwidth]{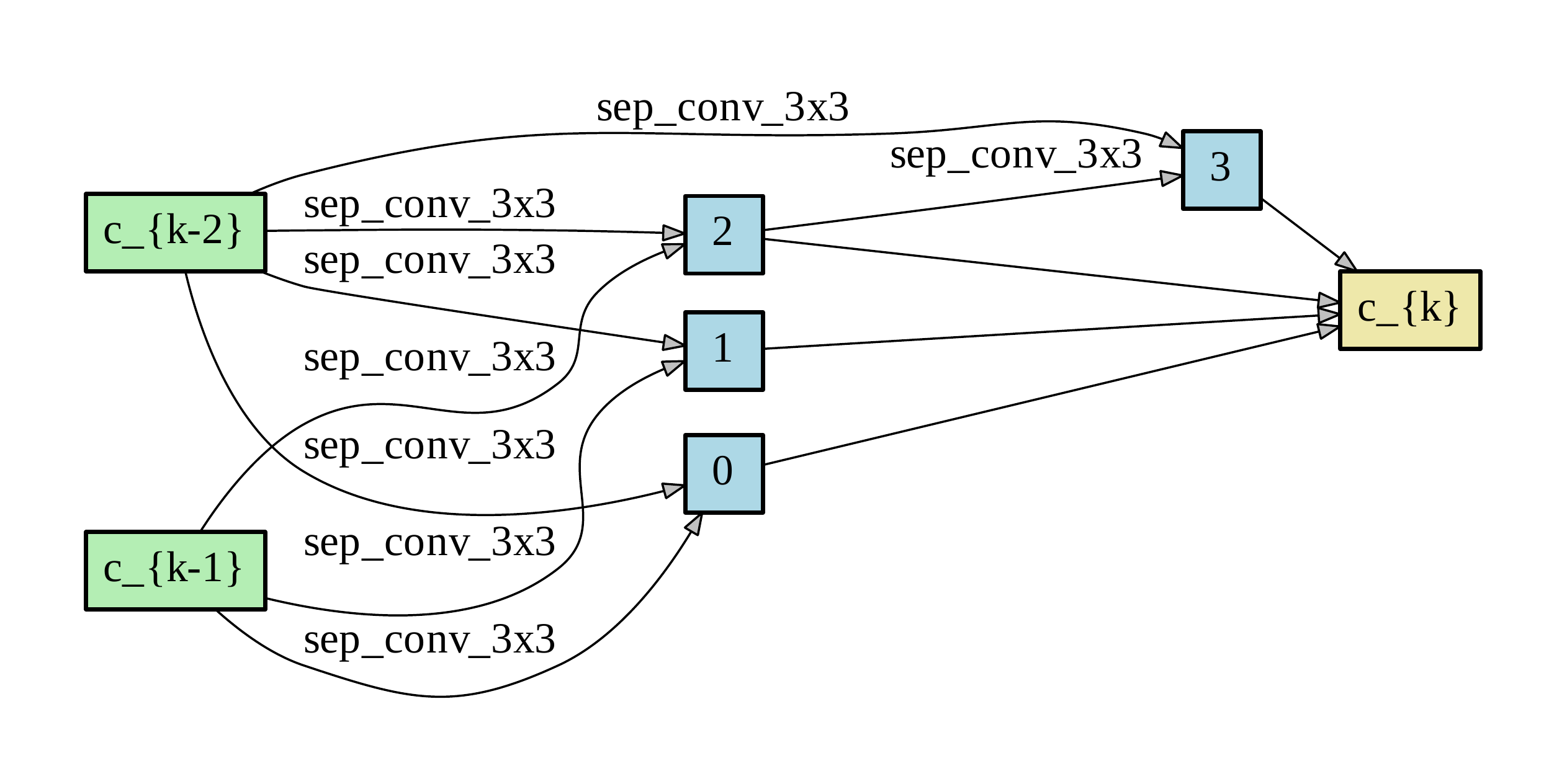}
}
 \caption{normal cell}
\end{subfigure}
\hfill
 \begin{subfigure}[b]{0.3\linewidth}
 \centering
\includegraphics[width=\textwidth]{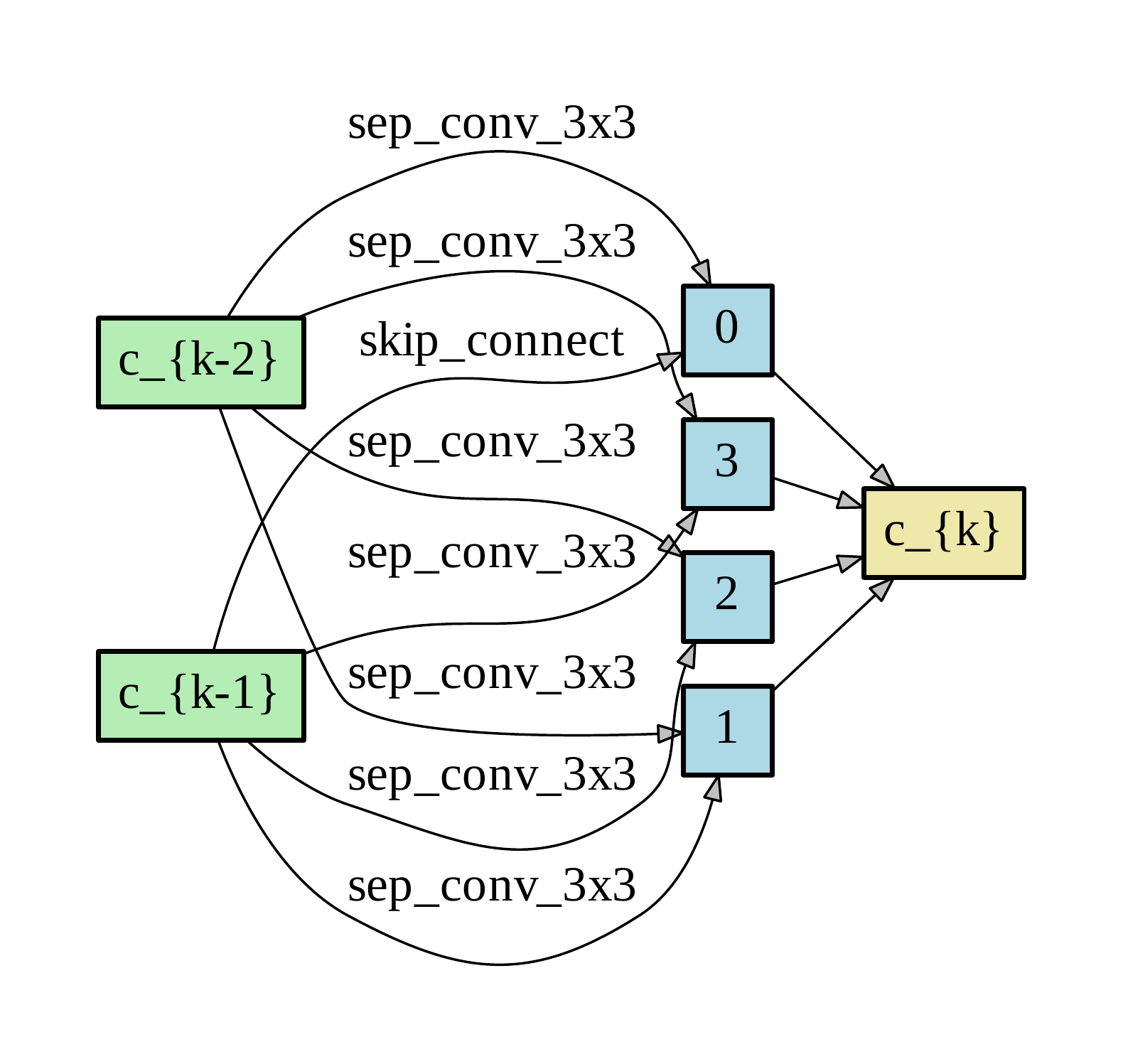}
 \caption{reduction cell}
\end{subfigure}
\hfill
\quad
\caption{activation=sigmoid*, data=full, seed=0}
\end{figure*}

\begin{figure*}[h]
 \begin{subfigure}[b]{0.48\linewidth}
 \parbox[][3.5cm][c]{\linewidth}{
 \centering
\includegraphics[width=\textwidth]{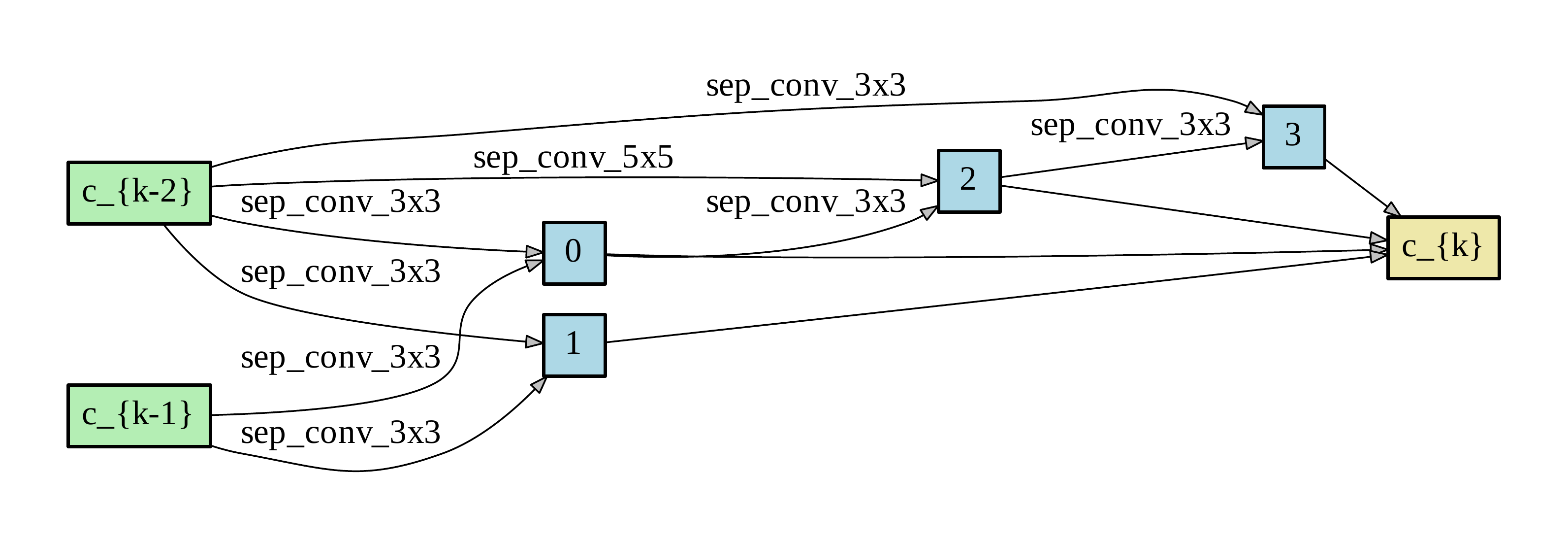}
 }
 \caption{normal}
\end{subfigure}
\hfill
 \begin{subfigure}[b]{0.3\linewidth}
 \centering
\includegraphics[width=\textwidth]{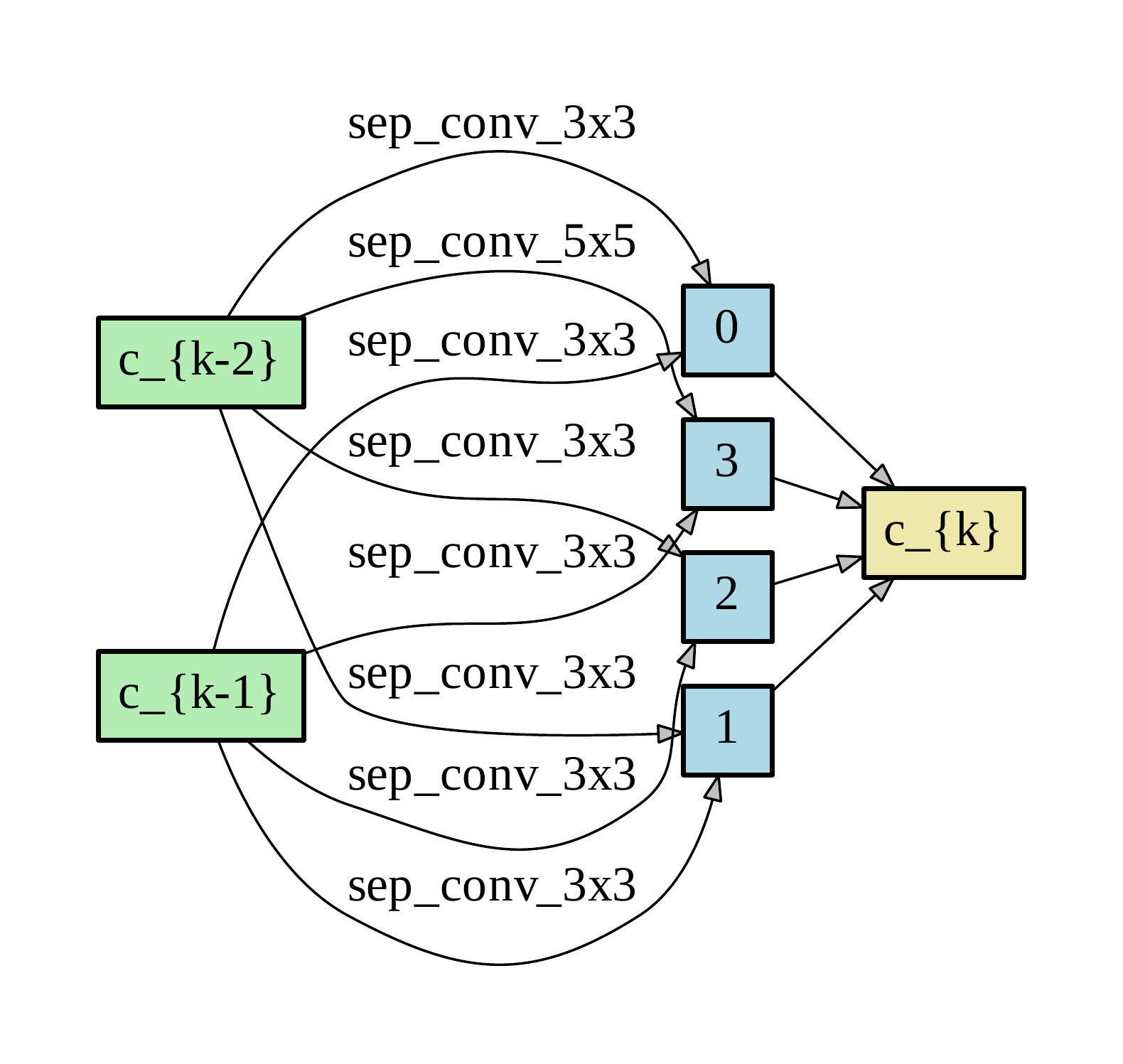}
 \caption{reduction}
\end{subfigure}
\hfill
\quad
\caption{activation=sigmoid*, data=full, seed=1}
\end{figure*}

\begin{figure*}[t]
 \begin{subfigure}[b]{0.48\linewidth}
 \parbox[][3.5cm][c]{\linewidth}{
 \centering
\includegraphics[width=\textwidth]{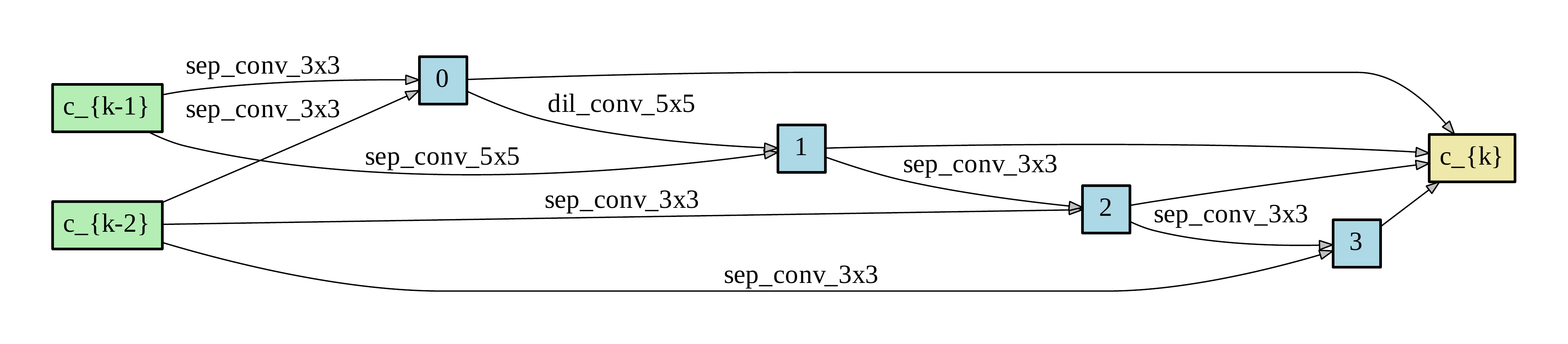}
 }
 \caption{normal}
\end{subfigure}
\hfill
 \begin{subfigure}[b]{0.3\linewidth}
 \centering
\includegraphics[width=\textwidth]{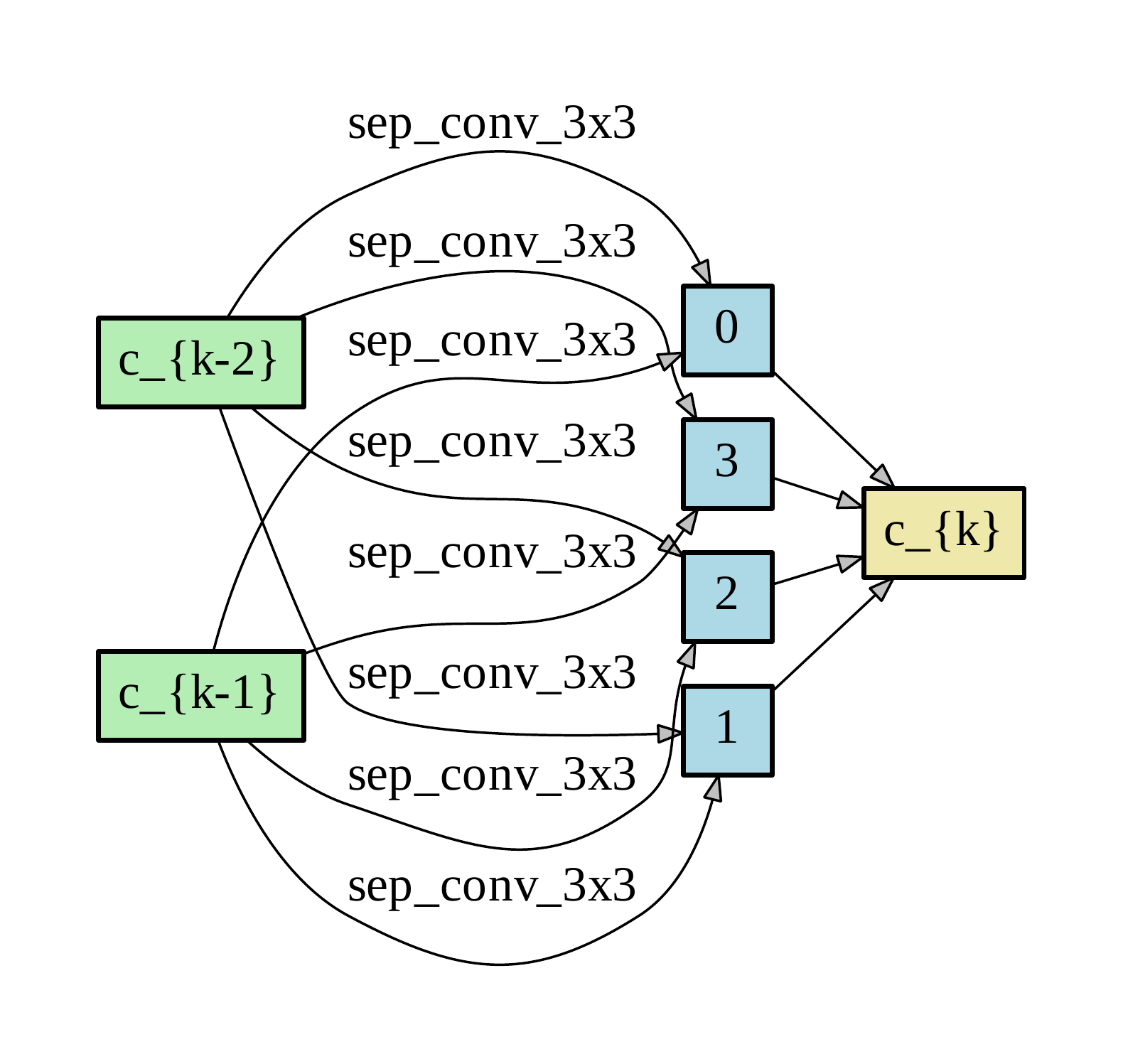}
 \caption{reduction}
\end{subfigure}
\hfill
\quad
\caption{activation=sigmoid*, data=full, seed=2}
\end{figure*}
\end{document}